\theoremstyle{plain}
\newtheorem{thm}{Theorem}
\newtheorem{lemma}{Lemma}
\newtheorem{defs}{Definition}
\newtheorem{asmp}{Assumption}
\newtheorem{cor}{Corollary}
\def\1{{\bf 1}}
\def\0{{\bf 0}  }
\newcommand{\Mean}{{\mathbb{E}}}
\newcommand{\Var}{{\mbox{Var}}}
\newcommand{\Cov}{{\mbox{Cov}}}
\newcommand{\prob}{{\mathbb{P}}}
\newcommand{\ATE}{\textrm{ATE}}
\icmltitlerunning{Unraveling the Interplay between Carryover Effects and Reward Autocorrelations in Switchback Experiments}
\begin{document}

\twocolumn[{
\icmltitle{Unraveling the Interplay between Carryover Effects and Reward Autocorrelations in Switchback Experiments}

\icmlsetsymbol{equal}{*}

\begin{icmlauthorlist}
\icmlauthor{Qianglin Wen}{equal,yunnanU}
\icmlauthor{Chengchun Shi}{equal,Lse}
\icmlauthor{Ying Yang}{camfdU}
\icmlauthor{Niansheng Tang}{yunnanU}
\icmlauthor{Hongtu Zhu}{unc}
\end{icmlauthorlist}

\icmlaffiliation{Lse}{Department of Statistics, London School of Economics and Political Science}
\icmlaffiliation{yunnanU}{Yunnan Key Laboratory of Statistical Modeling and Data Analysis, Yunnan University}
\icmlaffiliation{camfdU}{Center for Applied Mathematics, Shanghai Key Laboratory for Contemporary Applied Mathematics, Fudan University}
\icmlaffiliation{unc}{Department of Biostatistics, The University of North Carolina at Chapel Hill}
\icmlcorrespondingauthor{Hongtu Zhu}{htzhu@email.unc.edu}

\icmlkeywords{A/B testing; policy evaluation; reinforcement learning; switchback designs}
\vskip 0.2in
}]
\begin{abstract}
A/B testing has become the gold standard for policy evaluation in modern technological industries. Motivated by the widespread use of switchback experiments in A/B testing, this paper conducts a comprehensive comparative analysis of various switchback designs in Markovian environments. 
Unlike many existing works which derive the optimal design based on specific and relatively simple estimators, our analysis covers a range of state-of-the-art estimators developed in the reinforcement learning (RL) literature. It reveals that the effectiveness of different switchback designs depends crucially on (i) the size of the carryover effect and (ii) the auto-correlations among reward errors over time. Meanwhile, these findings are estimator-agnostic, i.e., they apply to most RL estimators.  Based on these insights, we provide a workflow to offer guidelines for practitioners on designing switchback experiments in A/B testing.
\end{abstract}
\printAffiliationsAndNotice{\icmlEqualContribution}
\section{Introduction}\label{sec::intro}
\textit{\textbf{Motivation.}}  Policy evaluation has become increasingly important in applications such as economics \citep{athey2017state}, medicine \citep{luedtke2016statistical}, environmental science \citep{reich2021review} and epidemiology \citep{hudgens2008toward}. In the technology sector, companies such as Google, Amazon, Netflix, and Microsoft extensively use A/B testing to measure and improve the effectiveness of new products or strategies against established ones \citep[e.g.,][]{johari2017peeking, waudby2022anytime}. 
For instance, ridesharing platforms like Uber, Lyft and Didi continuously refine their policies for order dispatching and subsidies to optimize key metrics such as supply-demand balance, driver earnings, response rates, and order completion rates \citep{qin2025reinforcement}. A/B testing is a common tool used by these companies for policy evaluation and is vital for identifying the most effective strategies to enhance the efficiency and convenience of the transportation system \citep{Xu2018,Tang2019,zhou2021graph}.

\textit{\textbf{Challenges.}} In numerous applications, treatments are assigned sequentially over time \citep{robins1986new, bojinov2019time}, posing unique challenges for A/B testing:
\begin{enumerate}[leftmargin=*]
\item A primary challenge is the \textbf{carryover effect} -- the effect of previous treatments on future outcomes. 
Such effects are ubiquitous in many applications. For instance, in ridesharing, the implementation of a specific order dispatch policy can change the spatial distribution of drivers in a city, impacting subsequent outcomes \citep[see][Figure 2]{li2023evaluating}. These carryover effects substantially challenge A/B testing. Standard solutions like the two-sample $t$-test often fail to capture these effects, frequently resulting in insignificant $p$-values \citep{shi2023dynamic}. 
\item Another challenge is the \textbf{limited sample size}, coupled with generally \textbf{weak treatment effects}, making it extremely difficult to determine the most effective policy. This issue is particularly prevalent in online experiments in ridesharing, which seldomly last more than two weeks and typically exhibit effect sizes ranging from $0.5\%$ to $2\%$ \citep{Xu2018,Tang2019}.
\end{enumerate}
\textit{\textbf{Contributions. }}This paper conducts a quantitative analysis to understand the effects of various switchback designs on the precision of their resulting policy value estimators. Switchback designs alternate between a baseline and a new policy at fixed intervals. Each policy is implemented for a specified duration before transitioning to the other. 
When the duration of each policy extends to a full day, the design becomes an ``alternating-day'' (AD) design, involving daily policy switches. 
These designs are increasingly utilized in large-scale ridesharing platforms \citep{Xiong2023,qin2025reinforcement}. \citet{luo2022policy} has empirically demonstrated that more frequent policy alternations can reduce mean squared error (MSE) in estimating the average treatment effect (ATE). However, the mechanisms driving the improvement in estimation accuracy 
lack thorough exploration. 
Our study fills this gap by offering a comprehensive comparative analysis of switchback experiments within a reinforcement learning \citep[RL,][]{sutton2018reinforcement} framework, where the experimental data follows a Markov decision process \citep[MDP,][]{puterman2014markov} model. 

Importantly, our analysis unravels the interplay between carryover effects and reward auto-correlations in determining the optimal switchback experiment. In particular, when the \textbf{carryover effect is weak}, we show that: 
\begin{enumerate}[leftmargin=*]
	\item[(i)] In scenarios with \textbf{positively correlated} reward errors, the precision of the ATE estimator tends to be improved with more frequent alternations between policies. This leads us to an interesting conclusion: the off-policy ATE estimator under switchback designs outperforms its on-policy counterpart under the AD design in terms of estimation efficiency. This conclusion remains valid even in the presence of some negatively correlated errors. The superior efficiency of the switchback design is attributed to the switchback design’s inherent capability to neutralize the influence of autocorrelated errors over time, leading to a more accurate estimator. This insight has not been systematically documented in existing literature, to our knowledge. We also remark that positively autocorrelated errors are commonly observed in practice, as demonstrated in Figure \ref{fig0}.
	\item[(ii)] When reward errors are \textbf{uncorrelated}, all designs become asymptotically equivalent in theory. Our numerical studies indicate that AD generally exhibits superior performance in finite samples.
	\item[(iii)] When the majority of errors are \textbf{negatively correlated}, AD becomes the most efficient.
\end{enumerate}
Additionally, with \textbf{a large carryover effect}, AD or switchback designs with less frequently switches work the best. 

These findings  apply to a range of policy value estimators developed in the RL literature, such as model-based estimators, least squares temporal difference (LSTD) estimators, and double reinforcement learning (DRL) estimators \citep[see][for a review]{uehara2022review}. While existing works also studied switchback designs (refer to the next section), they primarily focused on the use of simple importance sampling (IS) estimators for policy evaluation. According to our numerical studies, IS estimators suffer from much larger MSEs than ours (see Figure \ref{fig:compare_logmse_linear_nonlinear} for details). 

\section{Related Works}\label{sec:relatedwork} 
Our paper intersects with three related lines of research: A/B testing, off-policy evaluation and experimental designs.

\textit{\textbf{A/B testing.}} A/B testing has been widely adopted across tech companies \citep[see][for reviews of methodologies]{larsen2023statistical, quin2023b}. It relies on causal inference to estimate treatment effects, typically assuming ``no interference'' or the stable unit treatment value assumption \citep[SUTVA, see e.g.,][]{imbens2015causal}. However, as noted earlier, this assumption can be violated in temporally-dependent experiments, rendering most A/B testing methods unsuitable for switchback designs.

\begin{figure}[t]
	\centering
	\vspace{-6pt} 
	\includegraphics[width=0.8\linewidth]{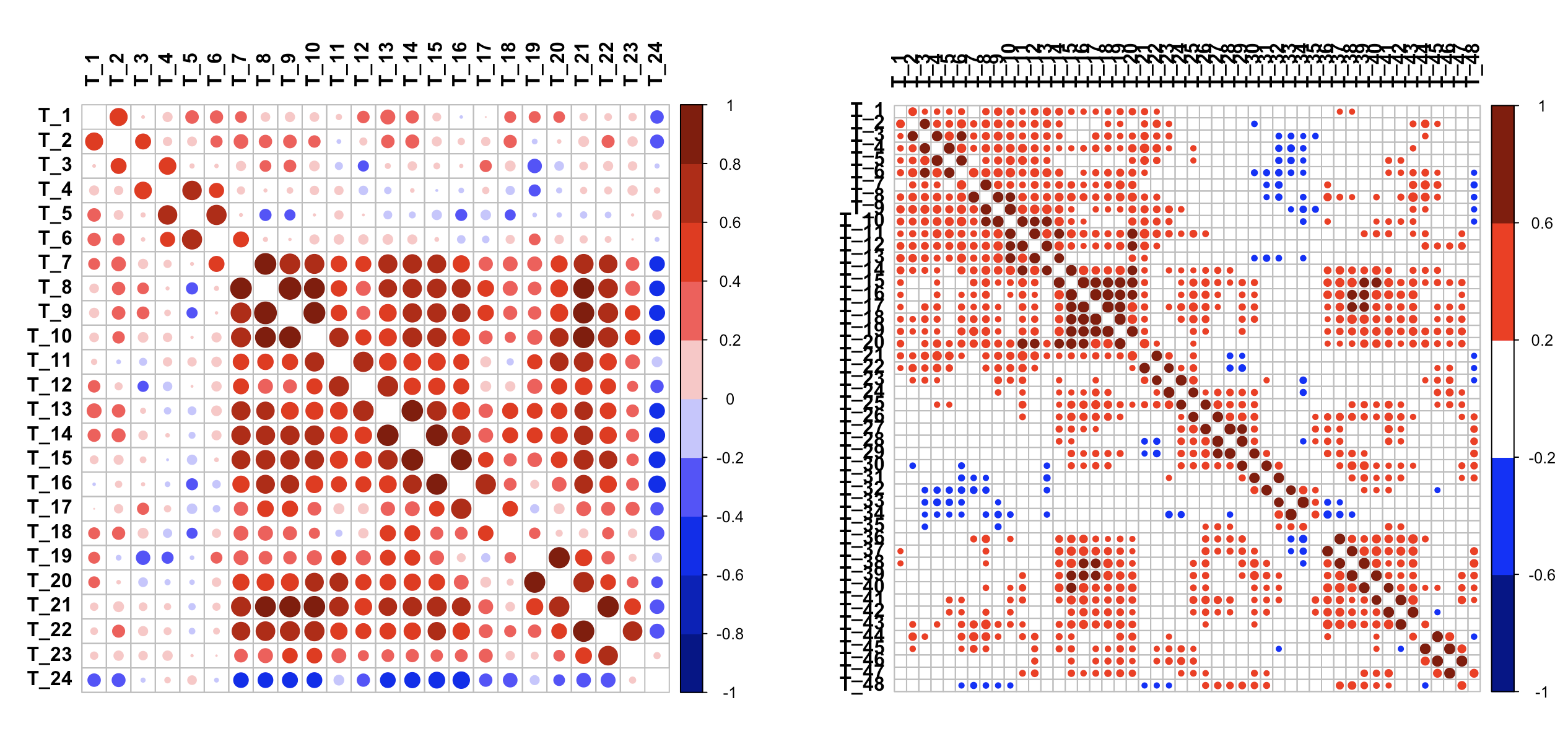} 
	\vspace{-6pt}
	\caption{\small The estimated correlation coefficients between pairs of fitted reward residuals, based on two datasets provided by a ridesharing company. Most residual pairs are non-negatively correlated, with a large proportion exhibiting positive correlation. The diagonal components have been omitted to enhance clarity.}
	\label{fig0}
\end{figure}

Recently, there is a growing interest in developing A/B testing and/or causal inference solutions in the presence of interference effects. Depending on the type of interference, these papers can be grouped into four categories:
\begin{itemize}[leftmargin=*]
    \item The first category studies spatial or network interference effects where the policy implemented in one location can affect outcomes at other locations  \citep[see][for some recent proposals]{pollmann2020causal,tchetgen2021auto,leung2022graph,bhattacharya2024causal,dai2024causal,jia2024multi,causal_mess_network,zhang2024spatial}.
    \item The second category focuses on temporal carryover effects and is the most relevant to our proposal \citep[see e.g.,][]{robins1986new,sobel_causal_2014,boruvka2018assessing,liang2025randomization,viviano2023synthetic}. Notably, there is a line of papers that proposed to employ the RL framework that models the observed data via MDPs to capture the carryover effects \citep{farias2022markovian,farias2023correcting,shi2023dynamic}.  
    \item The third category handles both interference effects over time and space \citep[see e.g.,][]{jia2023clustered,shi2023multiagent}.
    \item Finally, the last category focuses on interference effects that appear in two-sided markets or recommender systems \citep{munro2021treatment,johari2022experimental,zhan2024estimating}.
\end{itemize}
However, the effectiveness of different experimental designs are less explored in these papers, which is our primary focus.

\textit{\textbf{Off-policy evaluation (OPE).}} OPE methods generally fall into two categories: model-based and model-free approaches. Model-based methods estimate an MDP model from offline data and compute policy value based on the estimated model \citep{gottesman2019combining,yin2020asymptotically,wang2024off,yu2024two}. On the other hand, model-free methods can be further classified into three subtypes:
(i) Value-based approaches, such as LSTD, focus on estimating the policy value via an estimated Q-function
or value function \citep{bradtke1996linear,sutton2008convergent,luckett2019,hao2021bootstrapping,liao2021ope_jasa,chen2022well,shi2022statistical,li2023sharp,liu2023online,cao2024orthogonalized,bian2025off}; (ii) IS-type methods adjust rewards by the density ratio between target and behavior policies \citep{thomas2015,liu2018breaking,nachum2019dualdice,xie2019towards,dai2020coindice,wang2021projected,Hu2023off,thams2023statistical,zhou2025IS}; (iii) Doubly-robust methods such as DRL combine value-based and IS methods for more robust policy evaluation \citep{zhang2013robust,jiang2016doubly,thomas2016data,bibaut2019more,uehara2020minimax,kallus2020double,kallus2022efficiently,liao2022batch,xu2023instrumental,shi2024off}. 

Despite the popularity of developing advanced OPE estimators, the strategies for generating
offline data to maximize their estimation efficiency have not been thoroughly investigated.
Existing works either focus on a contextual bandit setting without carryover effects \citep{wan2022safe} or  do not study switchback designs \citep{hanna2017data,mukherjee2022revar,zhong2022robust,li2023optimal}. Our paper fills this gap by investigating how switchback designs affect the efficiency of various OPE estimators, including model-based, value-based, and doubly-robust estimators, providing a comprehensive analysis that enriches the OPE literature.

\textit{\textbf{Experimental design}}. There is a rich literature on experimental designs tailored for clinical trials, with a range of proposed optimal designs  \citep{begg1980treatment,wong2008optimum,jones2009d,atkinson2017optimum,rosenblum2020optimal} and sequential adaptive designs \citep{hu2009efficient,baldi2011covariate,atkinson2013randomised,hu2015unified,kato2024active} to guide treatment allocation strategies. However, these methods are developed under settings where data are identically and independently distributed and are thus not applicable to our settings in the presence of carryover effects. 

Recent developments have expanded the scope to accommodate spatial or network spillover effects \citep{ugander2013graph,li2019randomization,kong2021approximate,leung2022rate,ni_design_2023,viviano2023causal,yang2024spatially,zhang2024online,zhu2025causalgraphcut} and to address the complex interactions inherent in two-sided marketplaces  \citep{bajari2021multiple,li2022interference,bajari2023experimental}. Despite these advancements, a gap remains concerning designs that adequately account for temporal carryover effects in sequential decision making.

\citet{glynn2020adaptive}, \citet{hu2022switchback}, \citet{bojinov2023design}, \citet{basse2023minimax}, \citet{sun2024optimal} and \citet{Xiong2023} studied the design of temporally-dependent experiments. In particular, \citet{Xiong2023} made an important step forward for understanding the trade-offs among switchback designs by deriving a rigorous bias-variance decomposition of the ATE estimator and summarizing four key factors that determine the estimation error. However, their analysis is confined to simple IS estimators derived within a bandit framework, which can be severely biased with large carryover effects. Our empirical studies in Appendix \ref{sec:addresults} also confirm that IS-type estimators are prone to large MSEs. Moreover, \citet{Xiong2023}'s analysis did not adopt our MDP framework, which is commonly utilized for policy learning and evaluation in the motivating ridesharing application \citep{Xu2018,shi2023dynamic}. 

Finally, recent works in the machine learning literature have developed deep learning or RL algorithms to numerically compute optimal designs \citep{foster2021deep,blau2022optimizing,lim2022policy}.

\section{Preliminaries}\label{sec:model}
In this section, we describe the data, detail our model, introduce switchback designs and formulate our objective.

\textit{\textbf{Data. }} Suppose a technology company conducts an online experiment over $n$ days to evaluate the effectiveness of a new policy compared to a baseline policy. Each day is divided into $T$ non-overlapping intervals. For each day $i=1,\ldots, n$ and for each time interval $t$, let $S_{i,t}\in \mathbb{R}^d$ denote certain market features (e.g., the number of available drivers and pending ride requests in ridesharing) observed at the beginning of the interval. 
The policy in effect during each interval $t$ is represented by $A_{i,t}$, which, in the context of A/B testing, is a binary variable indicating one of two policies. Finally, $R_{i,t}\in \mathbb{R}$ denotes the immediate outcome or reward observed at the end of each interval $t$ (e.g., the total revenue at time $t$). We assume all trajectories $\{(S_{i,t}, A_{i,t}, R_{i,t}):1\le t\le T\}_{i=1}^n$ are i.i.d. instances of a stochastic process $\{(S_t,A_t,R_t):1\le t\le T\}$. 
That is, the data are independent across different days. This assumption is likely to hold in applications such as marketing auctions where each company’s budget resets at the end of the day, eliminating any carryover effects across days  \citep{basse2016randomization, liu2020trustworthy}, and in ridesharing where order volume typically wanes between 1 am and 5 am \citep[Figure 1]{luo2022policy}, making it plausible that each day's observations can be treated as independent realizations. Additionally, in online advertising, impression allocation often follows a daily schedule, reinforcing the assumption of independent data across days. 

\textit{\textbf{Model.}}
We model the experimental data by a finite MDP with autocorrelated errors, based on three assumptions:
\vspace{-3mm}
\begin{enumerate}[leftmargin=*]
	\item[(i)] First, we assume that the state satisfies a Markov assumption. Specifically, we require
\begin{equation*}
\mathbb{P}(S_{t+1}=s'|A_t,S_t,\{S_j,A_j,R_j\}_{j<t})=p_t(s'|A_t,S_t),
\end{equation*}
for any $s'$ and $t$. This assumption requires the future state to be conditionally independent of the past data history given the current state-action pair, and is consistent with a wide body of work in RL \citep{sutton2018reinforcement}. 
	\item[(ii)] Second, we assume the reward satisfies a conditional mean independence assumption: there exists a sequence of reward functions $\{r_t\}_t$ such that for any $a$ and $s$,
	\begin{equation*}
		\Mean (R_t|A_t=a,S_t=s,\{S_j,A_j\}_{j<t})=r_t(a,s).
	\end{equation*}
	Such an assumption is commonly imposed in the literature \citep{chernozhukov2022automatic,shi2022statistical,wang2021projected}.
	\item[(iii)] Third, the residual errors $e_t=R_t-r_t(A_t,S_t)$ can exhibit temporal correlation.  
	If the residuals are uncorrelated, the resulting data-generating process simplifies to a standard MDP. \vspace{-0.5em}
\end{enumerate}

\begin{figure}[t]
	\centering
	\includegraphics[width=0.4\textwidth,height=0.2\textwidth]{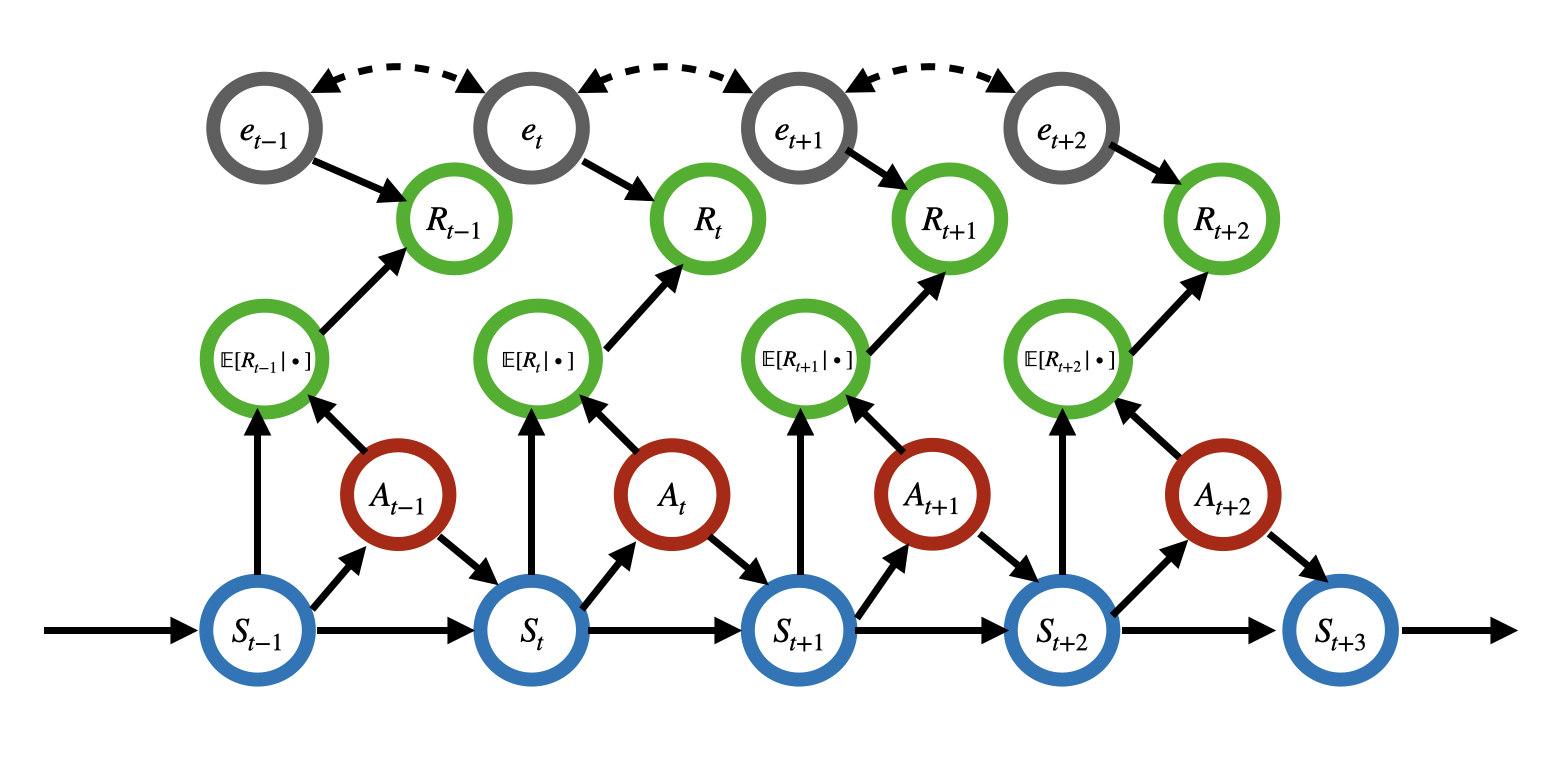}\vspace{-1em}
	\caption{{Visualization of our MDP with autocorrelated reward errors. The solid lines represent the causal relationships. The dash lines imply that the reward errors are potentially correlated.}}\label{fig1}
        \vspace{-3mm} 
\end{figure}
A graphical visualization of our model is given in Figure \ref{fig1}. Notably, while our methods rely on the MDP assumption, we also consider cases where this assumption may not hold; refer to Appendix \ref{sec:add_discussion} for details. 

Finally, we remark that both the reward and transition functions are explicitly indexed by $t$. This is essential to capture the time-dependent dynamics that  are often inherent in practical applications \citep{luo2022policy}.

\textit{\textbf{Designs. }}We introduce the \textit{switchback design} frequently utilized in practice. Under this design, the company alternates between the two policies, each for a fixed duration per day. Let 
$m\ge 1$ represent the time span for each switch. A smaller value of $m$ corresponds to more frequent switching between policies. To illustrate this design, consider the following examples: 
\vspace{-3mm}
\begin{itemize}[leftmargin=*]
		  \setlength{\itemsep}{0pt} 
		  \setlength{\parsep}{0pt}   
		  \setlength{\parskip}{0pt}  
	\item For $m=1$, the policies alternate at every time step, formally expressed as $A_t=1-A_{t+1}$ for any $t\ge 1$; 
	\item For a general $2\le m\le T/2$, the policy remains constant for $m$ time steps and then switches, which can be mathematically represented as $A_{mt-m+1}=A_{mt-m+2}=\ldots=A_{mt}=1-A_{mt+1}=\ldots=1-A_{mt+m}$; 
	\item For $m=T$, the same policy is applied throughout each day,  i.e., $A_1=A_2=\ldots=A_T$. 
\end{itemize}
\vspace{-3mm}
Furthermore, the initial policy alternates across days, which can be mathematically described as $A_{i,1}=1-A_{i+1,1}$ for any $i$, 
and the initial policy on the first day $A_{1,1}$ is uniformly generated. 
Consequently, when $m=T$, the design essentially becomes an \textit{alternating-day} scheme where the two policies are switched on a daily basis.

Moreover, to improve clarity and facilitate a more intuitive comparison between our SB design and the standard A/B testing, we present a detailed discussion in Appendix    \ref{sec:add_discussion}.

\textit{\textbf{Objective. }}We aim to estimate the global average treatment effect, defined as the difference between the average cumulative rewards when implementing the new policy throughout each day and that when using the baseline policy, 
\begin{equation*}
	\textrm{ATE}=\frac{1}{T}\sum_{t=1}^T \Mean^1 (R_t)-\frac{1}{T}\sum_{t=1}^T \Mean^0 (R_t),
\end{equation*}
where $\Mean^1$ and $\Mean^0$ represent the expectation when the new policy (coded as $1$) and the baseline policy (coded as $0$) are applied at all times, respectively. This metric is particularly relevant for A/B testing in RL \citep{tang2022reinforcement}.

In standard terms, an AD design operates under an \textit{on-policy} framework where within each day, the behavior policy generating the experimental data aligns with the target policy under evaluation (which assigns a constant action, either 0 or 1, at each time). Conversely, when $m\neq T$, the switchback (SB) design operates under an \textit{off-policy} framework where the behavior policy differs from the target policy. 

In standard MDPs, off-policy estimators are considered less efficient than on-policy ones due to the distributional shift between the behavior and target policies \citep{li2023optimal}. 
However, in the next section, we will demonstrate a trade-off between AD and SB. This distributional shift issue becomes predominant when the carryover effect is large, thus favouring AD in such settings. Conversely, when the carryover effect is small, the effectiveness of different designs depends crucially on the autocorrelations among reward errors. In particular, when the majority of reward errors are positively autocorrelated, SB can actually outperform an AD. 
These findings provide valuable insights for technology
companies to optimize their A/B testing strategies.

\section{Optimality of Switchback Designs}\label{sec:switchbackdesign}
This section investigates the effectiveness of switchback designs when applied to various ATE estimators. We begin with a toy example to offer insight into why off-policy estimators under switchback designs can be more effective than their on-policy counterparts under alternating-day designs. 

\textit{\textbf{Toy example.}} To facilitate understanding, we consider the following simple model without carryover effects over time: $R_{t}=\beta_1 A_t+\beta_0 (1-A_t)+e_t$.
By definition, the oracle ATE equals $\beta_1-\beta_0$. A natural plug-in estimator for the ATE is given by: $    \frac{\sum_{i,t} R_{i,t}A_{i,t}}{\sum_{i,t} A_{i,t}}-\frac{\sum_{i,t} R_{i,t}(1-A_{i,t})}{\sum_{i,t} (1-A_{i,t})}$.
Upon calculation, it can be shown that its MSE under AD and a particular SB design with $m=1$ is proportional to
\begin{eqnarray*}
    \textrm{MSE}(\textrm{AD})\propto \Var(e_1+e_2+e_3+\cdots+e_T),\\
    \textrm{MSE}(\textrm{SB}^{(1)})\propto \Var(e_1-e_2+\cdots-(-1)^Te_T),
\end{eqnarray*}
respectively. 
Based on these formulas, the ATE estimator's MSE depends solely on the correlation structure of the residuals $\{e_t\}_t$. In the AD design, the MSE is proportional to the variance of the sum of all residuals, which inflates when errors are positively correlated. In contrast, the SB design’s MSE is determined by the variance of a weighted sum of residuals with alternating signs, effectively canceling positively correlated errors and improving ATE estimation accuracy. 
We extend this analysis to incorporate carryover effects and examine various policy value estimators, including model-based estimators, LSTD, and DRL.

\subsection{Methods}\label{subsec:methods}
This section is organized as follows. We first study linear-model-based estimators. We next analyze the LSTD estimator, a popular value-based estimator. Finally, we consider the DRL estimator, an advanced model-free estimator known for its double robustness.

\textbf{\textit{Model-based method}}. In model-based approach, we assume a system dynamics model and utilize this model to construct the ATE estimator. In particular, we apply linear models to both the reward function and the expected value of the next state, resulting in the following set of linearity assumptions:
\begin{equation}\label{linear_mdp}
	\left\{ 
	\begin{split}
		r_t(A_t,S_t)&=\alpha_t + S_t^\top \beta_t + \gamma_t A_t, \\
		\Mean (S_{t+1}|A_t,S_t)&=\phi_t + \Phi_t S_t + \Gamma_t A_t,
	\end{split}
	\right. 
\end{equation}
where $\alpha_t$ and $\gamma_t$ are real-valued, $\beta_t,\phi_t,$ and $\Gamma_t$ are vectors in  $\mathbb{R}^d$, and $\Phi_t \in \mathbb{R}^{d\times d}$.

We make two remarks. First, the model presented in \eqref{linear_mdp} resembles the linear dynamic system model commonly found in linear-quadratic-Gaussian control problems \citep{krishnamurthy2016partially}. It is also consistent with the linear MDP assumption -- a condition frequently employed in the RL literature \citep{jin2020provably,li2021sample,xie2023semiparametrically}.

Second, under Model \eqref{linear_mdp}, as outlined in \citet{luo2022policy}, the ATE can be expressed as
\begin{eqnarray}\label{ate_est_formula}
\frac{1}{T}\sum_{t=1}^T \gamma_t+\frac{1}{T}\sum_{t=2}^T \beta_t^\top \Big[ \sum_{k=1}^{t-1} (\Phi_{t-1}\Phi_{t-2}\ldots\Phi_{k+1}) \Gamma_k \Big],
\end{eqnarray}
where the product $\Phi_{t-1}\ldots \Phi_{k+1}$ is treated as an identity matrix if $t-1<k+1$. The first term on the right-hand side (RHS) of (\ref{ate_est_formula}) represents the direct effect of actions on immediate rewards, while the latter term accounts for the delayed or carryover effects of previous actions. 

Equation \eqref{ate_est_formula} motivates us to employ the ordinary least square (OLS) regression to compute the estimators $\widehat{\alpha}_t$, $\widehat{\beta}_t$, $\widehat{\gamma}_t$, $\widehat{\phi}_t$, $\widehat{\Gamma}_t$ and $\widehat{\Phi}_t$ and plug them into \eqref{ate_est_formula} to compute the final ATE estimator. Refer to Appendix \ref{sec:implementation} for details.

\textbf{\textit{LSTD}}. LSTD is a popular model-free, value-based OPE estimator. To illustrate the LSTD estimator, we first introduce the notion of value function. For any time $t \geq 1$, action $a \in \{0,1\}$, and state $s$, the value function $V_t^a(s)$ represents the expected cumulative return from time $t$ in state $s$, assuming the agent follows a constant action $a$ and can be mathematically expressed as: $V_t^a(s) = \sum_{j=t}^T \Mean^a (R_j | S_t = s)$. The ATE can then be equivalently represented by $T^{-1} \Mean [V_1^1(S_1) - V_1^0(S_1)]$. LSTD computes an estimated value function $\widehat{V}_t^{a,m}$ and approximates the expectation using empirical averages under the $m-$switchback design, leading to the following ATE estimator: 
\begin{eqnarray}\label{eqn:LSTDest}
\frac{1}{nT}\sum_{i=1}^n [\widehat{V}_1^{1,m}(S_{i,1}) - \widehat{V}_1^{0,m}(S_{i,1})].
\end{eqnarray}
We next outline the approach for estimating the value function using LSTD, which employs linear sieves \citep{grenander1981abstract}  to approximate the value function $V_t^a(s)$ by $\varphi_t^\top(s) \theta^*_{t,a}$, with a given basis function $\varphi_t$ and the associated regression coefficients $\theta^*_{t,a}$. A crucial aspect of this methodology is that these value functions follow the Bellman equation: $\Mean [R_t + V_{t+1}^a(S_{t+1}) - V_t^a(S_t) | A_t = a, S_t = s] = 0$ for every state-action pair $(s,a)$. This leads to the formulation of the following estimating equation under the $m-$switchback design:
\begin{eqnarray}\label{eqn:LSTDeet}
	\begin{aligned}
		&\frac{1}{n} \sum_{i=1}^n \varphi_t(S_{i,t}) \mathbb{I}(A_{i,t}=a) \Big[  R_{i,t} \\
        &+ \varphi_{t+1}^\top(S_{i,t+1}) \widehat{\theta}_{t+1,a,m} - \varphi_t^\top(S_{i,t}) \widehat{\theta}_{t,a,m} \Big] = 0.
	\end{aligned}
\end{eqnarray}
The coefficients $\{\widehat{\theta}_{t,a,m}\}_{t,a}$ are computed in a backward manner, as detailed in Algorithm \ref{algo:algo_lstd}. With these estimators in hand, we construct the value function estimator and plug them into \eqref{eqn:LSTDest} to derive the final ATE estimator. 
\begin{algorithm}[t]
	\caption{Estimating ATE via LSTD.}\label{algo:algo_lstd}
\begin{algorithmic}
		\STATE {\bfseries Input:} $\left\lbrace (S_{it}, R_{it}, A_{it}): 1 \leq i \leq n, 1 \leq t \leq T \right\rbrace$.
	\STATE {Set $\widehat{\theta}_{T+1,a,m}=0$}, for $a \in \{0,1\}$. 
	\FOR{$t=T$ {\bfseries to} $1$}
	\STATE Solve \eqref{eqn:LSTDeet} to obtain  $\widehat{\theta}_{t,a,m}$.
	\ENDFOR
	\STATE{\bfseries Output:} The ATE estimator \eqref{eqn:LSTDest} with the estimator $\widehat{\theta}_{1,a,m}$. 
\end{algorithmic}
\end{algorithm}

\textit{\textbf{DRL}}. The  DRL  estimator extends the double machine learning estimator, originally developed for contextual bandit settings 
\citep{chernozhukov_doubledebiased_2018} to sequential decision making. This approach combines the value-based estimator with the marginal IS estimator \citep{liu2018breaking} for more robust and efficient policy evaluation. A key feature of the DRL estimator is its double robustness: it remains consistent as long as either the estimated value function or the marginal IS ratio is consistent. Additionally, the DRL estimator is semiparametrically efficient, achieving the lowest MSE among the class of regular and asymptotically linear estimators \citep{bickel1993efficient, tsiatis2007semiparametric}.

To present the DRL estimator, we define an estimating function $\psi(\{S_t, A_t, R_t\}_t; \{V_t^a\}_{t,a}, \{\omega_t^{a,m}\}_{t,a}) $ as follows:
\begin{eqnarray*}
	V_1^1(S_1) - V_1^0(S_1)  + \sum_{t=1}^T \sum_{a=0}^1 {(-1)^{a+1}} \omega_{t}^{a,m}(A_t,S_t) \\
	\times \big[R_t + V_{t+1}^a(S_{t+1}) - V_t^a(S_t)\big].
\end{eqnarray*}
Here, the first part corresponds to the value-based estimator. The second part serves as an augmentation term, which is mean-zero according to the Bellman equation when the value function is correctly specified. This term enhances the estimator's robustness against potential model misspecification of the value function. In particular, $\omega_{t}$ corresponds to the marginalized IS ratio, which is crucial for efficient OPE in MDPs \citep{liu2018breaking}. For any action $a\in \{0,1\}$ and time $1\le t\le T$, let $p_t^a$ denote the probability mass function of $(S_t,A_t)$ under consistent application of policy $a$. Additionally, let $p_t^m$ denote the probability mass function of $(S_t,A_t)$ under an AD or SB design. The marginalized IS ratio $\omega_t^{a,m}(s,a')$ is defined as  $p_t^a(s,a')/p_t^m(s,a')$. It can be shown that $\psi$ is unbiased to the ATE if either $\{V_t^a\}_{t,a}$ or $\{\omega_t^{a,m}\}_{t,a}$ is correctly specified. 

Notice that the value function and marginalized IS ratio can be estimated using any advanced RL algorithms. We plug their estimators into the estimating function $\psi$, and utilize sample-splitting and cross-fitting \citep{chernozhukov_doubledebiased_2018} to construct the final DRL estimator. The detailed estimating procedure is summarized in Algorithm \ref{algo:algo_drl}.

\begin{algorithm}[t]
	\caption{Estimating ATE via DRL.}\label{algo:algo_drl}
	\begin{algorithmic}
		\STATE {\bfseries Input:} $\left\lbrace (S_{it}, R_{it}, A_{it}): 1 \leq i \leq n, 1 \leq t \leq T \right\rbrace$. 
		\STATE {\bfseries Step 1:} Randomly divide the data trajectories into $K$ equally-sized folds $\{\mathcal{D}_k\}_{k=1}^K$.
		\STATE{\bfseries Step 2:} For $k=1, \ldots, K$, construct estimators $\{\widehat{\omega}_{t,-k}^{a,m}\}_{t, a}$ and  $ \{\widehat{V}_{t,-k}^{a,m}\}_{t,a}$ using all trajectories except those in $\mathcal{D}_k$.
		\STATE {\bfseries Output:} The ATE estimator
		\begin{eqnarray*}
			\frac{1}{nT}\sum_{k=1}^K \sum_{i\in \mathcal{D}_k} \psi(\{S_{i,t},A_{i,t},R_{i,t}\}_t;\{\widehat{V}_{t,-k}^{a,m}\}_{t,a},\{\widehat{\omega}_{t,-k}^{a,m}\}_{t,a}). 
		\end{eqnarray*}
	\end{algorithmic}
\end{algorithm}

\subsection{Theoretical analysis} \label{subsec:theory_sec}
We begin by introducing two key notations essential for our analysis: (i) First, let $\sigma_e(t_1,t_2)$ denote the covariance between reward residuals $e_{t_1}$ and $e_{t_2}$; (ii) Second, we define $\delta$ as the measure of the impact of the new policy on the state transition functions $\{p_t\}_t$, such that $\delta=\max_{s,t}\sum_{s'} |p_t(s'|1,s)-p_t(s'|0,s)|$. 

Notice that $\delta$ inherently quantifies the size of the carryover effect since under the MDP model, the carryover effect is modeled via state transitions; refer to Figure \ref{fig1}. In particular, when $\delta=0$, past actions have the same effects on state transitions, eliminating any carryover effect. 

We next impose a common assumption required by all the three estimators. 
\begin{asmp}[Bounded rewards]\label{asmp:reward}
	The rewards $\{R_t\}_t$ are uniformly bounded, i.e., $\max_{t} |R_t| \leq R_{\max}$ for some $R_{\max} < \infty$ almost surely.
\end{asmp}
Assumption \ref{asmp:reward} is frequently employed in the RL literature \citep[see e.g.,][]{chen2019information,fan2020theoretical}. 

In Appendix \ref{subsec:assump}, we introduce other estimator-specific assumptions. Notably, each type of estimators only requires a subset of these assumptions. These assumptions are mild and can be easily satisfied, as discussed in Appendix \ref{subsec:assump}. 

\begin{thm}\label{thm::main}
	Under the given conditions, the difference in the MSE of the ATE estimator between the alternating-day design and an $m$-switchback design (where each switch duration equals $m$) is lower bounded by
\begin{eqnarray}\label{eq:ATE_difference}
\begin{split}
\frac{16}{nT^2} \sum_{\substack{k_2-k_1=1,3,5, \ldots\\ 0\le k_1<k_2< T/m}} \sum_{l_1,l_2=1}^{m} \sigma_{e}(l_1+k_1m, l_2+k_2 m)
     \\-\frac{c\delta R_{\max}^2}{n}-o\Big(\frac{1}{n}\Big),
\end{split}
\end{eqnarray}
for some constant $c>0$ and some reminder term of the order $o(1/n)$.  
\end{thm}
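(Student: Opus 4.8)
The plan is to reduce all three estimators to a common asymptotically linear expansion and then isolate the part of the influence function that depends on the reward residuals $\{e_t\}_t$, since this is the only part whose second moment differs between the AD and SB designs at leading order. Concretely, I would first show that for each estimator the centered error admits a representation
\begin{equation*}
\widehat{\ATE}^{(m)}-\ATE=\frac{1}{nT}\sum_{i=1}^n\Big[\sum_{t=1}^T w_t^{(m)}(A_{i,t},S_{i,t})\,e_{i,t}+g_i^{(m)}\Big]+o_p(n^{-1/2}),
\end{equation*}
where $w_t^{(m)}$ is a design-dependent weight induced by the marginalized IS ratio $\omega_t^{a,m}$ and $g_i^{(m)}$ collects the remaining stochastic fluctuation from state sampling and from estimating the nuisance functions. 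For the DRL estimator this is immediate from the form of $\psi$; for LSTD and the model-based estimator I would derive the same expansion by linearizing the backward recursion for $\widehat{\theta}_{t,a,m}$ (resp.\ the plug-in of the OLS coefficients into \eqref{ate_est_formula}) around the truth and verifying that the residual-driven term is common to all three.

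Next I would compute the leading variance. Under a $50$--$50$ behavior policy the ratio $\omega_t^{a,m}(A_t,S_t)$ equals, to within an $O(\delta)$ perturbation of the marginal state law, the value $2$ when $A_t=a$ and $0$ otherwise, so that $w_t^{(m)}=2\,s(t)+O(\delta)$, where $s(t)=(-1)^{A_t+1}$ is the design sign. This sign is constant throughout each day under AD and alternates every $m$ steps under SB; writing $t=l+km$ with $l\in\{1,\dots,m\}$, blocks $k$ of opposite parity carry opposite signs, while the starting action only flips all signs and hence leaves every product $s(t_1)s(t_2)$ unchanged. Substituting $w_t^{(m)}=2s(t)$ into $\frac{1}{nT^2}\sum_{t_1,t_2}w_{t_1}w_{t_2}\sigma_e(t_1,t_2)$ and differencing the two designs, the same-parity contributions cancel and I am left with
\begin{equation*}
\frac{4}{nT^2}\sum_{t_1,t_2}\bigl[1-s_{\mathrm{SB}}(t_1)s_{\mathrm{SB}}(t_2)\bigr]\sigma_e(t_1,t_2)
=\frac{8}{nT^2}\sum_{\substack{t_1,t_2\\ \text{opposite block parity}}}\sigma_e(t_1,t_2).
\end{equation*}
Grouping the ordered opposite-parity pairs into $k_1<k_2$ with $k_2-k_1$ odd, and summing the within-block indices $l_1,l_2$, produces exactly the factor $16/(nT^2)$ and the double sum displayed in \eqref{eq:ATE_difference}.

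It then remains to control everything that the idealization $w_t^{(m)}=2s(t)$ discarded. The replacement error in the weights, the difference between the two designs in the $g_i^{(m)}$ terms, and the bias from the state-distribution shift are all governed by $\delta$, since under the MDP the sole channel through which past actions affect the present is the transition law, whose $\ell_1$ sensitivity is $\delta$ by definition. Using Assumption \ref{asmp:reward} to bound every reward, value function, and residual by a multiple of $R_{\max}$, these carryover-induced discrepancies contribute to the MSE difference a term of magnitude at most $c\delta R_{\max}^2/n$; I would make this quantitative by propagating $\delta$ through the matrix products $\Phi_{t-1}\cdots\Phi_{k+1}$ in \eqref{ate_est_formula}, equivalently through the marginal densities $p_t^m$. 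Since this correction enters with an indeterminate sign, subtracting it yields the stated lower bound, with all remaining nuisance-estimation and higher-order cross terms absorbed into $o(1/n)$.

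The main obstacle I anticipate is the estimator-agnostic step: verifying that LSTD and the model-based estimator share the \emph{same} residual-driven leading term $2s(t)e_t$ as the semiparametrically efficient DRL estimator, rather than merely an $O(\delta)$-close one. This requires showing that the backward LSTD recursion and the carryover expansion \eqref{ate_est_formula} both linearize, at $\delta=0$, into the marginalized-IS form, so that their design-dependent fluctuations coincide to leading order; the bounded-reward assumption together with the linear structure \eqref{linear_mdp} is what makes this linearization uniform in $t$ and keeps the discrepancy at the $O(\delta)$ scale rather than $O(1)$.
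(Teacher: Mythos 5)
Your proposal follows essentially the same route as the paper's proof: an asymptotically linear expansion for each estimator whose residual-driven leading term is $\tfrac{4}{nT}\sum_{i,t}(A_{i,t}-\tfrac12)e_{i,t}$ (your $2s(t)e_t$), a block-parity computation of $\Cov(A_{t_1},A_{t_2})=\pm\tfrac14$ that yields exactly the $\tfrac{16}{nT^2}$ double sum, an $O(\delta R_{\max}^2/n)$ bound on all carryover-induced discrepancies, and a reduction of LSTD to a DRL-type estimator to make the argument estimator-agnostic. The only caveat is that the cross terms between the leading term and the remainder must be controlled in $L^2$ (not merely $o_p(n^{-1/2})$) to conclude an $o(1/n)$ bound on the MSE difference, which the paper does via high-probability events and Cauchy--Schwarz.
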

Based on Theorem \ref{thm::main}, it is immediate to see that the lower bound for the difference in the MSE depends on three terms: (i) an autocorrelation term, quantifying the auto-correlations among reward errors; (ii) a carryover effect term which is proportional to $\delta$ and quantifies the magnitude of the carryover effect; (iii) a reminder term.

Here, the reminder term is a high-order, estimator-dependent term. Specifically, for the model-based estimator, it is of the order $O(n^{-3/2})$ up to some logarithmic factors, which depends on $n$ through $n^{-3/2}$ as opposed to the first two terms, which depend on 
$n^{-1}$. Consequently, the reminder term decays to zero at a much  faster rate. For LSTD and DRL, the reminder term additionally relies on 
the estimation errors of value function and/or MIS ratio. Its specific order for these estimators is detailed in Appendix \ref{thm:full_thm}.

As such, the first two terms are the primary drivers. They indicate that the effectiveness of switchback designs depends on two crucial factors: \textbf{the autocorrelation structure} and \textbf{the size of the carryover effect}. With a large carryover effect, different designs will induce substantially different state distributions, and the off-policy estimator under the switchback design will suffer from substantial distributional shifts when estimating the ATE. Mathematically, this effect is manifested by the second term in \eqref{eq:ATE_difference}, making AD the most efficient design. This observation has been empirically verified in our numerical studies.

Conversely, with a weak carryover effect where $\delta$ is sufficiently small, the first term becomes the leading term, and the effectiveness of different designs is primarily determined by the autocorrelation structure of reward errors. By the definition of $\sigma_e$, it is evident that: 
\begin{itemize}[leftmargin=*]
	\item When the majority of reward errors are positively correlated (as demonstrated in our real-data applications depicted in Figure \ref{fig0}), the first term in \eqref{eq:ATE_difference} is strictly positive. This implies that SB is more efficient than AD. Additionally, when the covariance function is stationary and satisfies $\sigma_e(t_1,t_2)=\sigma_e^*(|t_1-t_2|)$ for some $\sigma_e^*(\bullet)$ being a monotonically decreasing function, the second line becomes a monotonically decreasing function of $m$; see e.g., Corollary \ref{cor1}-\ref{cor3} below. 
	This formally verifies that increasing the frequency of policy switches (reducing the value of $m$) can enhance the efficiency of the switchback design. 
	\item With uncorrelated errors, the first term in \eqref{eq:ATE_difference} becomes zero, and all designs become asymptotically equivalent. 
	\item When the majority of errors are negatively correlated, AD becomes the most efficient. 
\end{itemize}


To the best of our knowledge, the aforementioned findings have not been systematically established in the RL literature. While existing works have studied switchback designs, they often focus on simple and specific policy value estimators. 

Next, we investigate three commonly used covariance structures -- autoregressive, moving average and exchangeable to further elaborate  Theorem \ref{thm::main}.

\begin{cor}[Autoregressive]\label{cor1}
Let $\sigma_e(t_1,t_2)=\sigma^2\rho^{|t_1-t_2|}$ for some $-1< \rho<1$ and $\sigma^2>0$. For sufficiently large $T$, the first term in \eqref{eq:ATE_difference} becomes asymptotically equivalently to
\begin{eqnarray*}
\frac{16\sigma^2\rho(1-\rho^m)}{mT(1-\rho)^2(1+\rho^m)}, 
\end{eqnarray*}
which is a strictly decreasing function of $m$ when $\rho>0$
\end{cor}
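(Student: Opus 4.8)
The plan is to evaluate the double sum in the first term of \eqref{eq:ATE_difference} in closed form under the autoregressive structure $\sigma_e(t_1,t_2)=\sigma^2\rho^{|t_1-t_2|}$ and then isolate its leading order as $T\to\infty$. First I would reindex the outer sum by the odd gap $j:=k_2-k_1\in\{1,3,5,\dots\}$; for each such $j$ there are $N-j$ pairs $(k_1,k_2)$ with $0\le k_1<k_2<N$, where $N:=T/m$ (treating $T/m$ as an integer, the floor correction being $O(1)$). Because $j\ge 1$ forces the separation $(l_2+k_2m)-(l_1+k_1m)=jm+(l_2-l_1)$ to be strictly positive for all $l_1,l_2\in\{1,\dots,m\}$, the absolute value drops out and the inner double sum factorizes into a product of two finite geometric series:
\[
\sum_{l_1,l_2=1}^{m}\rho^{\,jm+l_2-l_1}
=\rho^{jm}\Big(\sum_{l=1}^{m}\rho^{l}\Big)\Big(\sum_{l=1}^{m}\rho^{-l}\Big)
=\frac{\rho^{\,jm+1}\,(\rho^{-m}+\rho^{m}-2)}{(1-\rho)^2}.
\]

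Next I would carry out the sum over the odd gaps. Splitting $\sum_{j\ \mathrm{odd}}(N-j)\rho^{jm}=N\sum_{j\ \mathrm{odd}}\rho^{jm}-\sum_{j\ \mathrm{odd}}j\rho^{jm}$, the leading piece is geometric with value $N\rho^{m}/(1-\rho^{2m})$ and is of order $N$, whereas $\sum_{j\ \mathrm{odd}}j\rho^{jm}$ converges to a finite constant and the truncation tail beyond $j=N-1$ decays like $\rho^{T}$; both are negligible once multiplied by the $16\sigma^2/(nT^2)$ prefactor. Substituting $N=T/m$ and simplifying via the identities $\rho^{m}(\rho^{-m}+\rho^{m}-2)=(1-\rho^{m})^2$ and $1-\rho^{2m}=(1-\rho^{m})(1+\rho^{m})$ collapses the leading term to $\dfrac{16\sigma^2\rho(1-\rho^{m})}{nmT(1-\rho)^2(1+\rho^{m})}$, i.e.\ the claimed expression (up to the overall $1/n$).

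For the monotonicity claim I would substitute $c:=-\ln\rho>0$, so that $\rho^m=e^{-cm}$ and the only $m$-dependent factor becomes $\dfrac{1-\rho^{m}}{m(1+\rho^{m})}=\dfrac{1}{m}\tanh\!\big(\tfrac{cm}{2}\big)=\dfrac{c}{2}\cdot\dfrac{\tanh(cm/2)}{cm/2}$. Strict decrease in $m$ then reduces to the elementary fact that $x\mapsto\tanh(x)/x$ is strictly decreasing on $(0,\infty)$, which I would confirm by noting that the numerator $x\,\mathrm{sech}^2x-\tanh x$ of its derivative vanishes at $x=0$ and has derivative $-2x\,\mathrm{sech}^2x\,\tanh x<0$ for $x>0$, hence stays negative throughout.

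I expect the main obstacle to be the rigorous control of the lower-order contributions in the second step: one must verify that the correction $\sum_{j\ \mathrm{odd}}j\rho^{jm}$, the geometric tail past $j=N-1$, and the $O(1)$ error from $T/m$ not being an integer are each $o(1/(mT))$ after scaling by $16\sigma^2/(nT^2)$, so that genuine asymptotic equivalence (and not merely equality of leading coefficients) is justified. The remaining steps are a routine geometric-series evaluation together with the $\tanh$ lemma.
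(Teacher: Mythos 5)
Your proposal is correct and follows essentially the same route as the paper's proof: factor the inner $\sum_{l_1,l_2}$ sum into a product of geometric series (your closed form agrees with the paper's $\rho^{1-m}(1-\rho^m)^2/(1-\rho)^2$) and then evaluate the odd-gap sum $\sum_{j\ \mathrm{odd}}\rho^{jm}$ asymptotically, the only bookkeeping difference being that the paper computes this sum exactly for both parities of $T/m$ before letting $T\to\infty$, whereas you extract the leading geometric piece $N\rho^m/(1-\rho^{2m})$ and bound the $\sum_j j\rho^{jm}$ and tail corrections, which indeed are $o(1/(mT))$ after the $16\sigma^2/(nT^2)$ scaling provided $T/m\to\infty$. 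Your $\tanh(x)/x$ argument for strict monotonicity in $m$ is a welcome addition, since the paper asserts this property without proof.
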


\begin{cor}[Moving average]\label{cor2}
Let $e_t=K^{-\frac{1}{2}} \sum_{k=1}^K \varepsilon_{t+k}$ for a white noise process $\{\varepsilon_t\}_t$ with $\Var(\varepsilon_t)=\sigma^2>0$. For any $m\ge K$ that divides $T$, the first term in \eqref{eq:ATE_difference} becomes
	\begin{eqnarray*}
		\frac{8\sigma^2(T/m-1)(K^2-1)}{3T^2},
	\end{eqnarray*}
which is a strictly decreasing function of $m$.
\end{cor}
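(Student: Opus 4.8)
The plan is to evaluate the inner double sum appearing in the first term of \eqref{eq:ATE_difference} in closed form by first computing the autocovariance of the moving-average process, then using the constraint $m \ge K$ to discard all block pairs except adjacent ones, and finally counting the surviving blocks. First I would compute the autocovariance. Since $e_t = K^{-1/2}\sum_{k=1}^K \varepsilon_{t+k}$ with $\{\varepsilon_t\}_t$ white noise of variance $\sigma^2$, expanding the product and using $\Cov(\varepsilon_{s},\varepsilon_{s'}) = \sigma^2 \mathbb{I}(s=s')$ gives, writing $h = |t_1 - t_2|$,
\begin{equation*}
\sigma_e(t_1,t_2) = \frac{\sigma^2}{K}\,\#\{(k,k'): 1\le k,k'\le K,\ k - k' = t_2 - t_1\} = \frac{\sigma^2}{K}\max(0,\,K - h),
\end{equation*}
the usual triangular kernel, which vanishes as soon as $h \ge K$.

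The crucial structural observation is that, writing $t_1 = l_1 + k_1 m$ and $t_2 = l_2 + k_2 m$, the lag is $t_2 - t_1 = (l_2 - l_1) + (k_2 - k_1)m$ with $l_2 - l_1 \in \{-(m-1),\dots,m-1\}$. Because the outer sum runs only over odd separations $k_2 - k_1 \in \{1,3,5,\dots\}$ and $m \ge K$, any separation $k_2 - k_1 \ge 3$ forces the lag to be at least $3m - (m-1) = 2m+1 > K$, so $\sigma_e$ is identically zero on those pairs. Hence only adjacent blocks with $k_2 - k_1 = 1$ contribute. For such a pair the lag is $d := (l_2 - l_1) + m$, which yields nonzero covariance only when $1 \le d \le K-1$; for each such $d$ there are exactly $d$ pairs $(l_1,l_2)\in\{1,\dots,m\}^2$ with $l_2 - l_1 = d - m$, each contributing $\sigma_e = \sigma^2(K-d)/K$. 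The inner double sum over a single adjacent block pair therefore equals $\frac{\sigma^2}{K}\sum_{d=1}^{K-1} d(K-d)$.

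Finally I would count the adjacent block pairs: there are $T/m - 1$ choices of $k_1 \in \{0,\dots,T/m-2\}$ paired with $k_2 = k_1+1 < T/m$. Combining this with the closed form $\sum_{d=1}^{K-1} d(K-d) = K(K^2-1)/6$ gives the double sum $(T/m-1)\sigma^2(K^2-1)/6$; multiplying by the $16/T^2$ prefactor of the first term in \eqref{eq:ATE_difference} (with the $1/n$ normalization suppressed as in the statement) recovers the claimed $\frac{8\sigma^2(T/m-1)(K^2-1)}{3T^2}$, and strict monotonicity in $m$ is immediate because $T/m - 1$ is strictly decreasing. The main obstacle is the bookkeeping in the second step: correctly recognizing that $m \ge K$ together with the odd-separation constraint annihilates every non-adjacent block, and then converting the double sum over $(l_1,l_2)$ into a lag sum with the right multiplicities. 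Once that combinatorial reduction is pinned down, the remaining algebra is routine.
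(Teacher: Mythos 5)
Your proposal is correct and follows essentially the same route as the paper's proof: compute the triangular autocovariance $\sigma_e(t_1,t_2)=\sigma^2[K-|t_2-t_1|]_+/K$, use $m\ge K$ to kill all non-adjacent odd block pairs, and reduce the surviving inner sum to $\frac{\sigma^2}{K}\sum_{d=1}^{K-1}d(K-d)=\sigma^2(K^2-1)/6$ per adjacent pair, times $T/m-1$ pairs and the $16/T^2$ prefactor. The only cosmetic difference is that you organize the inner double sum as a lag sum with multiplicity $d$, whereas the paper enumerates the partial sums $\frac{1+\cdots+(K-1)}{K}+\cdots+\frac{1}{K}$ directly; these are the same calculation.
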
 

\begin{cor}[Exchangeable]\label{cor3}
Assume $\sigma_e(t_1,t_2)=\sigma^2 [\rho\mathbb{I}(t_1 \neq t_2) +\mathbb{I}(t_1 = t_2)]$ for some $-1< \rho<1$ and $\sigma^2>0$. Then the first term in \eqref{eq:ATE_difference} equals
\begin{eqnarray*}
    \left\{
    \begin{array}{ll}
        4\sigma^2 \rho, & \text{if } T/m~\text{is even}, \\ 
        4\sigma^2 \rho(1-m^2/T^2), & \text{if } T/m~\text{is odd},
    \end{array}
    \right.
\end{eqnarray*}
which is a constant function of $m$ when $T/m$ is even, and varies strictly monotonically (increasing or decreasing) as a function of $m$ when $T/m$ is odd, depending on whether $\rho>0$ or $\rho<0$.
\end{cor}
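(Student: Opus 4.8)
The plan is to evaluate the first term in \eqref{eq:ATE_difference} directly, exploiting the fact that the block structure of the outer sum forces every covariance entry to take its off-diagonal value. Throughout I write $M=T/m$ for the number of switch blocks, so the outer summation ranges over pairs $(k_1,k_2)$ with $0\le k_1<k_2\le M-1$ and $k_2-k_1$ odd, while $l_1,l_2$ range over $\{1,\ldots,m\}$.

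First I would observe that because $k_1<k_2$, the two inner index sets $\{l_1+k_1m:1\le l_1\le m\}\subseteq\{k_1m+1,\ldots,(k_1+1)m\}$ and $\{l_2+k_2m:1\le l_2\le m\}\subseteq\{k_2m+1,\ldots,(k_2+1)m\}$ are disjoint, the former lying strictly below the latter. Hence $l_1+k_1m\neq l_2+k_2m$ for every admissible tuple, and the exchangeable structure gives $\sigma_e(l_1+k_1m,l_2+k_2m)=\sigma^2\rho$ identically. The inner sum over $(l_1,l_2)$ then contributes the constant factor $m^2\sigma^2\rho$, so the whole term collapses to $\tfrac{16}{nT^2}\,m^2\sigma^2\rho\,N$, where $N$ is the number of admissible outer pairs (the common $1/n$ being suppressed as in Corollaries \ref{cor1} and \ref{cor2}).

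Next I would reduce the problem to a parity count. A difference $k_2-k_1$ is odd precisely when $k_1$ and $k_2$ have opposite parities, and each unordered opposite-parity pair contributes exactly one ordered pair with $k_1<k_2$; thus $N$ equals the product of the number of even and the number of odd indices in $\{0,1,\ldots,M-1\}$. When $M$ is even there are $M/2$ of each, so $N=M^2/4$ and, substituting $M=T/m$, the term equals $\tfrac{16m^2\sigma^2\rho}{T^2}\cdot\tfrac{T^2}{4m^2}=4\sigma^2\rho$. When $M$ is odd there are $(M+1)/2$ even and $(M-1)/2$ odd indices, so $N=(M^2-1)/4$ and the term equals $\tfrac{16m^2\sigma^2\rho}{T^2}\cdot\tfrac{(T/m)^2-1}{4}=4\sigma^2\rho(1-m^2/T^2)$. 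The monotonicity claims then follow by inspection: the even branch is constant in $m$, whereas in the odd branch $1-m^2/T^2$ strictly decreases in $m$, so the expression decreases when $\rho>0$ and increases when $\rho<0$.

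The hard part will not be any analytic estimate but the bookkeeping: one must verify the disjointness of the two inner index ranges (which is exactly what collapses every summand to the single off-diagonal value $\sigma^2\rho$) and then handle the parity of $M=T/m$ correctly. The even/odd case split is the only genuinely non-mechanical step and is what produces the two-branch answer; everything else is routine simplification.
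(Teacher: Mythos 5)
Your proposal is correct and follows essentially the same route as the paper: both observe that $k_1<k_2$ forces every covariance entry to its off-diagonal value $\sigma^2\rho$, reduce the inner double sum to the factor $m^2\sigma^2\rho$, and then count the admissible outer pairs, obtaining $T^2/(4m^2)$ when $T/m$ is even and $T^2/(4m^2)-1/4$ when $T/m$ is odd. The only (immaterial) difference is that you count via the parity product while the paper sums the arithmetic series $(M-1)+(M-3)+\cdots$; the results agree.
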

We remark that while these structures may appear simple, they are widely adopted in practice \citep{williams1952experimental, berenblut1974experimental, zeger1988regression}. 

\section{Numerical Experiments}\label{numerical_secs}
In this section, we conduct numerical experiments to verify our theory. Our code is available at \url{https://github.com/QianglinSIMON/SwitchMDP}. 
\subsection{Synthetic Environments}\label{subsec:linearDGP}
\textit{\textbf{DGP}}. We design two data generating processes (DGPs) with a common time horizon $T=48$ and state dimension $d=3$: one with a linear DGP and the other with a nonlinear DGP (refer to Appendix \ref{sec:addresults} for the detailed setup), to evaluate the performance of various switchback designs and different ATE estimators. The reward errors follow an autoregressive covariance function so that $\Cov(e_{t_1},e_{t_2})=1.5\rho^{|t_1-t_2|}$ whenever $t_1\neq t_2$, with the parameter $\rho$ varied among the set $\{0.3, 0.5, 0.7, 0.9\}$. We also vary the size of carryover effects, characterized by a parameter $\delta$.  The number of days $n$ used in our simulations is selected from a range of 16 to 52 in increments of four. 

\begin{figure}[!t]
	\centering
	\vspace{-1mm} 
	\begin{minipage}{0.49\linewidth}
		\centering
		\includegraphics[width=1\linewidth,height=0.6\linewidth]{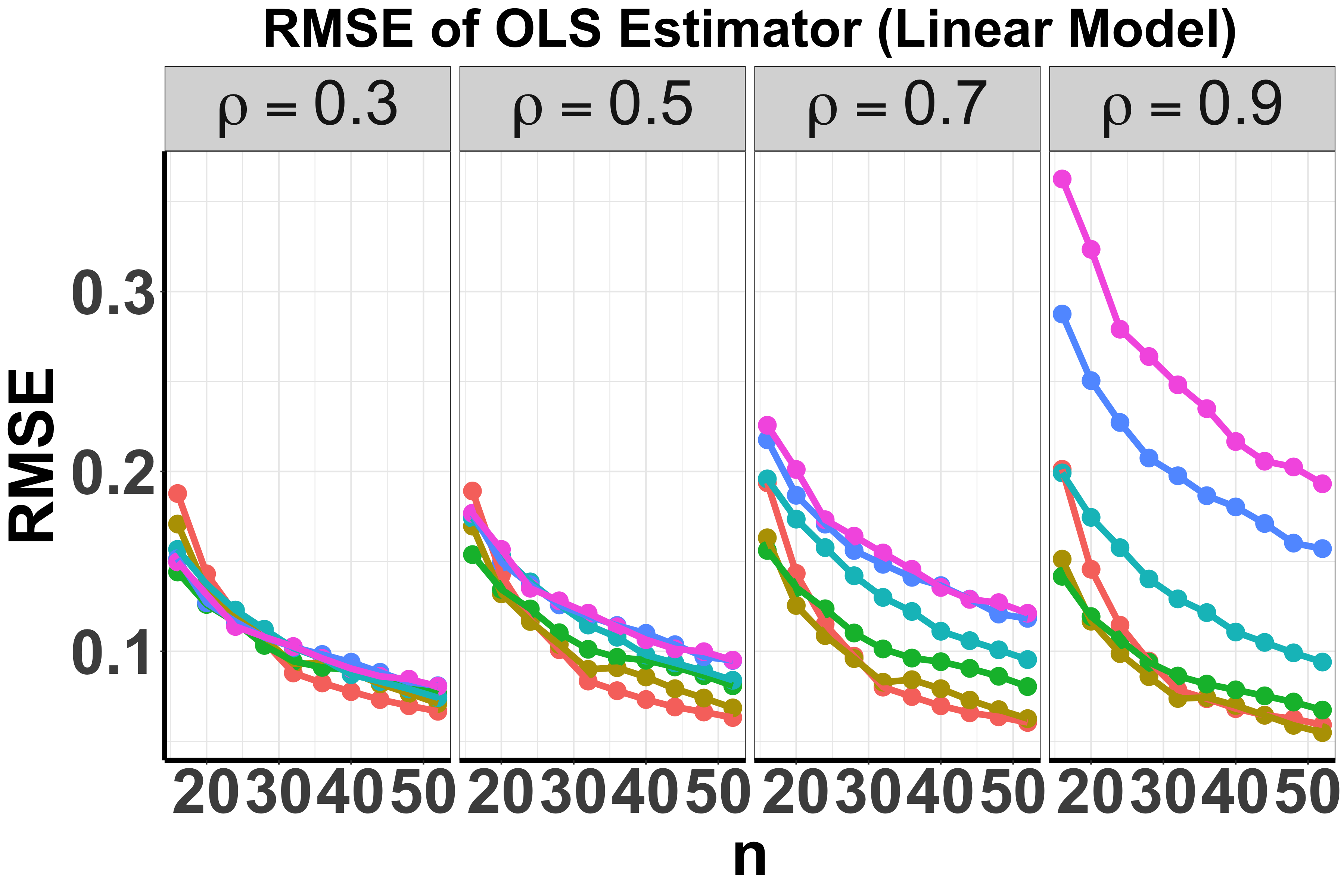}
	\end{minipage}
	\begin{minipage}{0.49\linewidth}
		\centering
		\includegraphics[width=1\linewidth,height=0.6\linewidth]{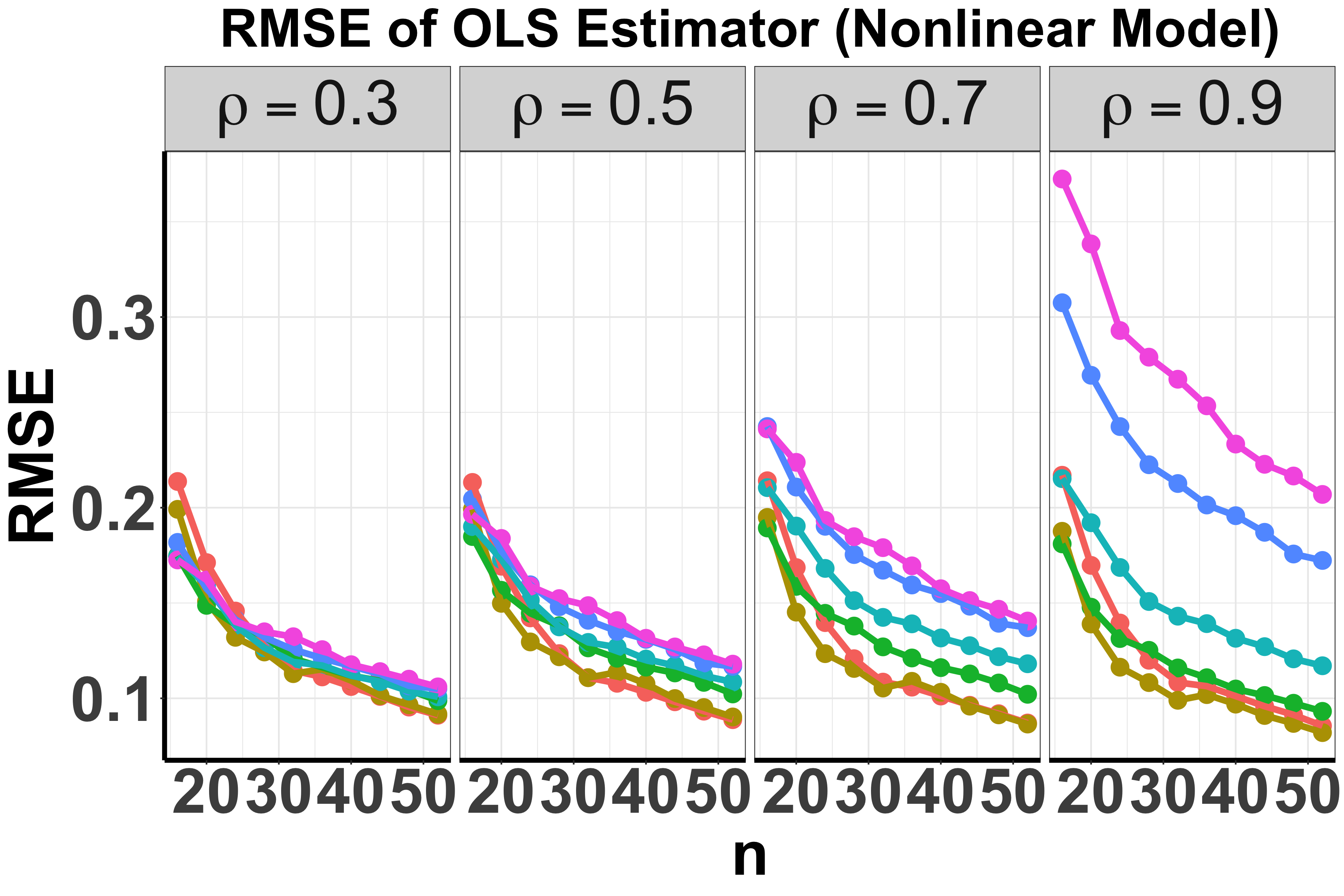}
	\end{minipage}
	\begin{minipage}{0.49\linewidth}
		\centering
		\includegraphics[width=1\linewidth,height=0.6\linewidth]{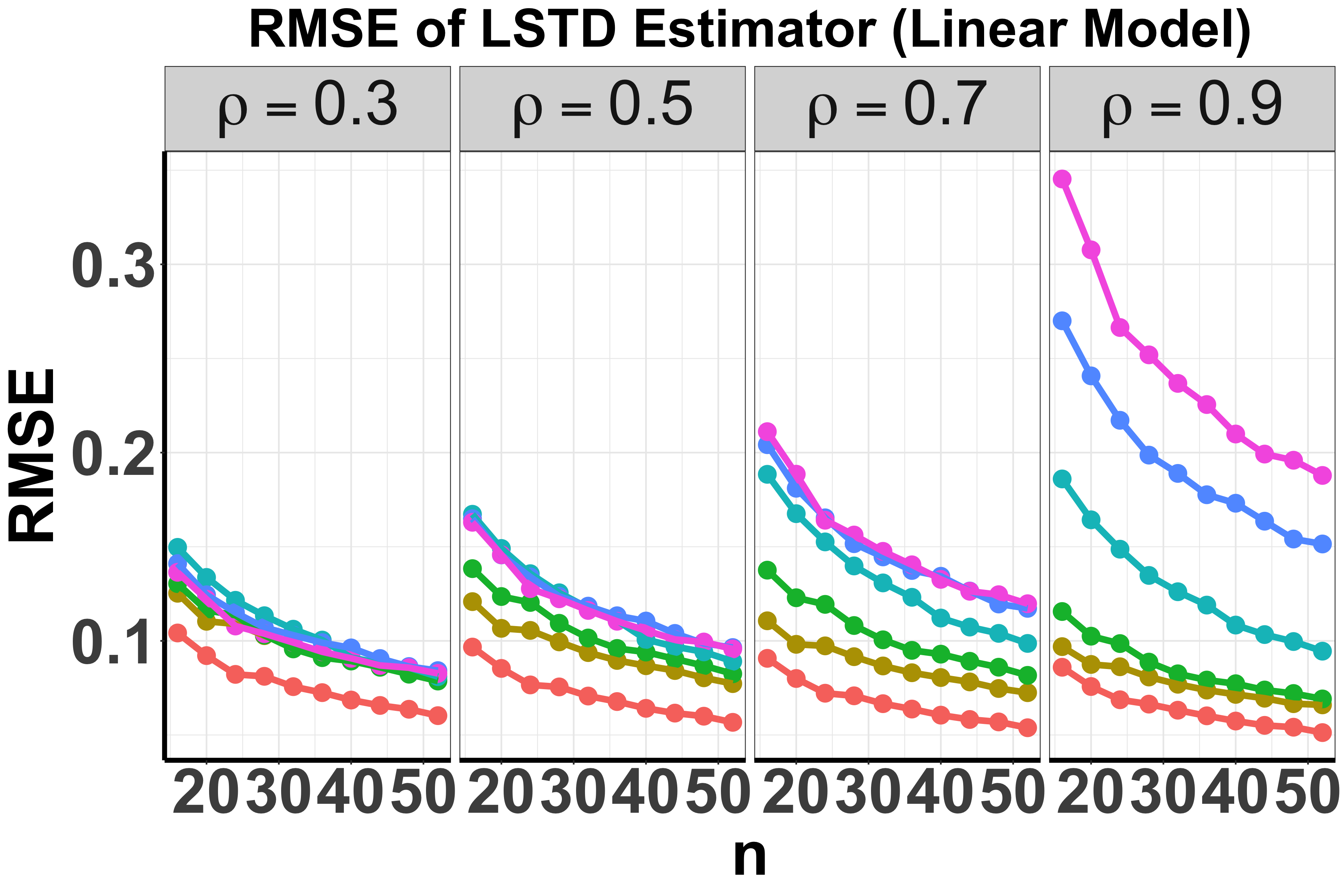}
	\end{minipage}
	\begin{minipage}{0.49\linewidth}
		\centering
		\includegraphics[width=1\linewidth,height=0.6\linewidth]{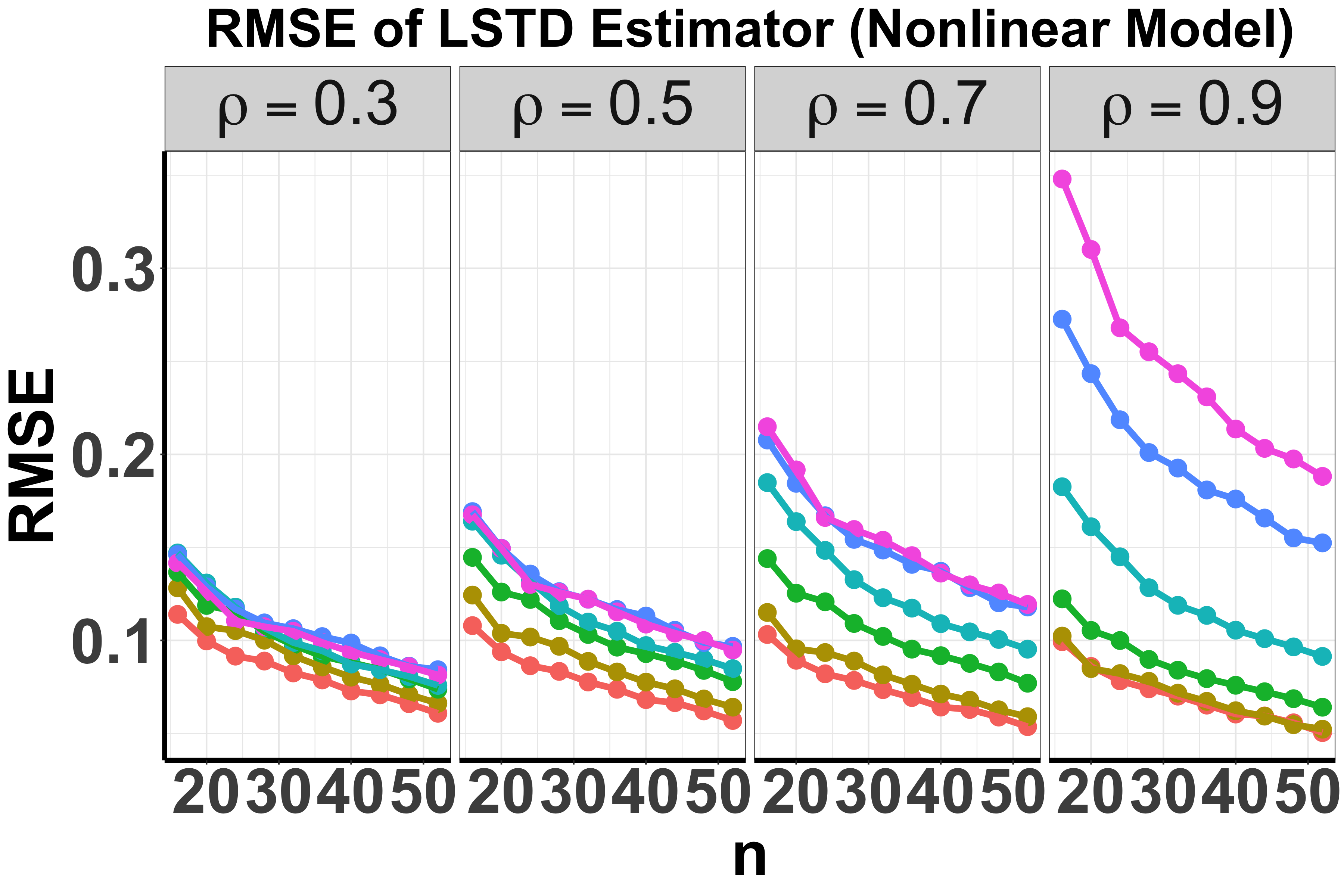}
	\end{minipage}
	\begin{minipage}{0.49\linewidth}
		\centering
		\includegraphics[width=1\linewidth,height=0.6\linewidth]{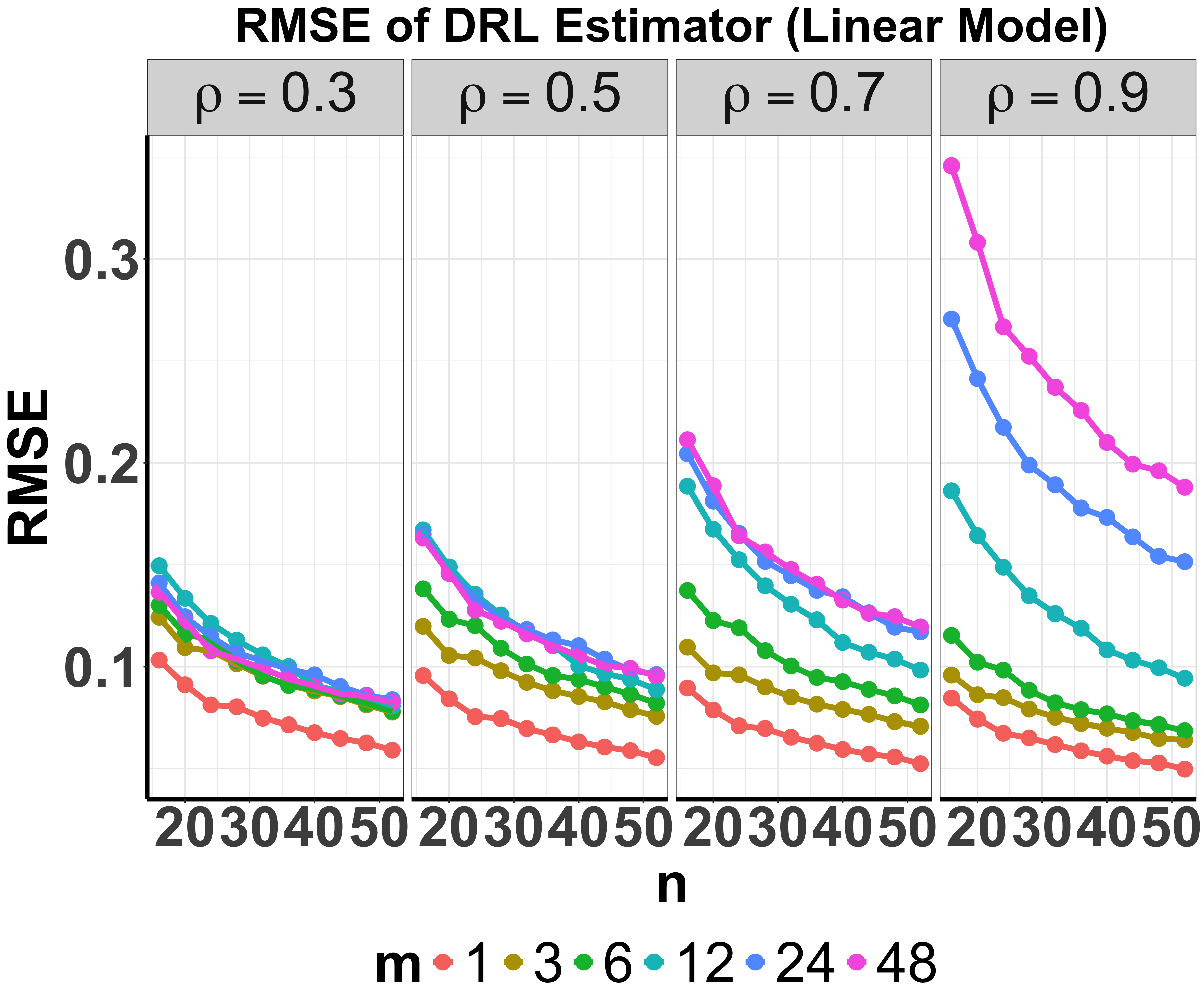}
	\end{minipage}
	\begin{minipage}{0.49\linewidth}
		\centering
		\includegraphics[width=1\linewidth,height=0.55\linewidth]{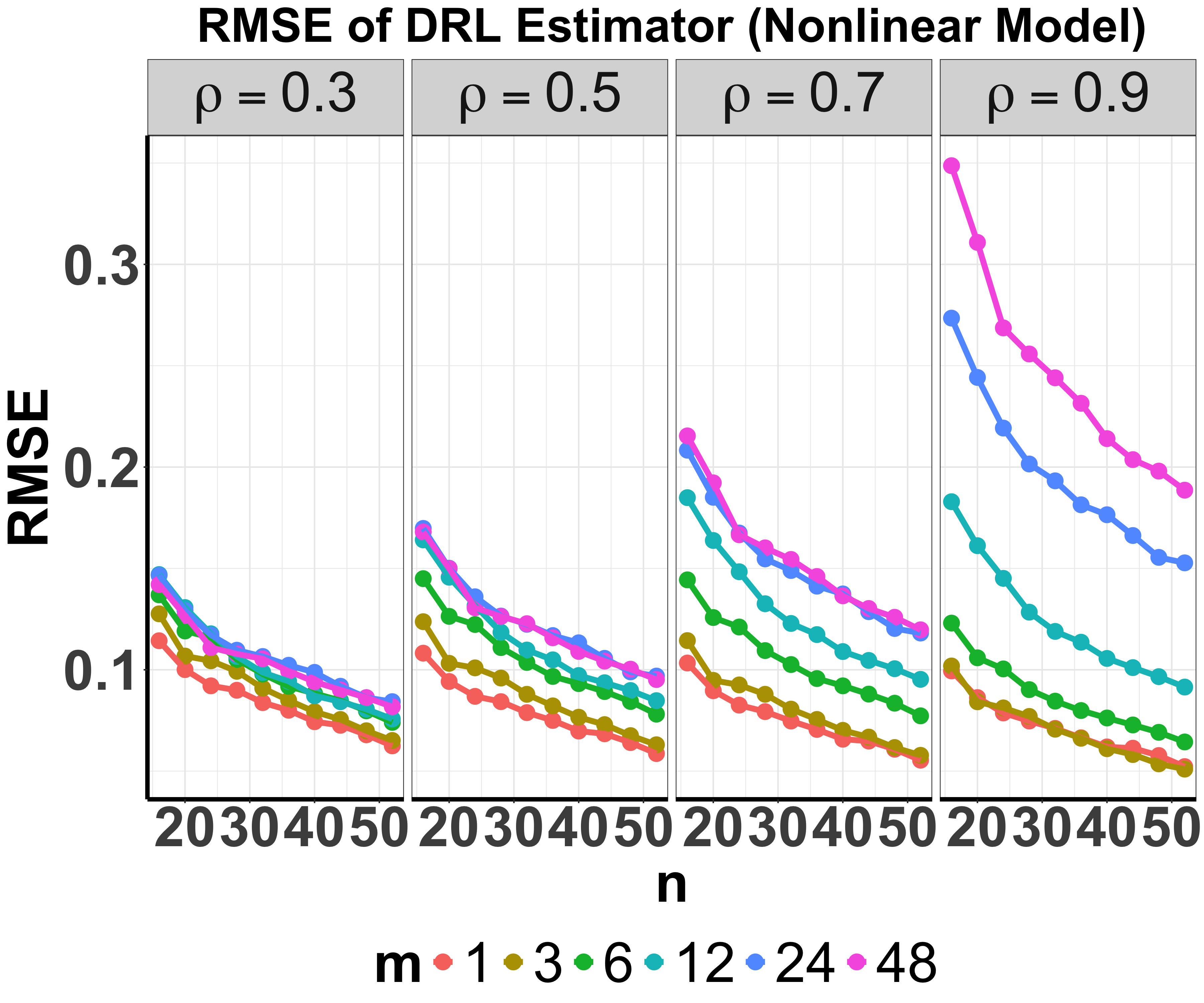}
	\end{minipage}

	\caption{ \small Simulation results with different combinations of $(n,m,\rho)$ alongside different estimation procedures. $\delta$ is fixed to zero, resulting in a weak yet nonzero carryover effect.}\label{fig2}
\end{figure}

\begin{figure}[t]
    \centering
    \begin{minipage}{0.48\linewidth}
        \centering
        \includegraphics[width=\linewidth]{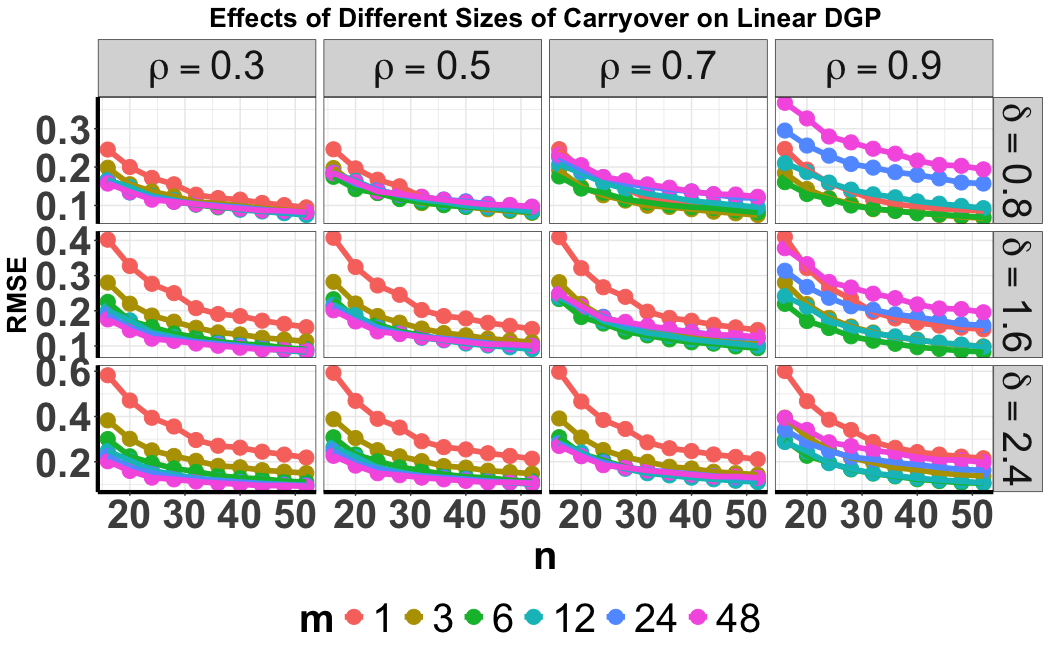}
    \end{minipage}
    \hspace{0.1cm} 
    \begin{minipage}{0.48\linewidth}
        \centering
        \includegraphics[width=\linewidth]{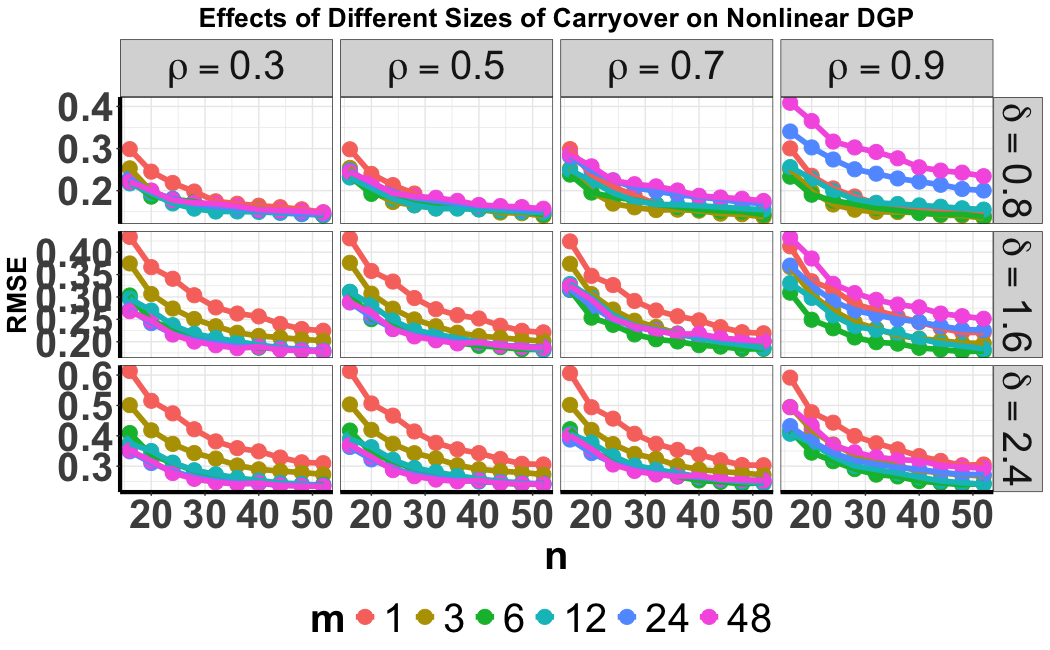}
    \end{minipage}
    \caption{{\small RMSEs of OLS estimator with different combinations of $(n,m,\rho,\delta)$. $\delta$ denotes the constant shift in generating $\Gamma_t$, i.e. $\{\Gamma_t^{(j)}\}_{t,j} \stackrel{i.i.d.}{\sim} N(\delta,0.3^2)$. A larger $\delta$ in absolute value leads to a more pronounced carryover effect.}}
\label{fig:different_carryover_sizes}
	\vspace{-3mm} 
\end{figure}

\textit{\textbf{Results}}. 
We implement various $m$-switchback designs with $m\in\{1,3,6,12,24,48\}$ in these environments and report the root MSEs of the resulting OLS, LSTD, and DRL estimators (refer to Appendix \ref{sec:implementation} for their implementation details) aggregated over 200 simulations in Figures \ref{fig2}  and \ref{fig:different_carryover_sizes}, considering different combinations of $n$, $\rho$, $\delta$ and the estimating procedure. Notably, when $m=48$, the resulting design coincides with AD. Figure \ref{fig2} report the results with weak carryover effects. It can be seen from Figure \ref{fig2} that the root MSE (RMSE) decays with $m$ in most cases. Additionally, the difference in RMSE between the SB design and the AD design grows with $\rho$, which corresponds to the autocorrelation coefficient of $\{e_t\}_t$. This aligns with our analysis, suggesting that a higher degree of positive correlation in the residuals favors SB over AD. Meanwhile, it can be seen from Figure \ref{fig:different_carryover_sizes} that as the carryover effects increase (i.e., as $\delta$ becomes larger), the AD becomes progressively more efficient. These results empirically validate our theories. 

\textbf{\textit{Sensitivity of covariance structure}}. We further examine four additional covariance structures: (i) moving average, (ii) exchangeable (with a positive correlation), (iii) uncorrelated and (iv) autoregressive (with a negative autocorrelation). 
We focus on OLS estimators and report their MSEs under different designs in Figures \ref{fig::linear_sensitivity_covariance} and \ref{fig::nonlinear_sensitivity_covariance}
of Appendix \ref{sec:addresults}. The efficiency of switchback designs varies with \(m\), depending on the presence of negative or positive correlation. In cases of uncorrelated errors, most designs attain similar performance and AD works the best in small samples. 

\textbf{\textit{Comparison}}. 
We further conduct simulations to compare the RL-based estimator with three other baseline estimators: (i) the sequential IS estimator \citep{bojinov2023design} which addresses carryover effects via multi-step importance sampling; (ii) the difference-in-mean estimator of \citet{hu2022switchback} which uses burn-in to mitigate carryover effects during policy transitions; (iii) the simple IS estimator of \citet{Xiong2023}  which does not account for carryover effects. Results are reported in Figure \ref{fig:compare_logmse_linear_nonlinear}, where RL-based estimators consistently outperform (i) and (ii), and both DRL and LSTD significantly outperform (iii) in most settings.


\begin{figure}[t]
    \centering
    \begin{minipage}[t]{0.48\linewidth}
        \centering
        \includegraphics[width=\linewidth, height=0.2\textheight]{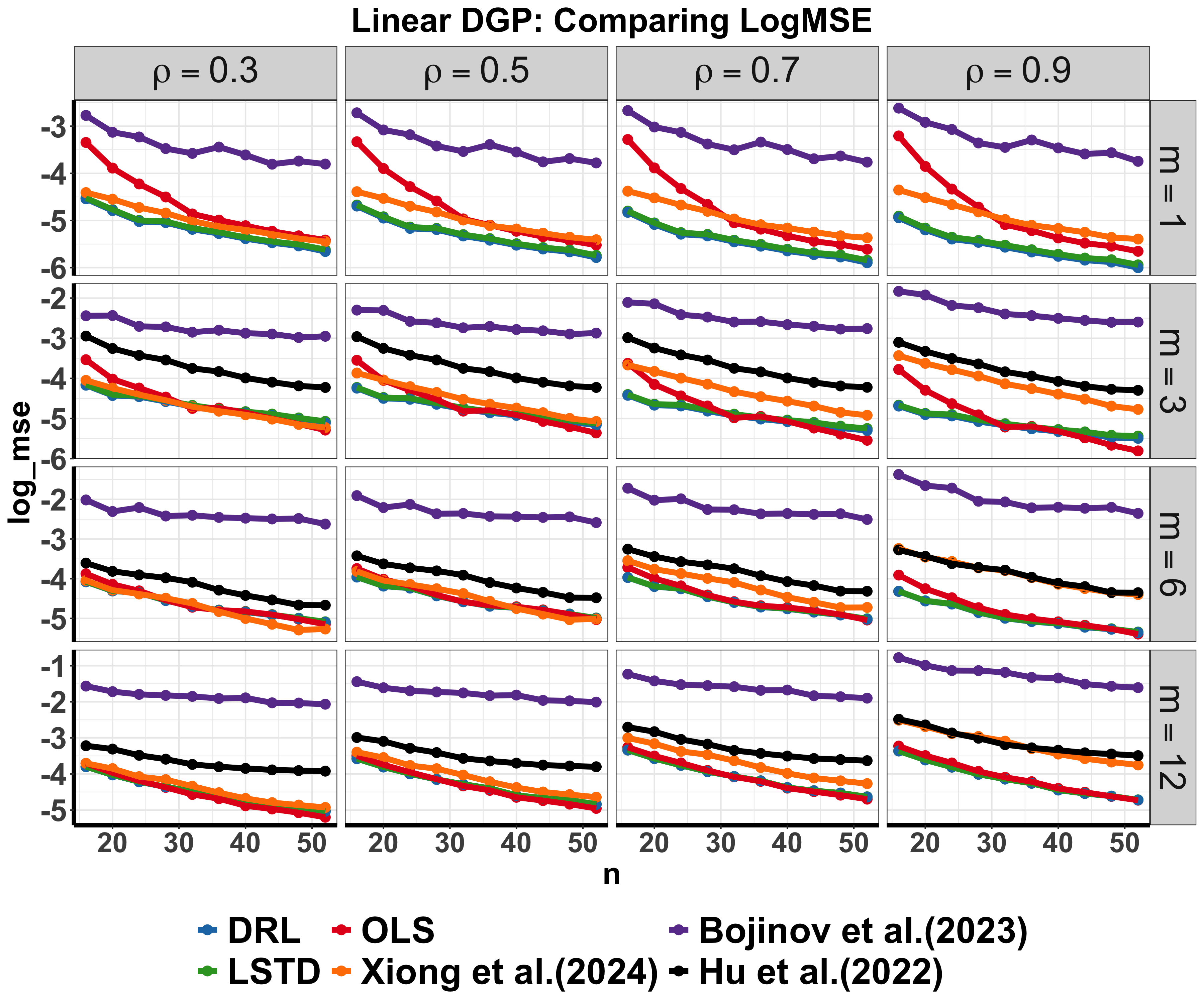}
    \end{minipage}
    \hfill
    \begin{minipage}[t]{0.48\linewidth}
        \centering
        \includegraphics[width=\linewidth, height=0.2\textheight]{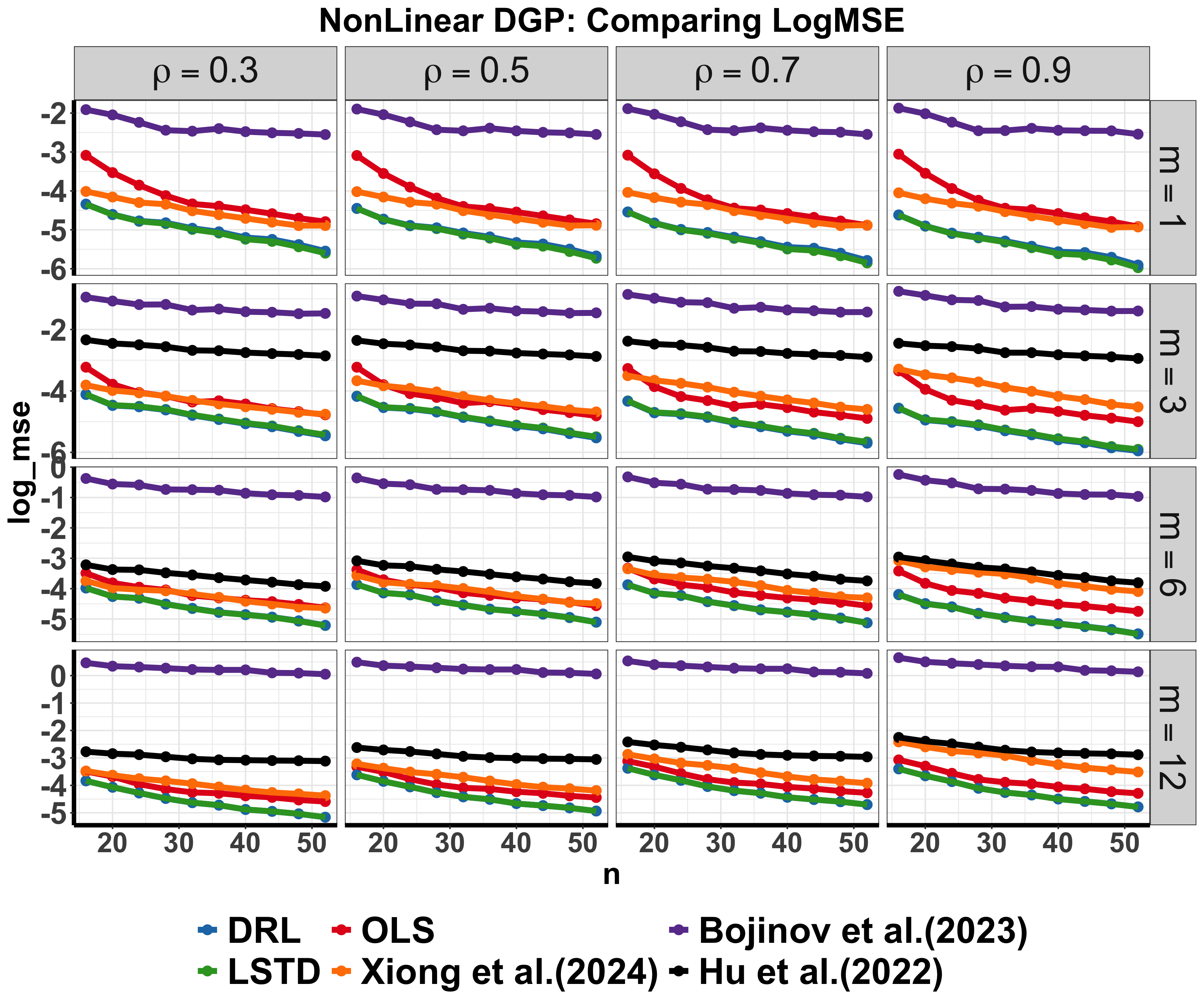}
    \end{minipage}
    \caption{\small
      Comparing log MSEs of all ATE estimators with different combinations of $(n,m, \rho)$ under the linear DGP (left) and nonlinear DGP (right).
    }\label{fig:compare_logmse_linear_nonlinear}
    \vspace{-0.8cm}
\end{figure}

\subsection{Real-data-based Simulation}\label{sec:realdata}
We use two real datasets from a leading ridesharing company, each with 40 days of data ($N=40$). The state variable includes the number of order requests and the driver's total online time in each interval, measuring the demand and supply dynamics that impact the platform's outcomes \citep{shi2023dynamic}. The reward is the total income earned by the drivers within each time interval, whose residual shows a noticeable positive correlation, as depicted in Figure \ref{fig0}. Both datasets are collected from A/A experiments, in which a single order dispatch policy was consistently applied over time ($A_t=0$ for all $t$). To utilize these datasets for evaluating the performance of different designs, we create two simulation environments using the wild bootstrap \citep{wu1986jackknife}, as detailed in Appendix \ref{sec:addresults}. To generate data under different policies, we introduce an effect size parameter $\lambda$ and consider four choices, corresponding to 0 (i.e., no treatment effect at all), 2\%, 5\%, 7.5\%, 10\%, 12\% and 15\%. By adjusting $\lambda$, the data are generated so that both the direct effect and carryover effect of the new policy (see Equation \eqref{ate_est_formula}) are increase by $\frac{\lambda}{2}$, leading to an overall ATE increase of $\lambda$.

Figures \ref{fig_real_data_rmse} summarize the results, which strongly support our theoretical findings. Specifically, with positive correlated reward residuals, the benefits of employing switchback designs with more frequent switches are evident when there is no or only a minor effect enhancement (i.e., 0 or 2\% increase). However, as the size of the carryover increases to 15\%, the AD becomes more efficient. 

To better understand our estimators, we conducted additional simulations based on the City II dataset, employed a non-parametric bootstrap method to construct confidence intervals (CIs), and reported both the coverage probability (CP) and the average CI width in the Figure \ref{fig:CI_mean_widths} of Appendix. Most CPs exceed 92\%, which is close to the nominal level. For small values of $\lambda$, more frequent policy switching reduces the average CI width. As $\lambda$ increases to 10\%, AD yields the narrowest CI on average. These results support our claim that a lower MSE corresponds to shorter CIs.

\begin{figure}[t]
	\centering
	\vspace{-1mm} 
	\begin{minipage}{0.49\linewidth}
		\centering
		\includegraphics[width=1\linewidth,height=0.6\linewidth]{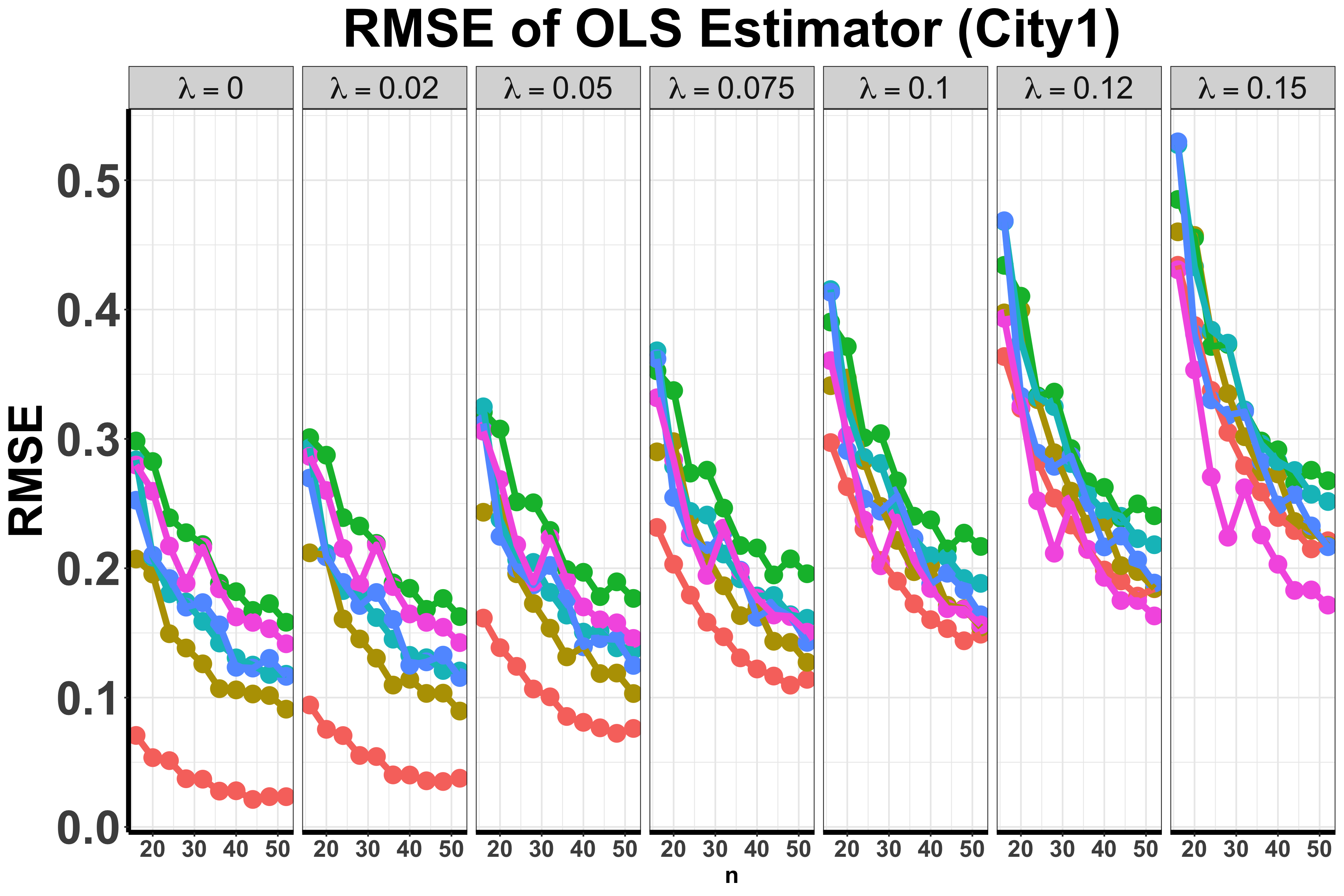}
	\end{minipage}
	\begin{minipage}{0.49\linewidth}
		\centering
		\includegraphics[width=1\linewidth,height=0.6\linewidth]{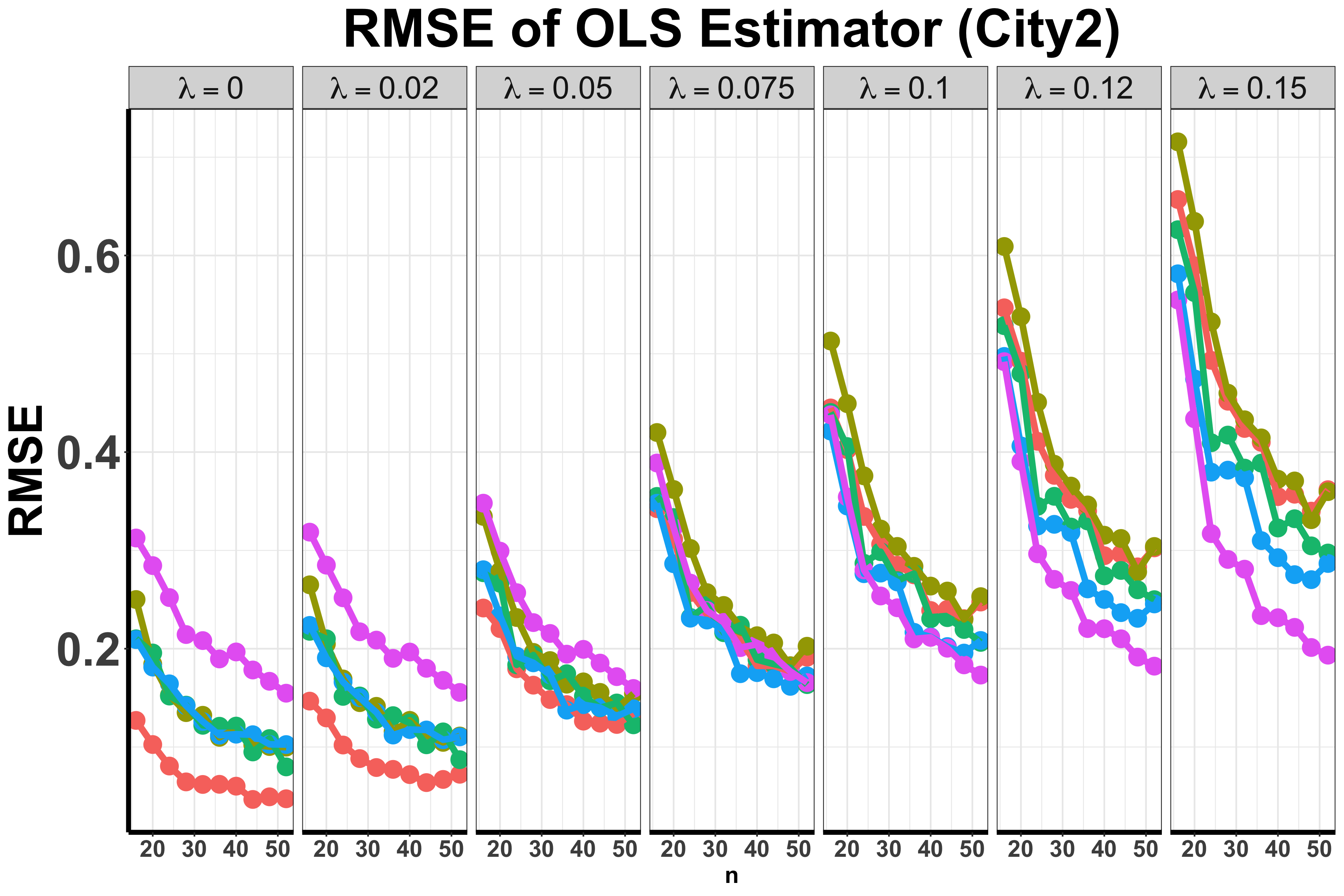}
	\end{minipage}
	\begin{minipage}{0.49\linewidth}
		\centering
		\includegraphics[width=1\linewidth,height=0.6\linewidth]{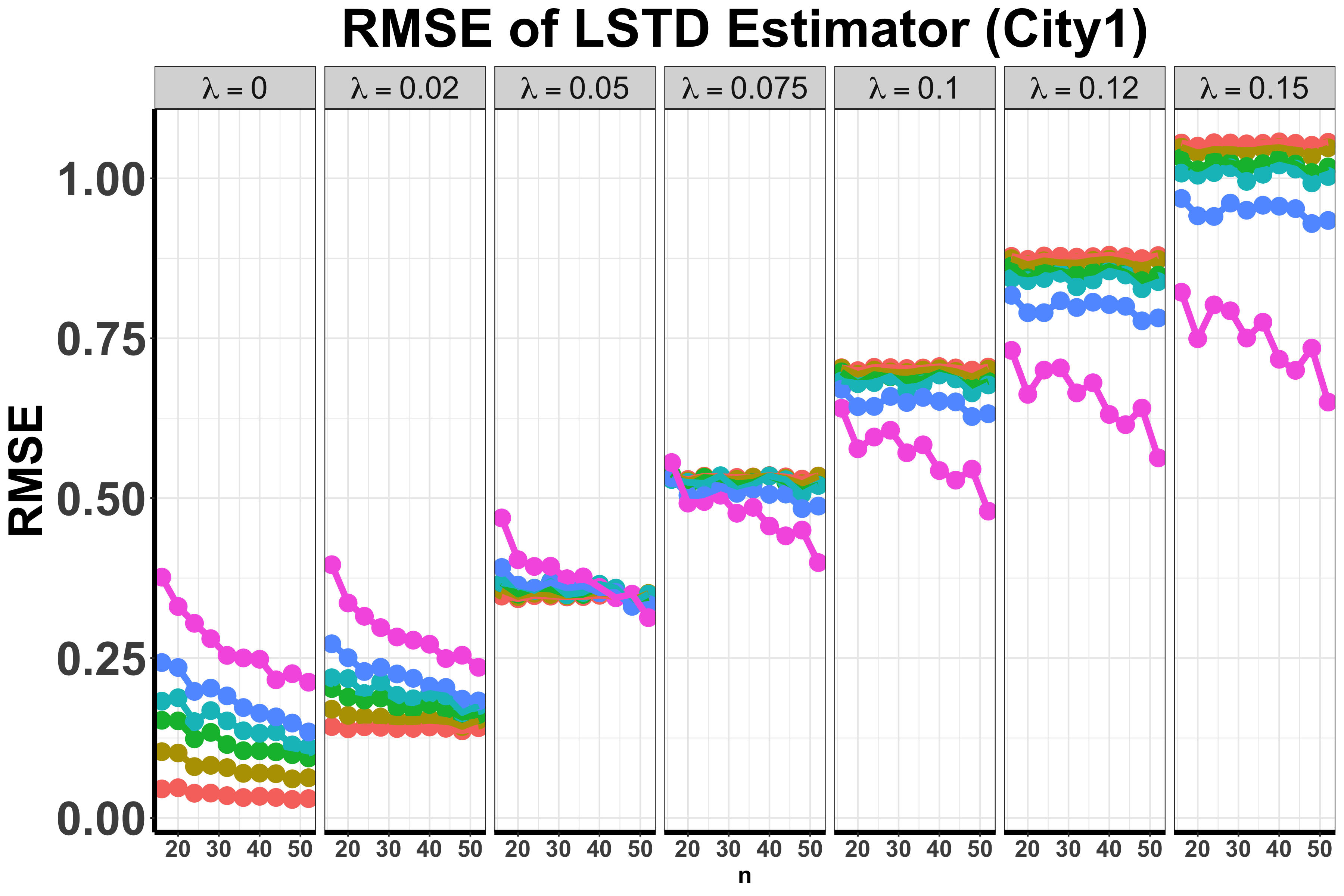}
	\end{minipage}
	\begin{minipage}{0.49\linewidth}
		\centering
		\includegraphics[width=1\linewidth,height=0.6\linewidth]{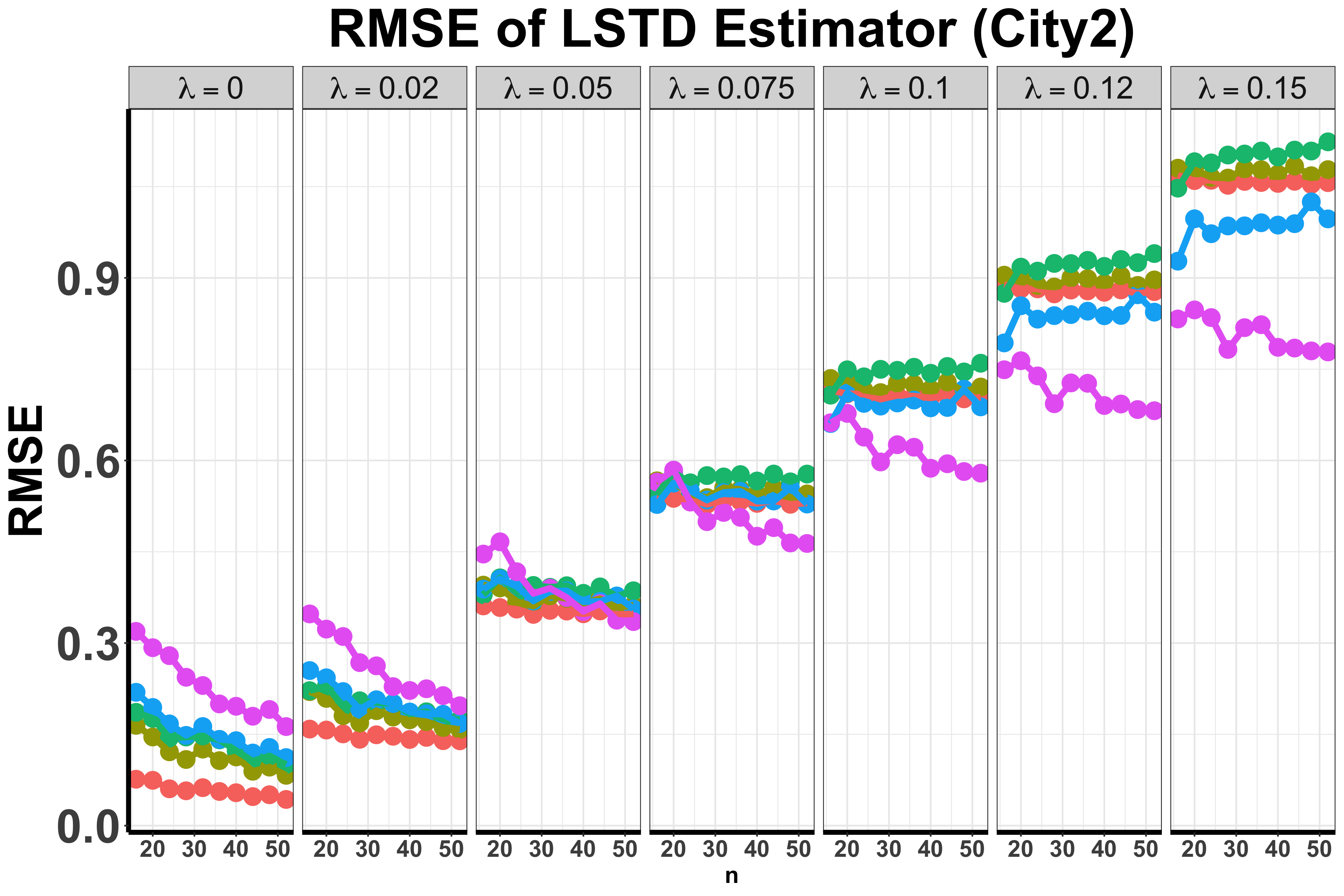}
	\end{minipage}
	\begin{minipage}{0.49\linewidth}
		\centering
		\includegraphics[width=1\linewidth,height=0.6\linewidth]{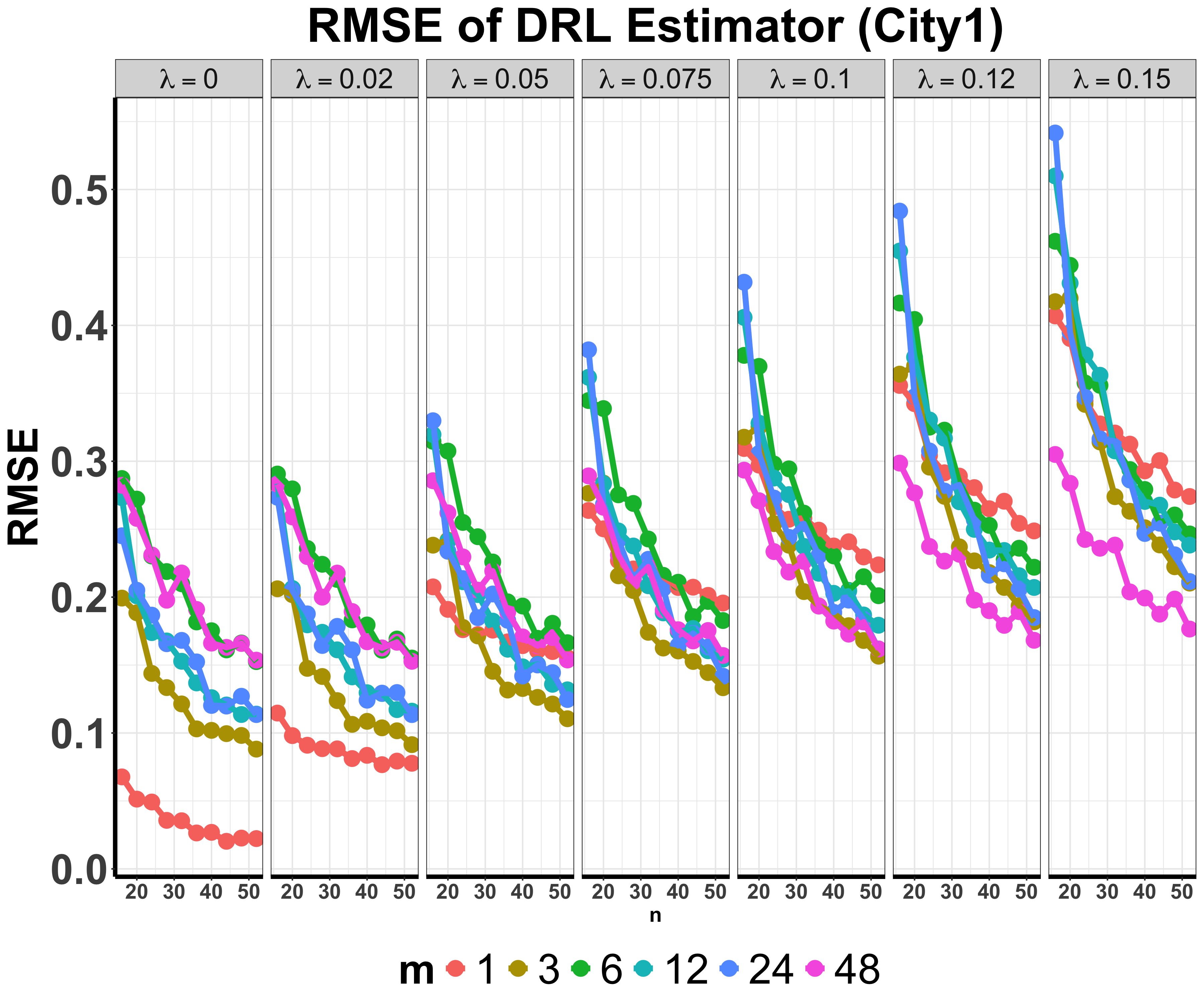}
	\end{minipage}
	\begin{minipage}{0.49\linewidth}
		\centering
		\includegraphics[width=1\linewidth,height=0.6\linewidth]{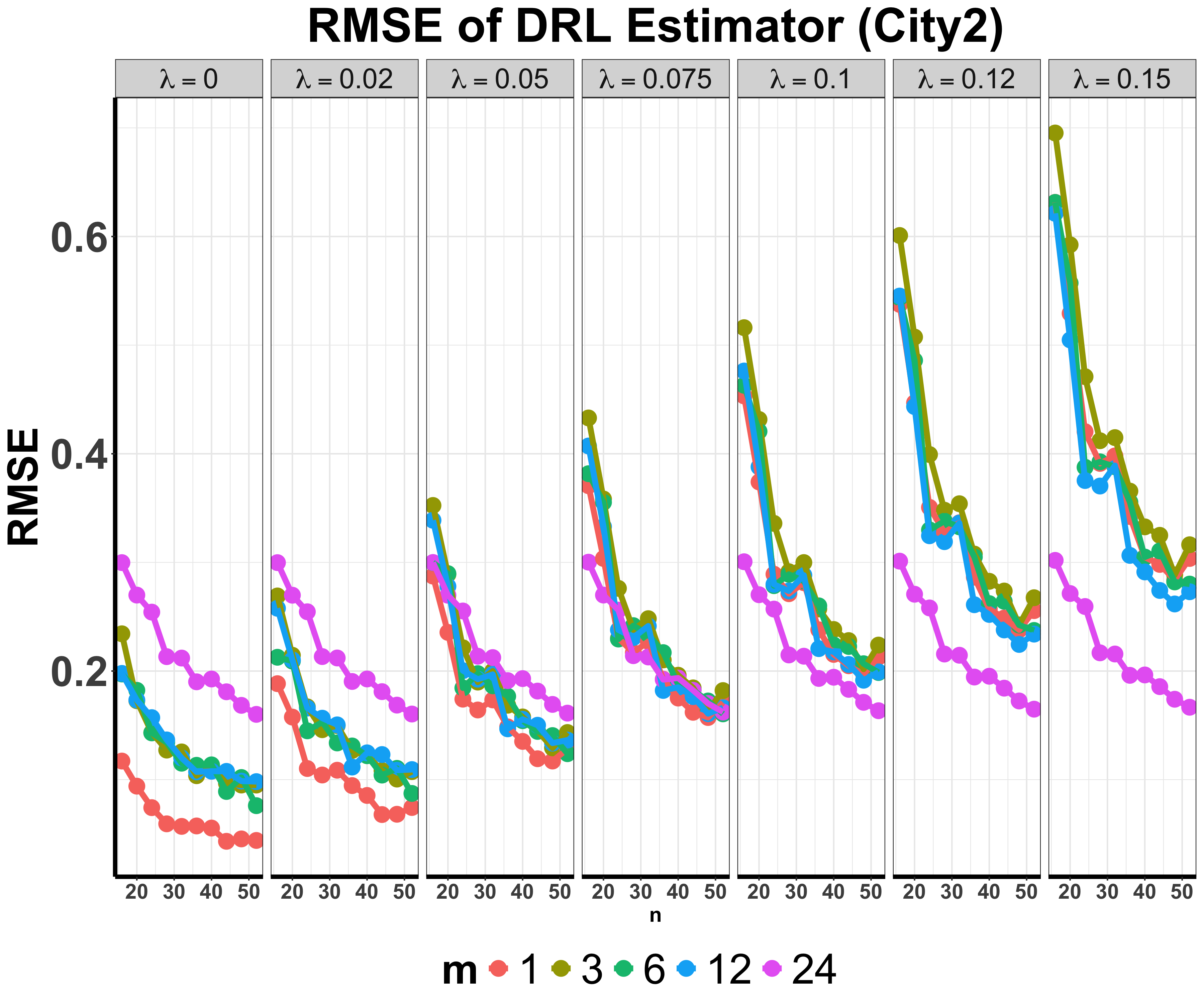}
	\end{minipage}

	\caption{Results from real-data-based simulation.
    }\label{fig_real_data_rmse}
	\vspace{-4mm} 
\end{figure}

\section{Discussion}\label{sec::prac_guide}
This paper studies switchback designs under an RL framework. To offer practical guidance, we outline a workflow in this section (see also Figure \ref{fig:prac_guide}): 
\begin{enumerate}[leftmargin=*]
    \item[(i)] The first step is to discretize the data to define appropriate time intervals, ensuring that both the state and reward follow the Markov assumption (see Section \ref{sec:model}). This assumption can be tested via existing state-of-the-art methods \citep[see e.g.,][]{chen2012testing,shi2020does,zhou2023testing}. When the Markov assumption is violated, it is necessary to increase the length of time intervals accordingly to satisfy this condition. 
    \item[(ii)] The second step is to assess the magnitude of the carryover effect. Should the effect be strong, AD is recommended. In our numerical studies, we observe that with a large carryover effect, the ATE estimator under SB suffers from much a larger bias than that under AD. 
    \item[(iii)] Finally, if  the carryover effect is weak, we proceed to analyze error correlations. When errors exhibit positive correlations, we recommend to employ the switchback design with $m=1$. In cases where errors are uncorrelated or negatively correlated, AD would be preferred. 
\end{enumerate}

\begin{figure}[!t]
	\centering
\includegraphics[width=0.4\textwidth,height=0.16\textwidth]{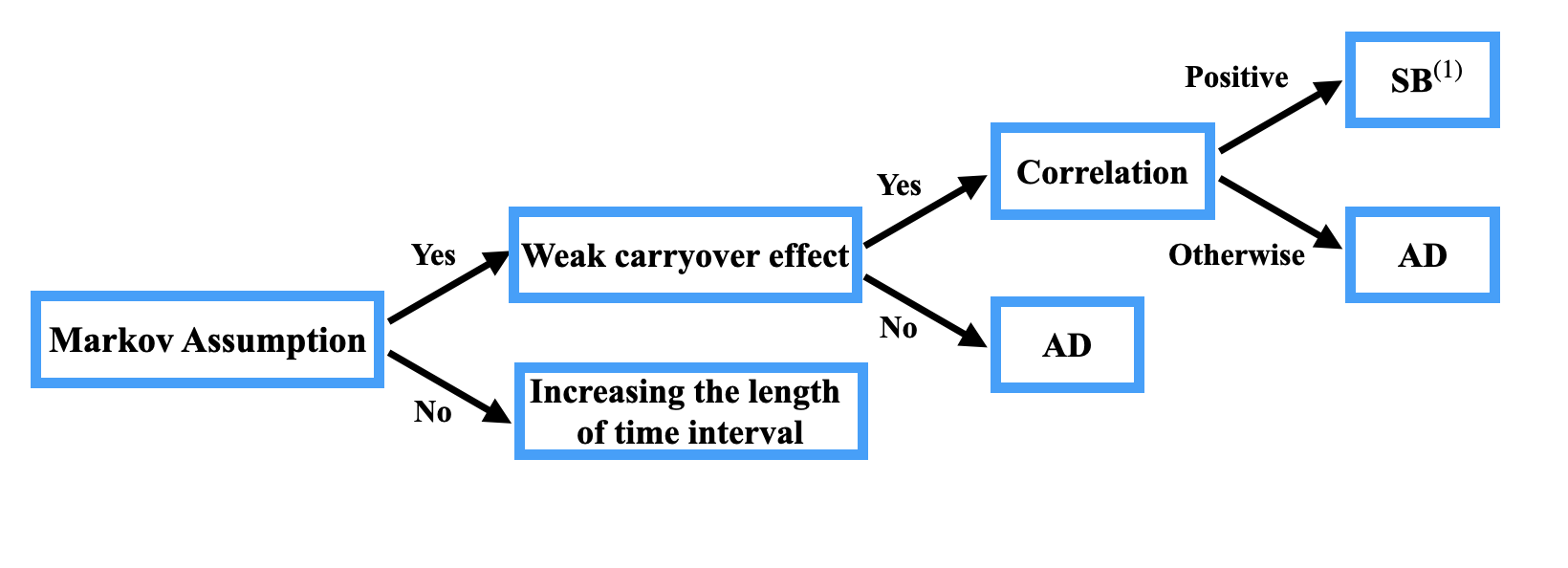}\vspace{-2em}
	\caption{{The  proposed workflow guideline. }}
	\label{fig:prac_guide}	\vspace{-1em}
\end{figure}

We also remark that in our motivating ridesharing example, there are three different types of experiments: (i) temporal randomization (over time), (ii) spatial randomization (across geographic areas), and (iii) user-level randomization (across drivers/passengers). Our primary focus is (i), which applies to the evaluation of order dispatch policies that must be implemented city-wide, making (ii) and (iii) unsuitable. Spatial randomization (ii) is typically used for testing localized subsidy policies in different regions, while user-level randomization (iii) applies when assigning personalized subsidies to individual users. 


When restricting to temporal randomization, the population corresponds to the entire time horizon, with each time interval representing an individual unit. When no carryover effects exist and residuals are uncorrelated (satisfying i.i.d. assumptions), similar to Theorem \ref{thm::main}, we can show that standard uniform randomization over time is equivalent to both AD and SB. This is because temporal ordering becomes irrelevant under the uncorrelatedness assumption. Similarly, when randomizing is conducted at the daily level rather than per time unit, standard A/B testing procedures that ignore carryover effects are equivalent to AD designs.

Finally, confidence intervals (CIs) and $p$-values are equally important metrics, as A/B testing is fundamentally a statistical inference problem. Our experimental designs are tailored to minimize the MSE of the resulting ATE estimators. A closer examination of the proof of Theorem~\ref{thm::main} reveals that the three RL-based estimators are asymptotically normal, with their MSEs primarily driven by their asymptotic variances. Consequently, optimal designs that minimize the variance of the ATE estimator also reduce CI length and enhance the power of the corresponding hypothesis test. We empirically validate this claim using real-data-based simulation in Figure \ref{fig:CI_mean_widths}. 


\section*{Acknowledgement}
We thank the anonymous referees and the meta reviewer for their constructive comments, which have led to a significant improvement of the earlier version of this article. Shi's research is partly supported by an EPSRC grant EP/W014971/1. Yang's research is partly supported by the National Natural Science Foundation of China (No. 72301276). Wen and Tang's research is partly supported by the National Key R\&D Program of China (No. 2022YFA1003701). 

\section*{Impact Statement}
This paper investigates switchback experiments in modern digital platforms, focusing on the key factors that determine their efficiency. The study finds that the effectiveness of switchback designs is primarily influenced by the autocorrelation structure and the magnitude of carryover effects, with an inherent trade-off that applies to most state-of-the-art RL estimators. Additionally, a practical workflow is proposed to improve the methodological approaches for both researchers and practitioners. However, it is important to note that switchback designs are not universally applicable and may be unsuitable in the following cases: (i) lack of temporal dependence, (ii) significant carryover effects, (iii) complex interactions across users or regions, (iv) high switching costs, and (v) rapidly changing systems. Researchers should carefully evaluate these limitations and apply switchback designs with caution.

\nocite{langley00}

\bibliography{references_paper}
\bibliographystyle{icml2025}

\newpage
\appendix
\onecolumn
\renewcommand{\thefigure}{S\arabic{figure}} 
\renewcommand{\thetable}{S\arabic{table}}   
\setcounter{figure}{0} 
\setcounter{table}{0}  

\section{Implementation Details}
\label{sec:implementation}
In this section, we first detail the parametric estimation of the model-based method using the OLS approach, as described in Section \ref{subsec:methods}. Next, we introduce a modified LSTD estimator which is designed to improve the efficiency of the original LSTD estimator and is implemented in our numerical study. Finally, we detail our implementation of the DRL estimator, focusing on the estimation of the IS ratio and the value function.

\textit{\textbf{OLS Estimation.}} 
Given the observational data $\{(S_{i,t}, A_{i,t}, R_{i,t}):1\le t\le T\}_{i=1}^n$, for each $1\le t\le T$, we deploy the OLS regression to the dataset $\{(S_{i,t},A_{i,t},R_{i,t}):1\le i\le n\}$ with $R_{i,t}$ as the response and $(1, S_{i,t},A_{i,t})$ as the predictor to compute the estimators $\widehat{\alpha}_t$, $\widehat{\beta}_t$ and $\widehat{\gamma}_t$. Similarly, we apply OLS to $\{(S_{i,t},A_{i,t},S_{i,t+1}^{(j)}):1\le i\le n\}$ with the $j-$th component of $S_{i,t+1}$, $S_{i,t+1}^{(j)}$ as the responses and $(1, S_{i,t},A_{i,t})$ as the predictor to estimate the $j$th element of $\phi_t$, the $j$th element of   $\Gamma_t$, as well as the $j$th row of $\Phi_t$. Concatenating all the estimators across $j$ produces $\widehat{\phi}_t$, $\widehat{\Gamma}_t$ and $\widehat{\Phi}_t$. With these estimators in hand, we plug them into \eqref{ate_est_formula} to compute the final estimators $\textrm{ATE}_{\textrm{AD}}$ and $\textrm{ATE}_{\textrm{SB}}^{(m)}$.

\textit{\textbf{Modified LSTD}}. In the original LSTD algorithm described in Section \ref{subsec:methods}, the value function at a specific time $t$ is estimated using only the data subset corresponding to that particular time. This approach might be inefficient when the system dynamics remain relatively consistent over time. To enhance estimation efficiency, we incorporate the time index into the state, resulting in an augmented state, denoted as $\widetilde{S}_t$ for each time 
$t$. We then approximate the value function using a linear combination of sieves. It is important to note that the basis functions $\varphi$ contains not only the bases for the original state but also those for the time component, addressing potential nonstationarity.

To lay down the foundation, for any $t$, $a$ and $\tilde{s}$, we first define a value function $V_{t:t}^a(\tilde{s})=\Mean^a (R_t|\widetilde{S}_t=\tilde{s},A_t=a)$. Next, we recursively define $V_{t-j:t}^a(\tilde{s})=\Mean^a [R_{t-j}+V_{t-j+1:t}^a(\widetilde{S}_{t-j+1})|\widetilde{S}_t=\tilde{s},A_t=a]$ for $j=1,2,\ldots,t-1$. Essentially, for any $t_1\le t_2$, $V_{t_1:t_2}^a$ represents the expected cumulative reward from time $t_1$ to $t_2$ starting from a given state at time $t_1$. Additionally, the final ATE estimator can be represented by $\Mean [V_{1:T}^1(\widetilde{S}_1)-V_{1:T}^0(\widetilde{S}_1)]$. 

It remains to estimate these doubly indexed value functions. A key observation is that, with the time index included in the state to account for nonstationarity, $V_{t_1:t_2}^a$ shall equal $V_{t_3:t_4}^a$ provided the time gaps $t_2-t_1$ and $t_4-t_3$ are equal. This allows us to aggregate all data over time to simultaneously estimate all value functions. 

Under the $m-$switchback design, our first step is to estimate $\{V_{t:t}^a\}_t$ by solving the following estimating equation:
\begin{eqnarray}\label{eqn:estVtt}
	\sum_{i,t} [R_{i,t}-\varphi^\top(\widetilde{S}_{i,t})\widehat{\theta}_{0,a,m}]\varphi(\widetilde{S}_{i,t})\mathbb{I}(A_{i,t}=a)=0.
\end{eqnarray}
From this, we compute $\widehat{V}_{t:t}^{a,m}(\tilde{s})$ as $\varphi^\top(\tilde{s}) \widehat{\theta}_{0,a,m}$. Next, we sequentially compute $\{V_{t-1:t}^a\}_t$,$\{V_{t-2:t}^a\}_t$, 
and so forth. Specifically, for each $j=1,\ldots,T-1$, we recursively solve the following estimating equation,
\begin{eqnarray}\label{eqn:estVt-jt}
	\sum_{i,t} [R_{i,t-j}+\varphi^\top(\widetilde{S}_{i,t-j+1})\widehat{\theta}_{j-1,a,m}-\varphi^\top(\widetilde{S}_{i,t-j})\widehat{\theta}_{j,a,m}]\varphi(\widetilde{S}_{i,t})\mathbb{I}(A_{i,t}=a)=0.
\end{eqnarray}
This leads to the derivation of $\widehat{\theta}_{j,a,m}$, based on which we set $\widehat{V}_{t-j:t}^{a,m}(\tilde{s})$ to $\varphi^\top(\tilde{s}) \widehat{\theta}_{j,a,m}$. Finally, we set the ATE estimator to 
\begin{eqnarray}\label{eqn:estmlstd}
	\frac{1}{n}\sum_{i=1}^n\varphi^\top(\widetilde{S}_{i,1})(\widehat{\theta}_{T-1,1,m}-\widehat{\theta}_{T-1,0,m}).
\end{eqnarray}
We provide a pseudocode in Algorithm \ref{algo:algo_mlstd} to summarize the aforementioned procedure. 

\begin{algorithm}[H]
	\caption{Estimating ATE via the modified LSTD.}\label{algo:algo_mlstd}
\begin{algorithmic}
		\STATE {\bfseries Input:} $\left\lbrace (S_{it}, R_{it}, A_{it}): 1 \leq i \leq n, 1 \leq t \leq T \right\rbrace$.
	\STATE {\bfseries Output:} The ATE estimator. 
	Solve \eqref{eqn:estVtt} to obtain $\widehat{\theta}_{T,a,m}$.\\
	\FOR{$t=1$ {\bfseries to} $T-1$}
	\STATE Solve \eqref{eqn:estVt-jt} to obtain  $\widehat{\theta}_{T-t,a,m}$.
	\ENDFOR
	\STATE{\bfseries Return:} The ATE estimator constructed according to \eqref{eqn:estmlstd}. 
\end{algorithmic}
\end{algorithm}

\textit{\textbf{Estimation of the IS ratios}}. We have devised a model-based approach to estimate the marginal IS ratio based on the linear model assumption presented in Equation \eqref{linear_mdp}. It is worth noting that both the numerator and the denominator of the ratio correspond to the marginal probability density/mass function of the state at a given time, given a sequence of past treatments it has received. As a result, we can focus on estimating these marginal density/mass functions to construct the ratio estimator.

Within the framework of the linear model assumption, we can express the state at time $t$, denoted as $S_t$, as follows:
\begin{equation}\label{eqn::IS_model}
	S_t=\sum_{k=1}^{t-1} \left( \Pi_{l=k+1}^{t-1} \Phi_l \right) \phi_k + \left( \Pi_{l=1}^{t-1} \Phi_l \right) S_{1}+{\sum_{k=1}^{t-1} \left( \Pi_{l=k+1}^{t -1} \Phi_l \right) \Gamma_k }A_{k}+\sum_{k=1}^{t -1} \left( \Pi_{l=k+1}^{t-1} \Phi_l \right) E_{k}.
\end{equation}
Where $\Pi_{l=k+1}^{t-1} \Phi_l:= \Phi_{k+1} \cdots \Phi_{t-1}$. Consequently, given a sequence of treatments, we can replace $\{A_k\}_k$ with this treatment sequence to derive the distribution function of $S_t$. To estimate this distribution function, we follow these steps:
\begin{enumerate}
	\item Estimate the model parameters in Equation \eqref{linear_mdp}.
	\item Impose models for the initial state and the residuals $\{E_k\}_k$. In our implementation, we utilize normal distributions, and estimate the mean and covariance matrix parameters within these distributions using the available data. According to \eqref{eqn::IS_model}, this ensures that each $S_t$ follows a normal distribution as well.
	\item Plug the estimated parameters obtained in the first two steps into \eqref{eqn::IS_model} to construct the mean and covariance matrix estimators for $S_t$. 
	\item Return the normal distribution function with the estimated mean and covariance matrix estimators obtained in Step 3. 
\end{enumerate}
A pseudocode summarizing our procedure is presented in Algorithm \ref{algo:algo_ratio}.

\begin{algorithm}[t]
	\caption{Model-based estimation of the IS ratio.}
	\label{algo:algo_ratio}
\begin{algorithmic}
		\STATE {\bfseries Input:} $\left\lbrace (S_{it}, R_{it}, A_{it}): 1 \leq i \leq n, 1 \leq t \leq T \right\rbrace$.
	\STATE {\bfseries Output:} The marginal IS ratio estimator.
	\STATE {\bfseries Initialization:} Calculate the least square estimators $\left\lbrace
	\widehat{\phi}_t \right\rbrace_{t=1}^{T-1}$, $\left\lbrace \widehat{\Phi}_t \right\rbrace_{t=1}^{T-1}$ and $\left\lbrace \widehat{\Gamma}_t \right\rbrace_{t=1}^{T-1}$ in \eqref{linear_mdp}. Impose models for $S_1$ and $\{E_k\}_k$, and estimate the associated model parameters.
	\FOR{$t=1$ {\bfseries to} $T-1$}
			\STATE \textbf{1.} Derive the probability density/mass function of $S_t$ using the aforementioned estimators and \eqref{eqn::IS_model}.
			\STATE \textbf{2.} Estimate the numerator $p_t^{a}$ by replacing $\{A_k\}_k$ in Equation \eqref{eqn::IS_model} with the target policy $a$.
			\STATE \textbf{3.} Estimating the denominator $p_{a,t}^m$ replacing $\{A_k\}_k$ in Equation \eqref{eqn::IS_model} with the treatment sequence under the $m$-switchback design given that $A_t=a$.
		\STATE	\textbf{4.} Calculate the ratio.
		\ENDFOR
\end{algorithmic}
\end{algorithm}

\textit{\textbf{Estimation of the value function}}. We devise a model-based approach to estimate the value function. Given the linear models outlined in \eqref{linear_mdp}, we can readily express the value function as follows:
\begin{equation}\label{eqn::mb-value}
	\begin{split}
		V_t^a(s)&= \sum_{j=t}^T \Mean^a(R_j|S_t=s) =\sum_{j=t}^T [
		\alpha_j+\gamma_j a
		+\beta_j^\top \Mean^a (S_j|S_t=s)]\\
		&=\sum_{j=t}^T \Big\{
		\alpha_j+\gamma_j a
		+\sum_{j=t}^T\beta_j^\top \Big[ 
		\sum_{k=t}^{j-1} (\Pi_{l=k+1}^{j-1} \Phi_l) \phi_k +(\Pi_{l=t}^{j-1} \Phi_l)s +
		\sum_{k=t}^{j-1} (\Pi_{l=k+1}^{j-1} \Phi_l) \Gamma_k a\Big]
		\Big\}.
	\end{split}
\end{equation}
This leads us to the approach of initially applying OLS to estimate the model parameters and subsequently incorporating these estimates into \eqref{eqn::mb-value} to formulate the value function estimators.

\section{Additional Discussions}
\label{sec:add_discussion}


Our analysis is built upon the Markov assumption (MA), which is fundamental to most RL-based estimators. In collaboration with our ride-sharing industry partner, we have observed that intervals of 30 minutes or 1 hour typically satisfy MA, showing strong lag-1 correlations with rapidly decaying higher-order correlations. This justifies the use of RL in our application.

When applied to more general applications, we recommend to properly select the interval length to meet MA as an initial step in the design of experiments (see Fig. \ref{fig:prac_guide}). If that's challenging, we further propose three approaches below, tailored to different degrees of violation of MA. Our current results directly extend to the first two cases, while the third case requires further investigation:
\begin{itemize}
    \setlength{\itemsep}{0pt}  
    \item \textbf{Mild violation:} Future observations depend on the current observation-action pair and a few past observations. This mild violation can be easily addressed by redefining the state to include recent past observations. With this modified state, MA is satisfied. Our RL-based estimators and theoretical results remain valid.
    \item \textbf{Moderate violation:} Future observations depend on a few past observation-action pairs. Here, the RL-based estimators remain applicable if the state includes these historical state-action pairs. However, our theoretical results on optimal designs must be adjusted. Preliminary analyses show that, under weak carryover effects and positively correlated residuals, the optimal switching interval extends to 1+k (where k is the number of included past actions) rather than switching at every time step. This is because each observed reward is affected by a k+1 consecutive actions, not just the most recent action. More frequent switching under these conditions causes considerable distributional shift, inflating the variance of the ATE estimator.
    \item \textbf{Severe violation:}  Data follows a POMDP. Although the existing literature provides doubly robust estimators and AD-like optimal designs \citep{li2023optimal} to handle such non-Markov MDPs, these estimators suffer from the "curse of horizon" \citep{liu2018breaking}. Recent advances propose more efficient POMDP-based estimators \citep{liang2025randomization} and designs \citep{sun2024optimal}; however, these proposals are limited to linear models. Extending these methodologies to accommodate more general estimation procedures (e.g., \citet{uehara2023efficiently}) represents an important direction for future research.
\end{itemize}
We also remark that in the first two cases, existing tests are available for testing the Markov assumption and for order selection \citep{chen2012testing,shi2020does,zhou2023testing}.

\section{Additional Experimental Results}\label{sec:addresults}
In this section, we systematically present the details of our numerical experiments and provide all relevant figures and results discussed in the preceding sections.

\textit{\textbf{DGP (Continued).}} The initial state for each day is drawn from a 3-dimensional multivariate normal distribution with zero mean and an identity covariance matrix. The coefficients in Linear DGP are specified as : $\{\gamma_t\}_t \stackrel{i.i.d.}{\sim } U[0.5,0.8], \quad \{\Gamma_t^{(j)}\}_{t,j} \stackrel{i.i.d.}{\sim} N(0, 0.3^2),\quad \{\Phi_t^{(j_1,j_2)}\}_{t,j_1,j_2}\stackrel{i.i.d.}{\sim }  U[-0.3, 0.3]$ and
\begin{equation*}
	\begin{split}
		&\{\alpha_t\}_t \stackrel{i.i.d.}{\sim }  \begin{cases}
			U[-1,-0.5]&\text{ with probability 0.5 } \\
			U[0.5, 1]&\text{ with probability 0.5 }
		\end{cases} ,	\{\beta_t^{(j)}\}_{t,j}  \stackrel{i.i.d.}{\sim }  \begin{cases}
			U[-0.3, -0.1] & \text{ with probability 0.5 } \\
			U[0.1, 0.3] & \text{ with probability 0.5 } 
		\end{cases}, \\
		&\{\phi_t^{(j)}\}_{t,j} \stackrel{i.i.d.}{\sim }  \begin{cases}
			U[-1,-0.5] & \text{ with probability 0.5 } \\
			U[0.5, 1]  & \text{ with probability 0.5 }
		\end{cases}.  
	\end{split}
\end{equation*}
Here, the superscript $j$ denotes the $j$th component of each vector, while $(j_1,j_2)$ indicates the element in the $j_1$th row and $j_2$th column of each matrix. Both the reward error $e_t$ and the residual in the state regression model $E_t = S_{t+1} - \Mean (S_{t+1}|A_t,S_t)$
are set to mean zero Gaussian noises.  The sequence $\{E_t\}_t$ is set to an i.i.d. multivariate Gaussian error process, with a covariance matrix 1.5 times the identity matrix, and it is independent of $\{e_t\}_t$.

In Nonlinear DGP, we consider the nonlinear reward function: $ r_t(a,s)=\alpha_t +2\beta_t^\top [\sin (sa)+\cos(s)]^2 +3(\beta_t^\top s) \gamma_t a +[ a \gamma_t+\cos(a \gamma_t) ]^2$, where the sine, cosine, and square functions are applied element-wise to each component of the vector. The state regression function remains linear and identical to the one presented in  \eqref{linear_mdp}. All model parameters, including $\{\alpha_t\}_t$, $\{\beta_t\}_t$, $\{\gamma_t\}_t$, $\{\Gamma_t\}_t$, $\{\phi_t\}_t$, $n$ and $T$, are the same as those in the above Linear DGP, with the exception of $\Phi_t^{(j_1, j_2)}\stackrel{i.i.d.}{\sim }  U[-0.6, 0.6]$ for $j_1, j_2=1, 2, 3$.

\graphicspath{{figures_unit/corollary_figs}}
\begin{figure}[ht]
    \centering
    \begin{minipage}{0.3\linewidth}
        \centering
        \includegraphics[width=\linewidth]{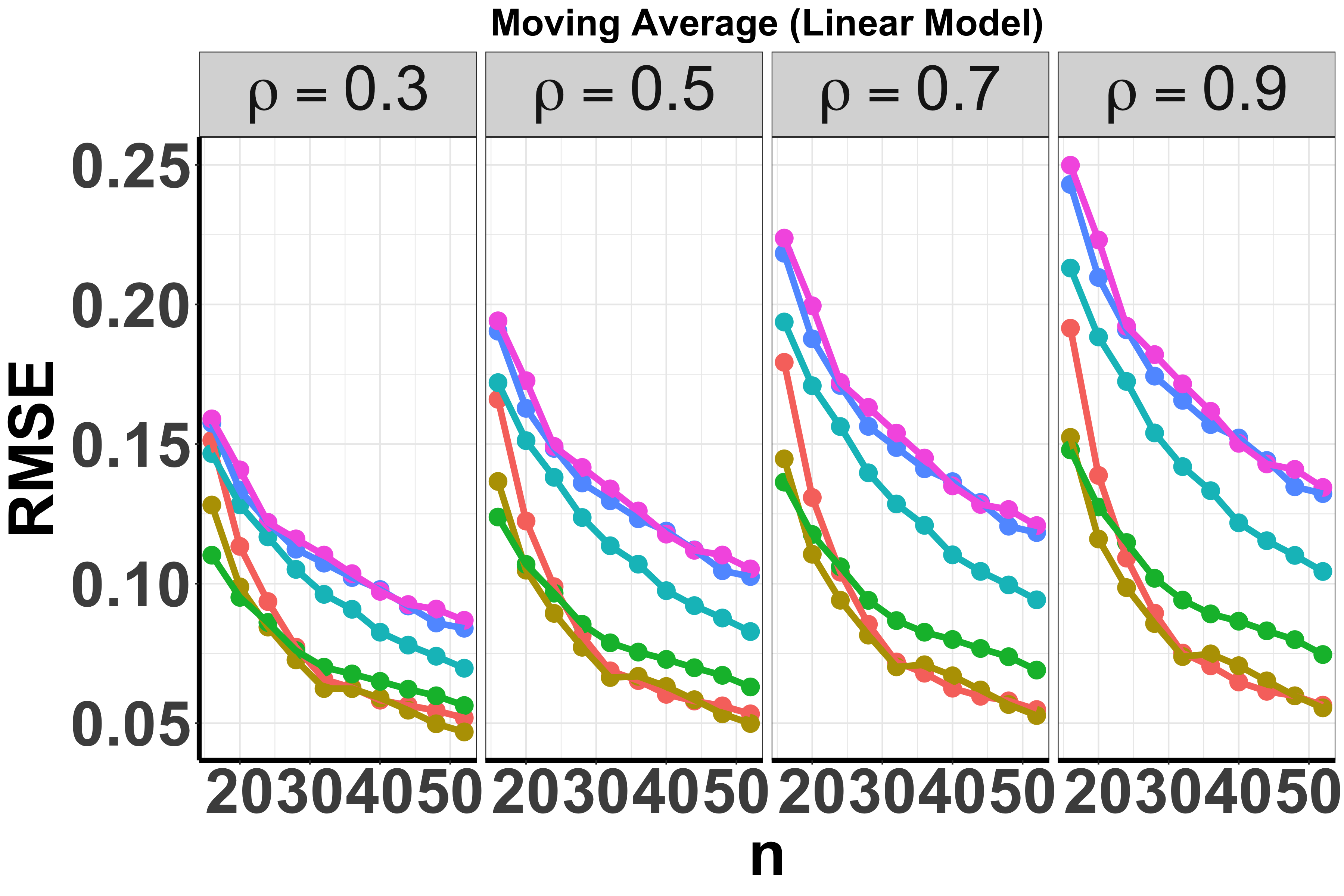}
    \end{minipage}
    \hspace{0.1cm}  
    \begin{minipage}{0.3\linewidth}
        \centering
        \includegraphics[width=\linewidth]{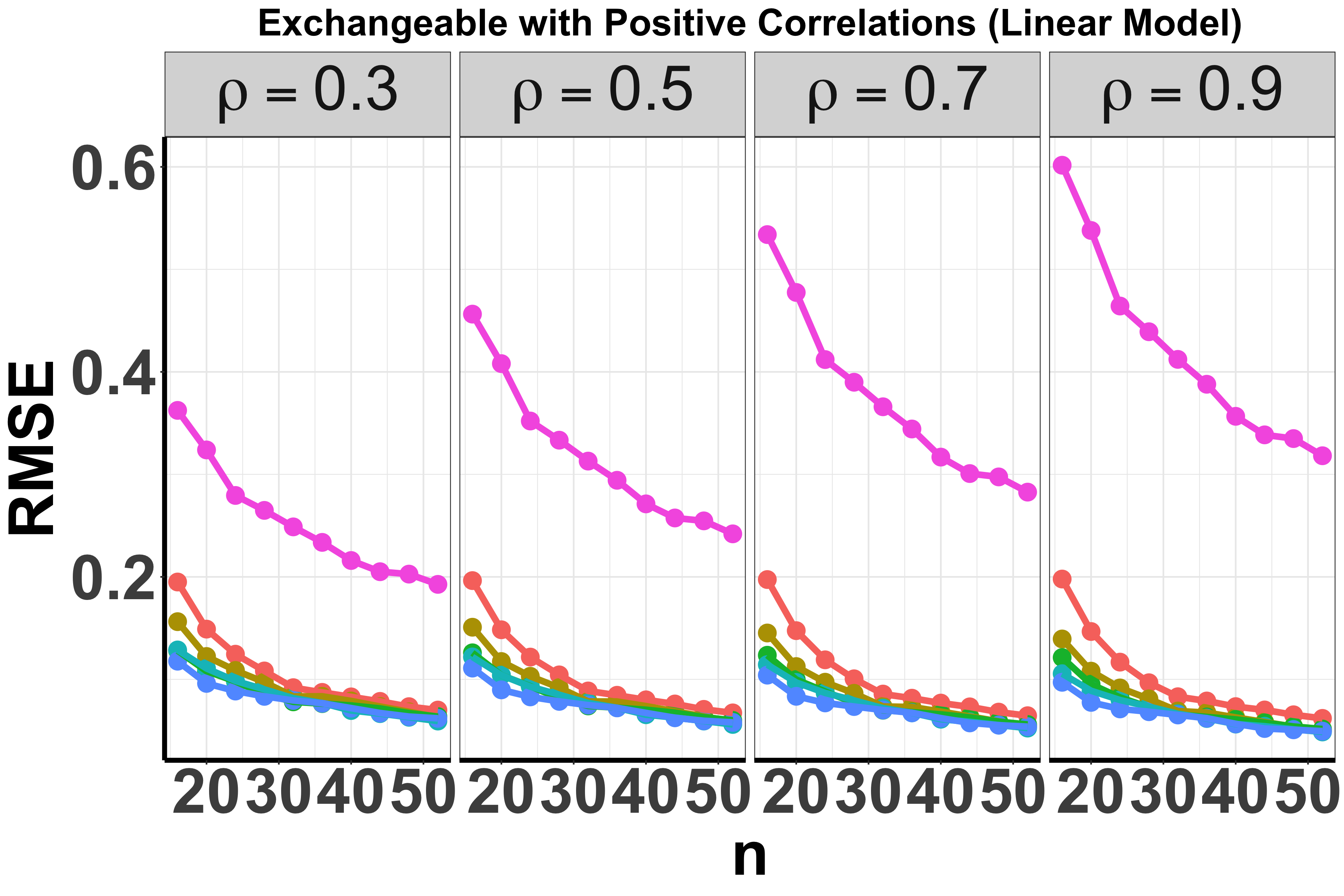}
    \end{minipage}
    
    \vskip0.3\baselineskip  

    \begin{minipage}{0.3\linewidth}
        \centering
        \includegraphics[width=\linewidth]{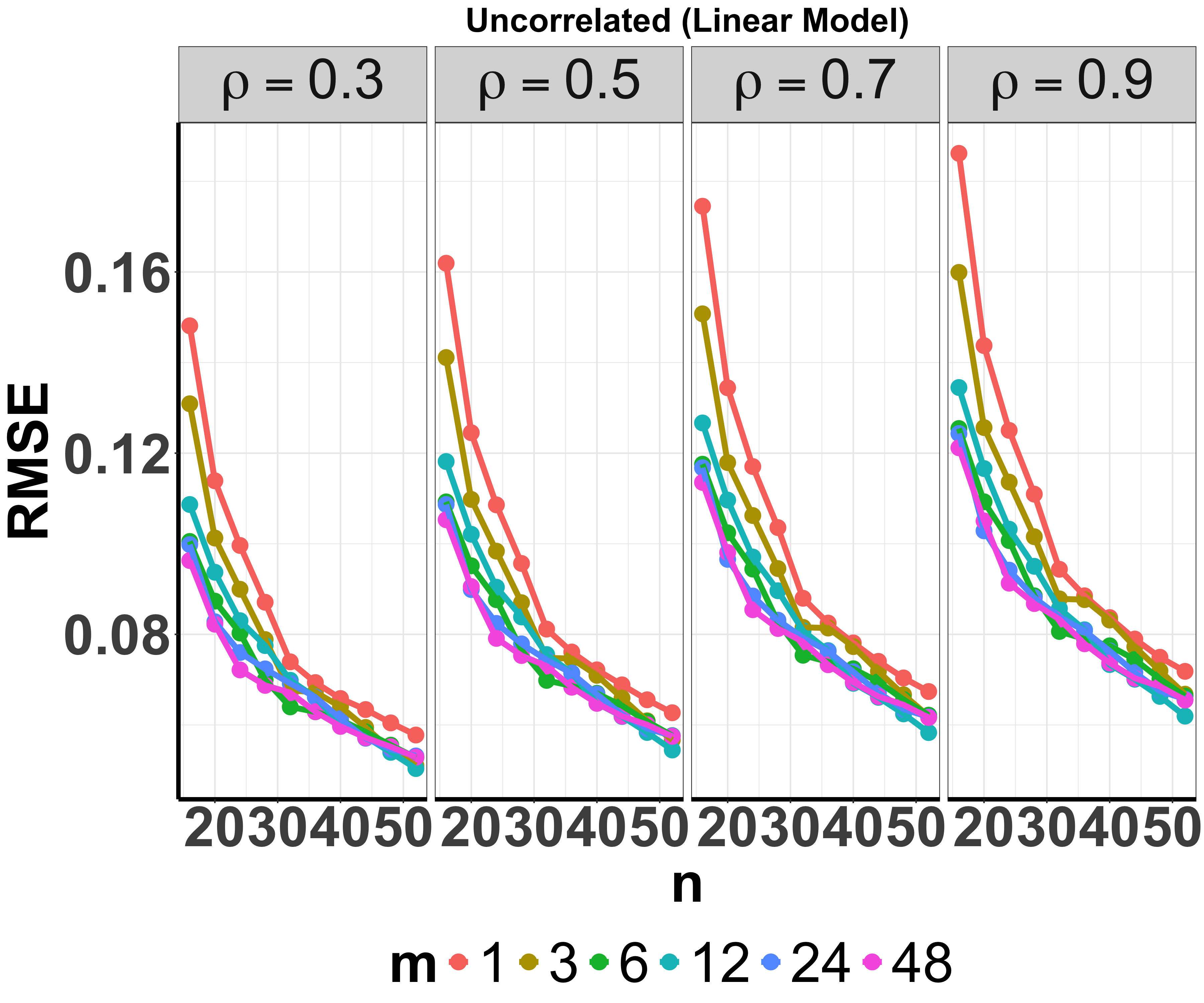}
    \end{minipage}
    \hspace{0.1cm}  
    \begin{minipage}{0.3\linewidth}
        \centering
        \includegraphics[width=\linewidth]{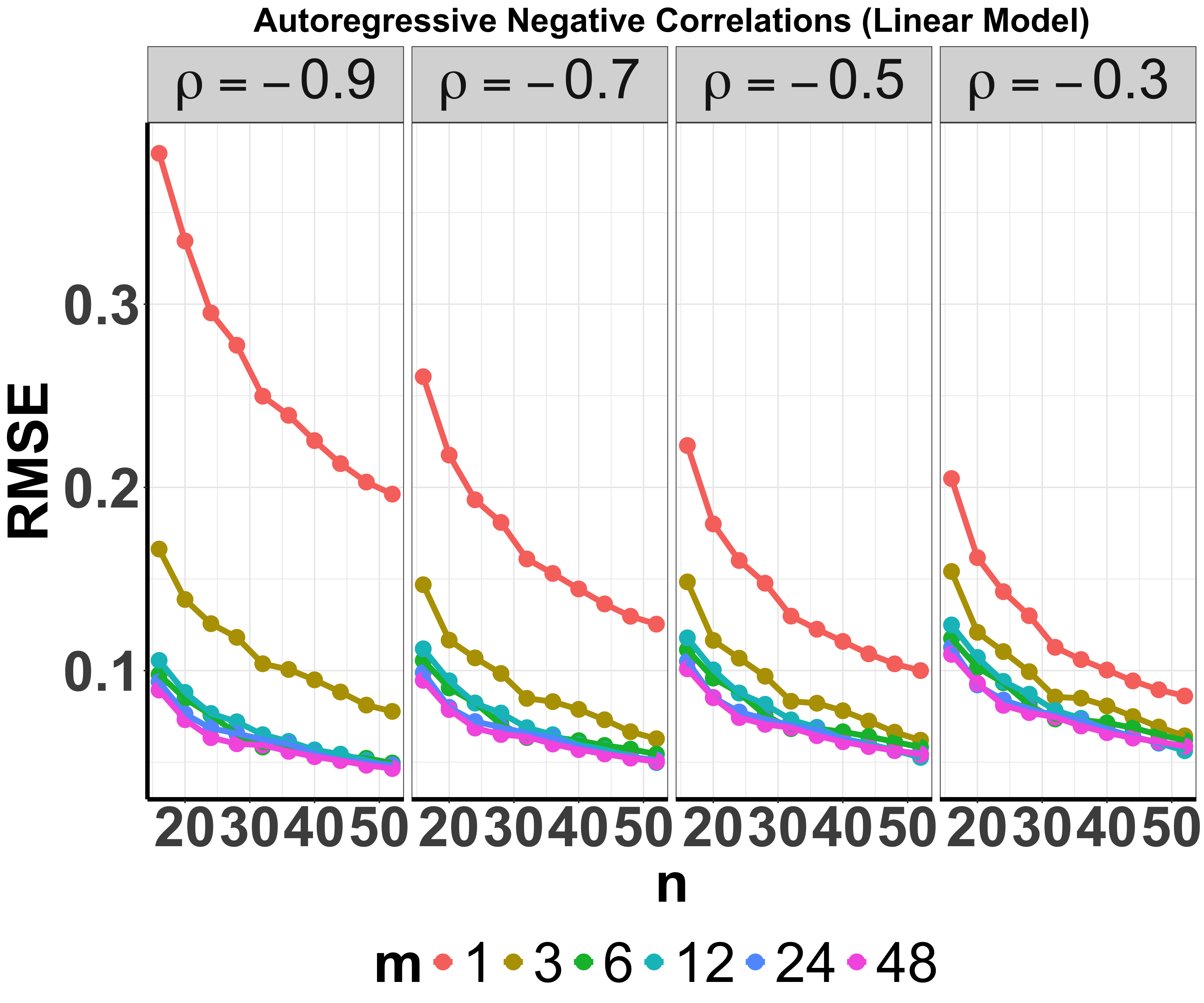}
    \end{minipage}
    
    \caption{{\small Numerical results for the linear DGP: RMSEs of OLS estimator with different combinations of $(n,m,\rho)$ and four covariance structures: moving average (top left), exchangeable with positive correlation (top right), uncorrelated (bottom left), and autoregressive with negative autocorrelation (bottom right).}}
    \label{fig::linear_sensitivity_covariance}
\end{figure}

\graphicspath{{figures_unit/corollary_figs/}}

\begin{figure}[ht]
    \centering
    \begin{minipage}{0.3\linewidth}
        \centering
        \includegraphics[width=\linewidth]{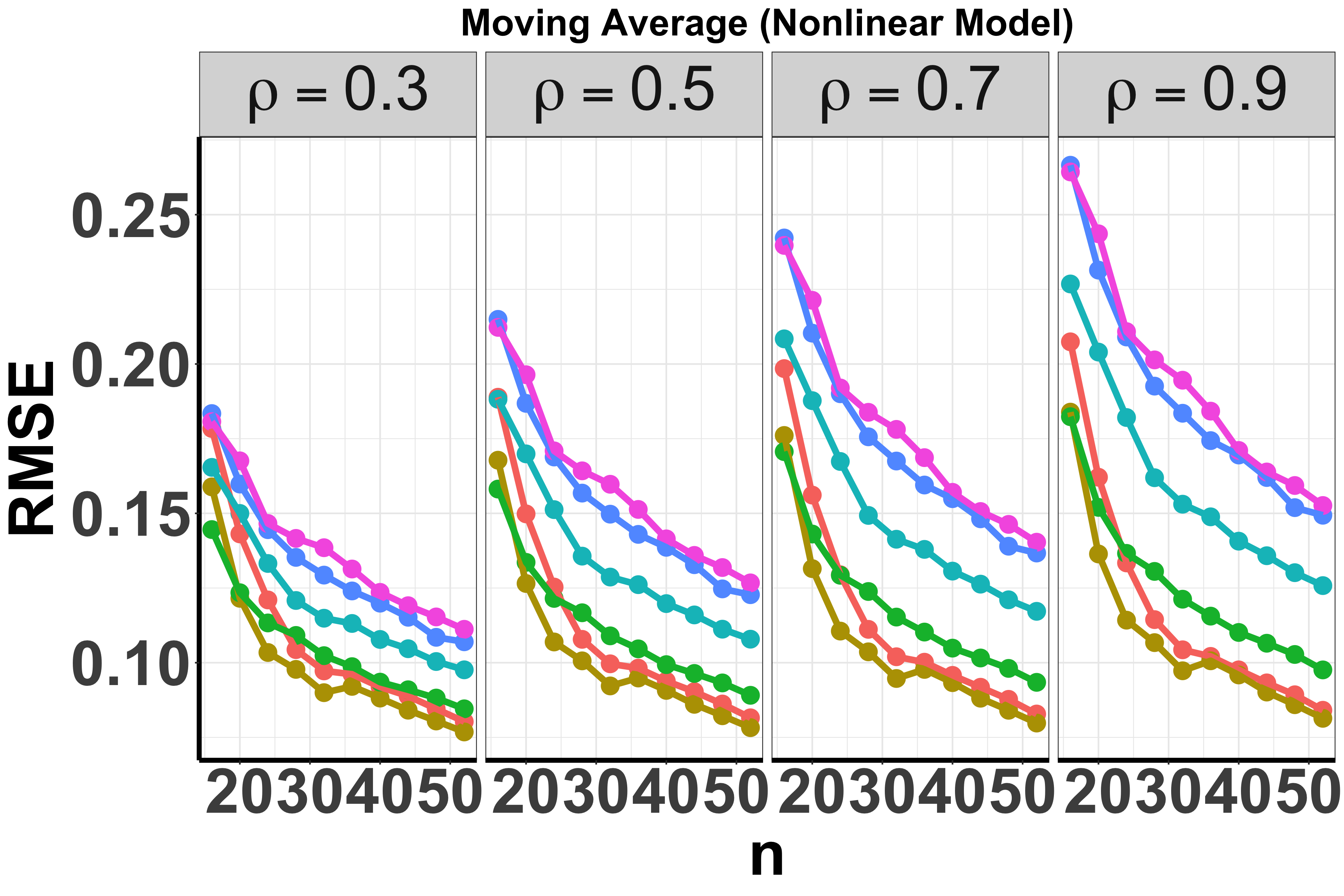}
    \end{minipage}
    \hspace{0.1cm}  
    \begin{minipage}{0.3\linewidth}
        \centering
        \includegraphics[width=\linewidth]{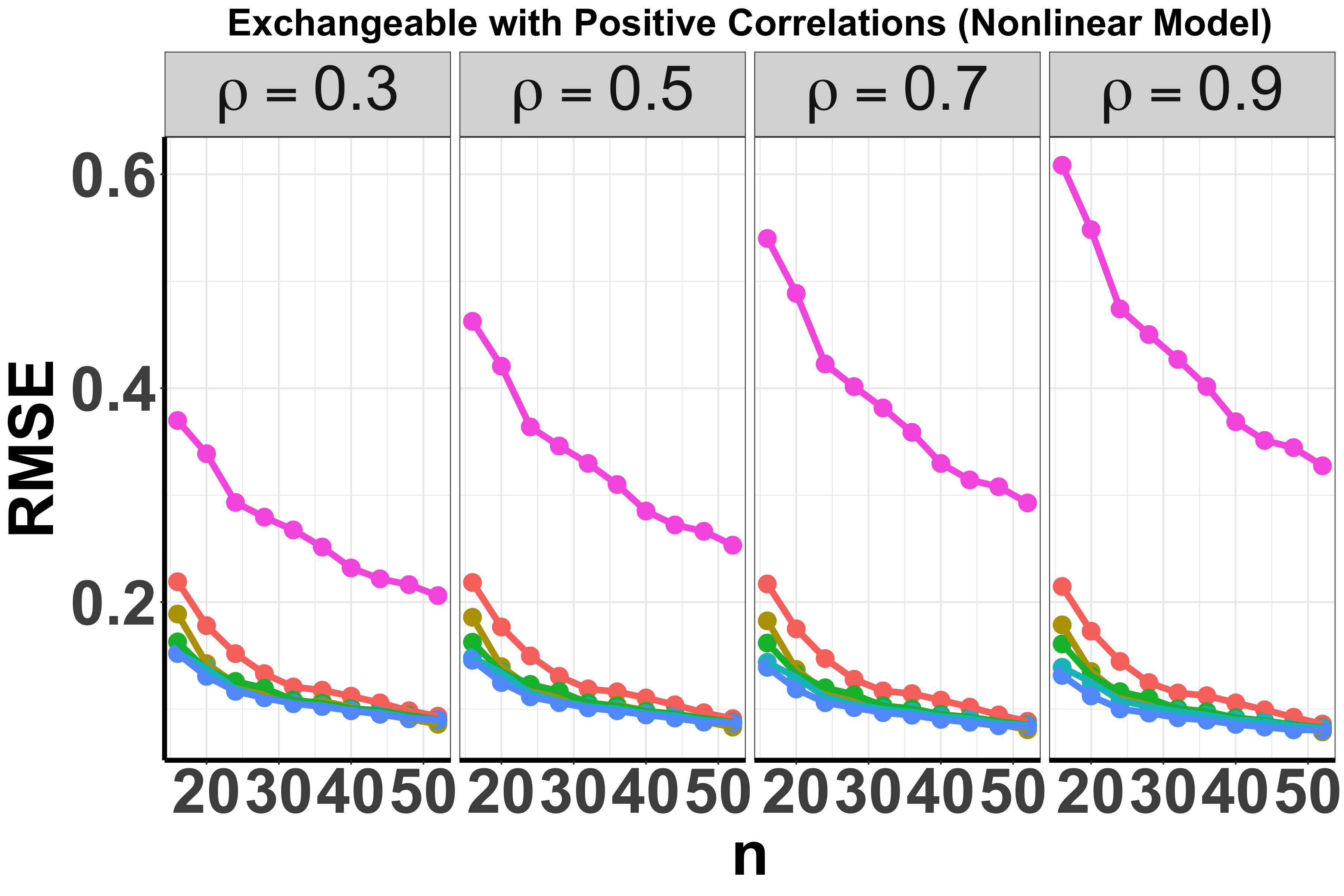}
    \end{minipage}
    
    \vskip0.3\baselineskip  

    \begin{minipage}{0.3\linewidth}
        \centering
        \includegraphics[width=\linewidth]{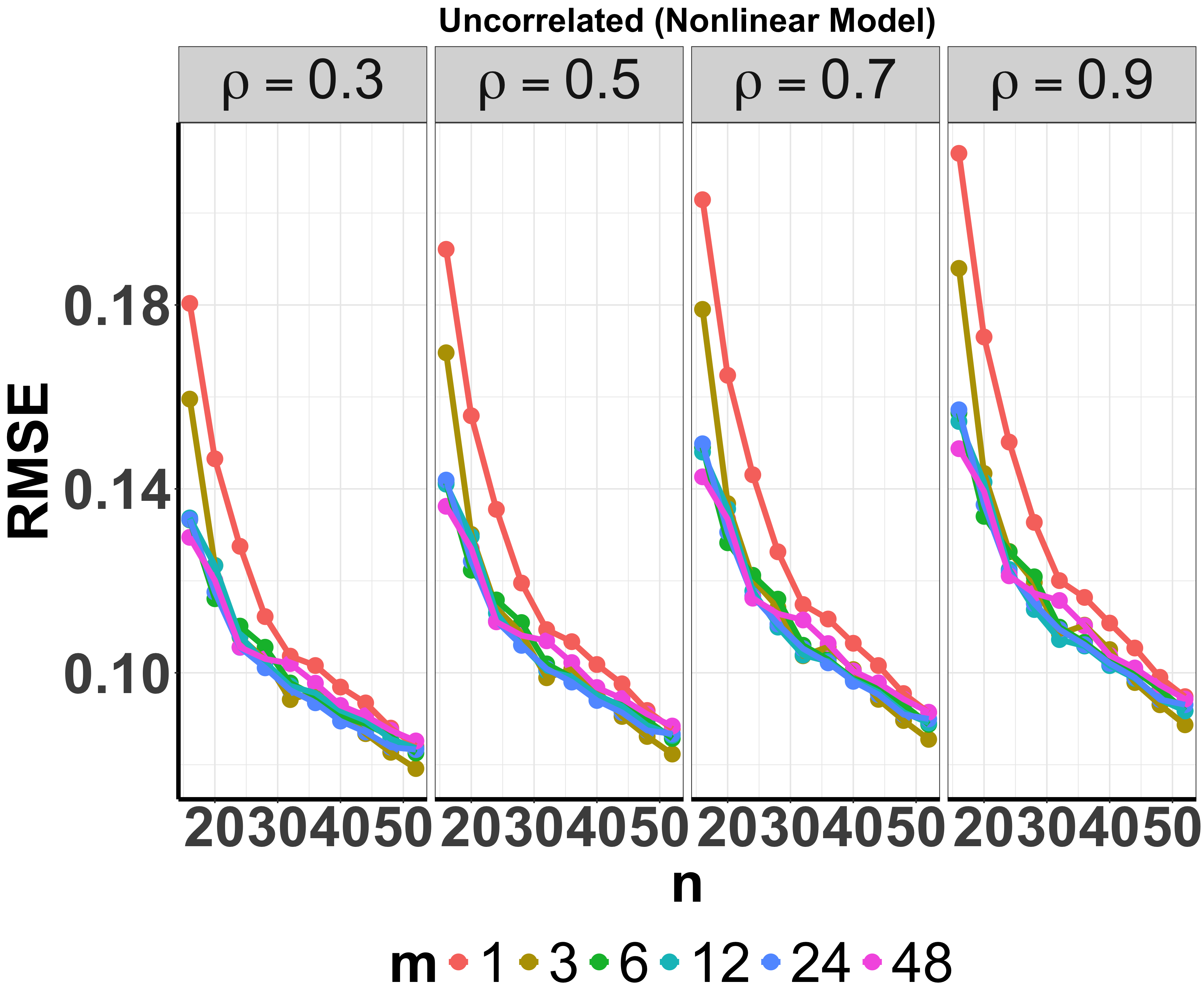}
    \end{minipage}
    \hspace{0.1cm}  
    \begin{minipage}{0.3\linewidth}
        \centering
        \includegraphics[width=\linewidth]{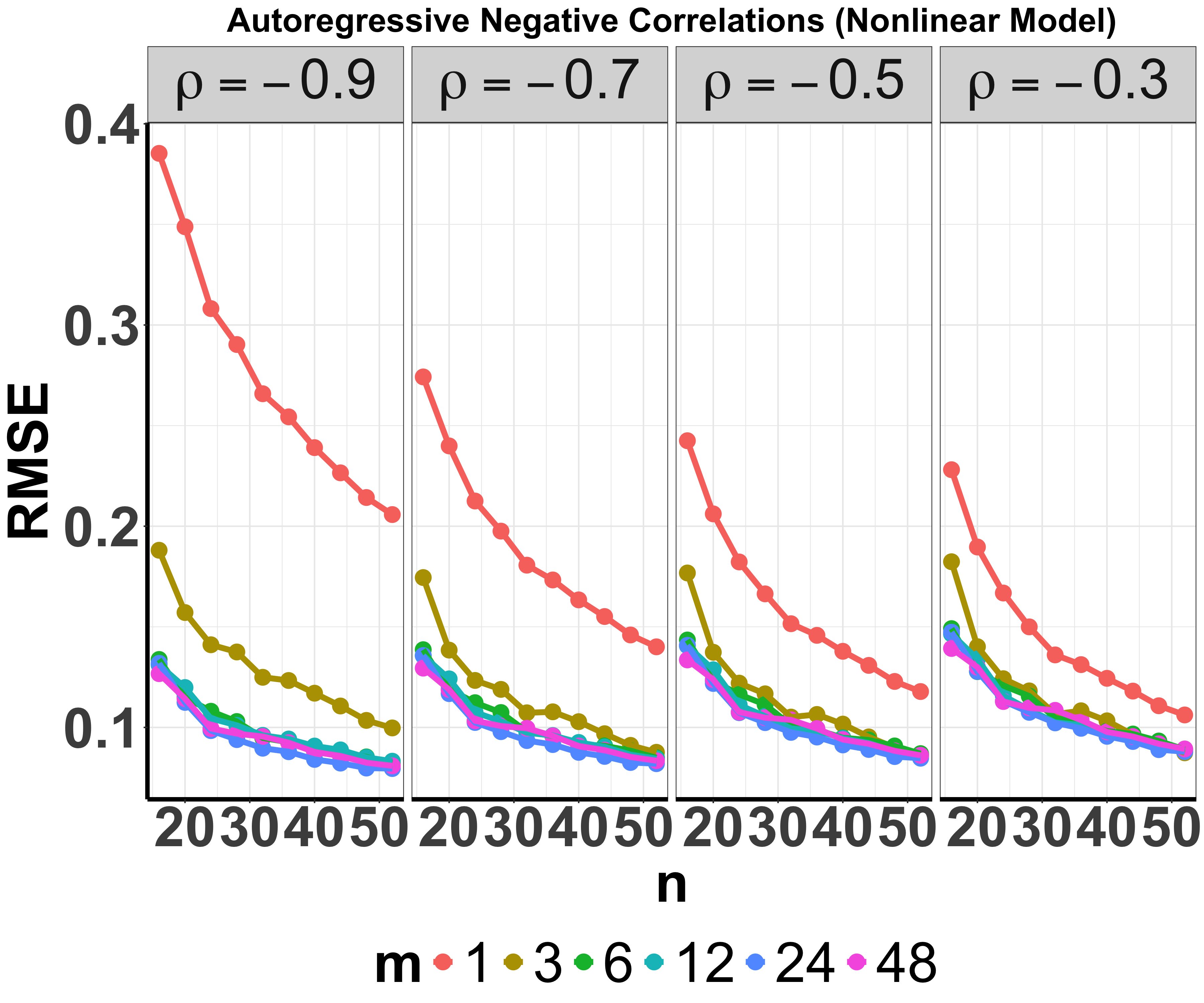}
    \end{minipage}
    
    \caption{{\small Numerical results for the nonlinear DGP: RMSEs of OLS estimator with different combinations of $(n,m,\rho)$ and four covariance structures: moving average (top left), exchangeable with positive correlation (top right), uncorrelated (bottom left), and autoregressive with negative autocorrelation (bottom right).}}
    \label{fig::nonlinear_sensitivity_covariance}
\end{figure}

\textbf{\textit{Comparison (Continued).}}
We compare our ATE estimators with three non-RL alternatives under the regular Bernoulli switchback design (see Definition~\ref{def_switch}). The first is proposed by \citet{bojinov2023design}. While it similarly relies on a hyperparameter $m$ that determines the duration of each policy application, their design differs in that, after applying a treatment for $m$ time steps, there is a 50\% chance of continuing with the same treatment or switching to the alternative. They employ multi-step importance sampling to construct their ATE estimator and show that the optimal choice of $m$ depends on the duration of the carryover effect, which is infinite in the MDP setting.

The second is the ``burn-in'' difference-in-mean estimator of \citet{hu2022switchback}, and the third is the simple importance sampling (IS) estimator of \citet{Xiong2023}. 

We begin by introducing the regular Bernoulli switchback design \citep{hu2022switchback}.

\begin{defs}\label{def_switch}
Given a time horizon $T$ and a block length $m \geq 1$ such that $K := T/m$ is a positive integer, the regular switchback design assigns treatment sequentially as follows:
\begin{equation*}
	A_{t} \mid A_{t-1} = 
	\begin{cases}
		\text{Bernoulli}(0.5) & \text{if } t = km+1 \text{ for some } k = 0, 1, \ldots, K-1, \\
		A_{t-1} & \text{otherwise}.
	\end{cases}
\end{equation*}
\end{defs}

We next present the three non-RL estimators under our notation. Under the $m$-carryover assumption, the following multi-step IS estimator is consistent for the ATE:
\begin{equation}\label{estimate:bojinov}
\begin{split}
\widehat{\ATE}^{(m)} =& \frac{1}{nT} \sum_{i=1}^n \Biggl\{ 
\sum_{t=m+1}^T \left[ 
\frac{R_{i,t} \mathbb{I}(A_{i,t-m:t} = \1_{m+1})}{\prob(A_{i,t-m:t} = \1_{m+1})} 
- \frac{R_{i,t} \mathbb{I}(A_{i,t-m:t} = \0_{m+1})}{\prob(A_{i,t-m:t} = \0_{m+1})} 
\right] \\
&+ \sum_{t=1}^m \left[ 
\frac{R_{i,t} \mathbb{I}(A_{i,1:t} = \1_t)}{\prob(A_{i,1:t} = \1_t)} 
- \frac{R_{i,t} \mathbb{I}(A_{i,1:t} = \0_t)}{\prob(A_{i,1:t} = \0_t)} 
\right] 
\Biggr\},
\end{split}
\end{equation}

where $A_{t_1:t_2} = (A_{t_1}, A_{t_1+1}, \ldots, A_{t_2})^\top$ for $1 \leq t_1 < t_2 \leq T$. As shown in \citet{bojinov2023design}, the optimal block length equals $m$ under this design.

We also include the following two estimators: the ``burn-in'' difference-in-mean estimator (with burn-in length $0 \leq b < m$) and the simple IS estimator:
\begin{equation*}
\begin{split}
\widehat{\ATE}^{m,b} &= \frac{1}{n} \sum_{i=1}^n 
\left[ 
\frac{1}{K_1 (m-b)} \sum_{k: A_{km+1} = 1} \sum_{s=b}^{m} R_{i, km+s}
- \frac{1}{K_0 (m-b)} \sum_{k: A_{km+1} = 0} \sum_{s=b}^{m} R_{i, km+s}
\right], \\
\widehat{\ATE}^{m,\text{IS}} &= \frac{4}{nT} \sum_{i=1}^n \sum_{t=1}^T \left(A_{it} - \frac{1}{2}\right) R_{it},
\end{split}
\end{equation*}
where $K_a = \left| \{ k \in \{0, \ldots, K-1\} : A_{km+1} = a \} \right|$ for $a \in \{0,1\}$.

We adopt the same experimental setup as in the linear/nonlinear DGP setting (see Subsection~\ref{subsec:linearDGP}), using a burn-in length $b = 1$, and report the log(MSE) values of various ATE estimators under different designs and choices of $m$ in Figure~\ref{fig:compare_logmse_linear_nonlinear}. The results clearly demonstrate the superior performance of our ATE estimators over those based on the alternative designs.

\textbf{\textit{Real Data-based Simulation (Continued).}} The first dataset covers the period from Dec. 5th, 2018, to Jan. 13th, 2019, with thirty minutes defined as one time unit, resulting in $T=48$. The second dataset spans from May 17th, 2019, to June 25th, 2019, with one-hour time units, leading to $T=24$. A summary of the bootstrap-assisted procedure is provided in Algorithm \ref{algo:res_bootstrap}. Specifically, for each dataset, we first fit the data based on the linear models in \eqref{linear_mdp}. We apply ridge regression to estimate the regression coefficients with the regularization parameter determined by minimizing the generalized cross-validation criterion \citep{wahba1975smoothing}.  
This yields the estimators $\{\widehat{\alpha}_t\}_t$, $\{\widehat{\beta}_t\}_t$, $\{\widehat{\phi}_t\}_t$ and $\{\widehat{\Phi}_t\}_t$. However, $\{\gamma_t\}_t$ and $\{\Gamma_t\}_t$ remain unidentifiable, since $A_t=0$ almost surely. We then calculate the residuals in the reward and state regression models based on these estimators as follows:  
\begin{equation}\label{eqn:residuals}
	\widehat{e}_{i,t}=R_{i,t}-\widehat{\alpha}_t-S_{i,t}^\top \widehat{\beta}_t, \quad \widehat{E}_{i,t}=S_{i,t+1}-\widehat{\phi}_{t}- \widehat{\Phi}_t S_{i,t}.
\end{equation}
To generate simulation data with varying sizes of treatment effect, we introduce the treatment effect ratio parameter $\lambda$ and manually set the treatment effect parameters {$\widehat{\gamma}_t=\delta_1\times (\sum_i R_{i,t}/(100\times N))$ and $\widehat{\Gamma}_t=\delta_2\times (\sum_i S_{i,t}/(100\times N))$}. The treatment effect ratio essentially corresponds to the ratio of the ATE and the baseline policy's average return. A discussion on the choice of the treatment effect ratio can be found in subsection \ref{sec:realdata}.

Finally, to create a dataset spanning $n$ days, actions are generated according to the chosen design as described in Section \ref{sec:model}. We then sample i.i.d. standard Gaussian noises $\{\xi_i\}_{i=1}^n$. For the $i$-th day, we uniformly sample an integer $I\in \{1,\dots, N\}$, set the initial state to $S_{I,1}$, and generate rewards and states according to \eqref{linear_mdp} with the estimated $\{\widehat{\alpha}_t\}_t$, $\{\widehat{\beta}_t\}_t$, $\{\widehat{\phi}_t\}_t$, $\{\widehat{\Phi}_t\}_t$, the specified $\{\widehat{\gamma}_t\}_t$ and $\{\widehat{\Gamma}_t\}_t$, and the error residuals given by $\{\xi_i \widehat{e}_{i,t}:1\le t\le T\}$ and $\{\xi_i \widehat{E}_{i,t}:1\le t\le T\}$, respectively. This ensures that the error covariance structure of the simulated data closely resembles that of the real datasets. Based on the simulated data, we similarly apply OLS, LSTD and DRL to estimate the ATE and compare their MSEs under various switchback designs. 

\begin{algorithm}[t]
	\caption{Bootstrap-based Simulation}\label{algo:res_bootstrap}
	\begin{algorithmic}[1]
		\STATE {\bfseries Input:} Real data $\left\lbrace (S_{it}, R_{it}): 1 \leq i \leq n; 1 \leq t \leq T \right\rbrace$, the adjustment parameters for the ratios $(\delta_1, \delta_2)$, $m$-switchback design, the bootstraped sample size $n$, random seed, the bootstrapped size $B$.
		\STATE {\bfseries Output:} The RSMEs, Biases, and SDs of different ATE estimators.
		\STATE {\bfseries Step 1:} Calculate the least square estimates $\left\lbrace \widehat{\alpha}_t \right\rbrace$, $\left\lbrace \widehat{\beta}_t \right\rbrace$, $\left\lbrace \widehat{\phi}_t \right\rbrace$, $\left\lbrace \widehat{\Phi}_t \right\rbrace$ in the model (\ref{linear_mdp}), treatment effect parameters $\left\lbrace \widehat{\gamma}_t \right\rbrace$, $\left\lbrace \widehat{\Gamma}_t \right\rbrace$, and the residuals of the reward model and state regression model by Equation \eqref{eqn:residuals}.
		\FOR{$b = 1$ {\bfseries to} $B$}
		\STATE Sample the number of days $n$ from $\left\lbrace 1, \ldots ,N \right\rbrace$ with replacement and generate i.i.d. normal random variables $ \xi_{i}^b \sim N(0,1)$.
		\STATE Generate pseudo rewards $\left\lbrace \widehat{R}_{i,t}^b \right\rbrace_{i,t}$ and states $\left\lbrace \widehat{S}_{i,t}^b \right\rbrace_{i,t}$ using:
		\begin{eqnarray*}
			\widehat{R}_{i,t}^b =  [1, (\widehat{S}_{i,t}^{b})^\top, A_{i,t} ] \begin{pmatrix}
				\widehat{\alpha}_t \\
				\widehat{\beta}_t \\
				\widehat{\gamma}_t
			\end{pmatrix}  + \xi_{i}^b \widehat{e}_{i,t}, \quad 
			\widehat{S}_{i,t+1}^b  =  [\widehat{\phi}_t, \widehat{\Phi}_t, \widehat{\Gamma}_t ] \begin{pmatrix}
				1 \\
				\widehat{S}_{i,t}^{b} \\
				A_{i,t}
			\end{pmatrix} + \xi_{i}^b \widehat{E}_{i,t}.
		\end{eqnarray*}
		\STATE Calculate the set of estimators $\left\lbrace \textrm{ATE}_{\text{SB}}^{(m),b} \right\rbrace_{b,m}$ by OLS, LSTD, and DRL.
		\ENDFOR
	\end{algorithmic}
\end{algorithm}

\textit{\textbf{Confidence intervals}}. Since we adopt RL-based estimators for A/B testing, existing methods developed in the reinforcement learning literature can be directly applied for CI construction \citep[see e.g.,][Section 5]{shi2025statistical}. In real-data-based simulation, we use nonparametric bootstrap to construct CIs for OLS-based ATE estimators, and report both the coverage probability (CP) and the average CI width of these CIs in Figure \ref{fig:CI_mean_widths}. It can be seen that most CPs are over 92\%, close to the nominal level. Meanwhile,
{
\begin{itemize}
\setlength\itemsep{0pt}
 \setlength\topsep{0pt}
    \item For small values of $\lambda$, more frequent policy switch reduces the average CI width.
    \item When $\lambda$ is increased to 10\%, AD produces the narrowest CI on average.
\end{itemize}}
These results verify our claim that a reduction in MSE directly translates to a shorter CI.

\begin{figure}[t]
    \centering
    \begin{minipage}{0.45\linewidth}
        \centering
\includegraphics[width=\linewidth]{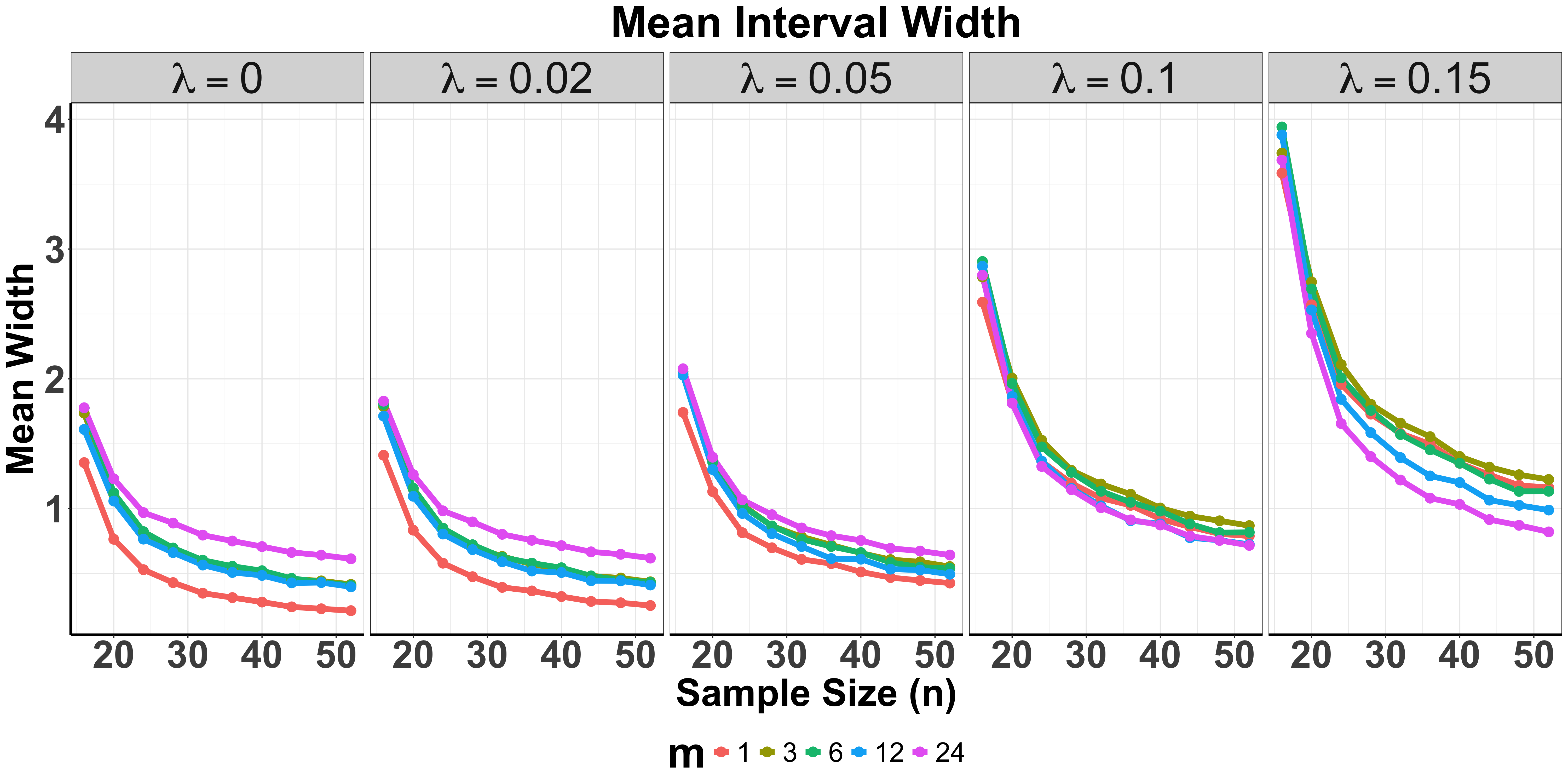}
    \end{minipage}
    \hspace{0.05cm}  
    \begin{minipage}{0.45\linewidth}
        \centering
        \includegraphics[width=\linewidth]{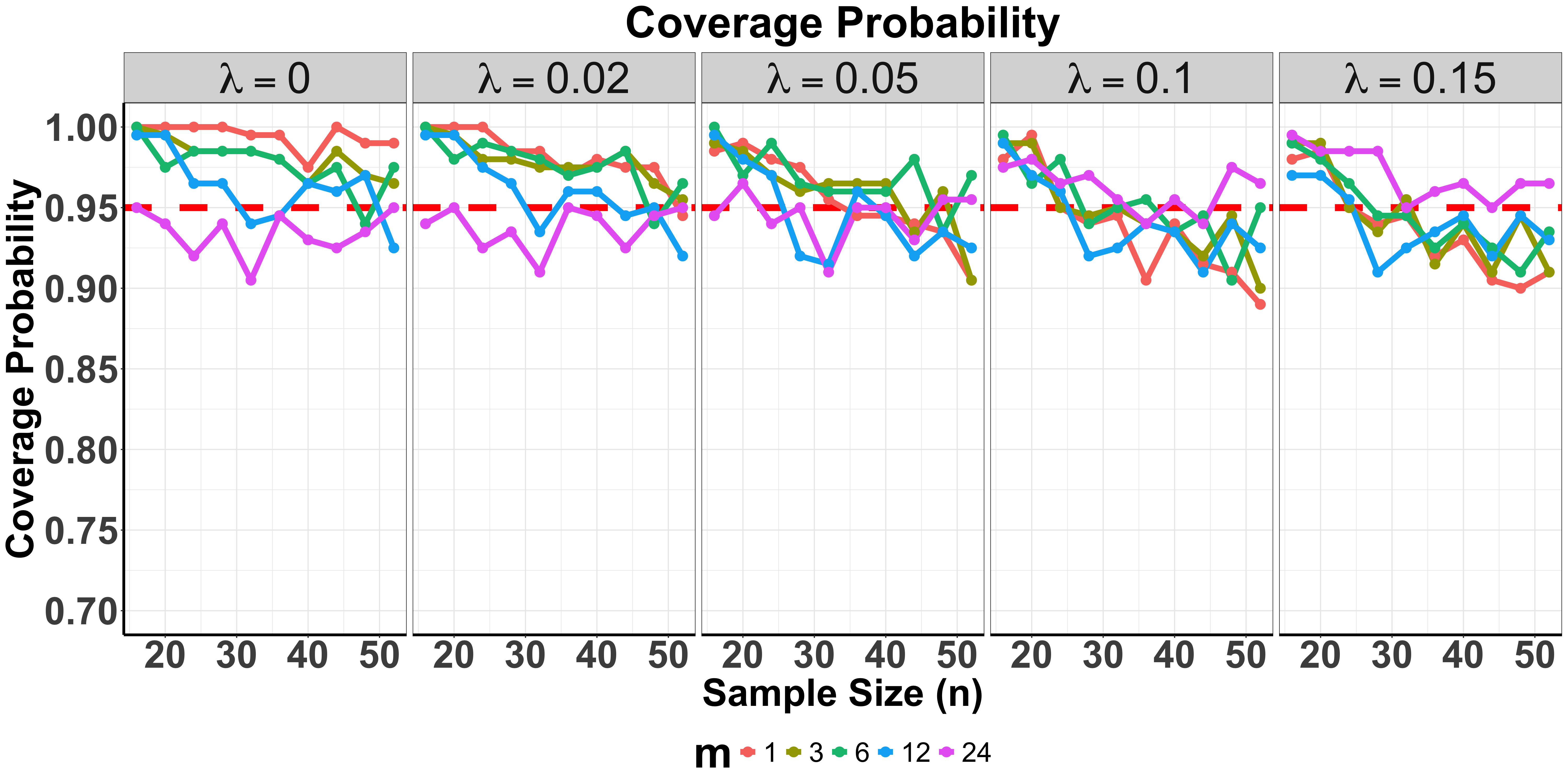}
    \end{minipage}
    \caption{Confidence intervals  in real-data-based simulation.}
    \label{fig:CI_mean_widths}
\end{figure}

\begin{figure}[t]
    \centering
    \begin{minipage}{0.45\linewidth}
        \centering
        \includegraphics[width=\linewidth]{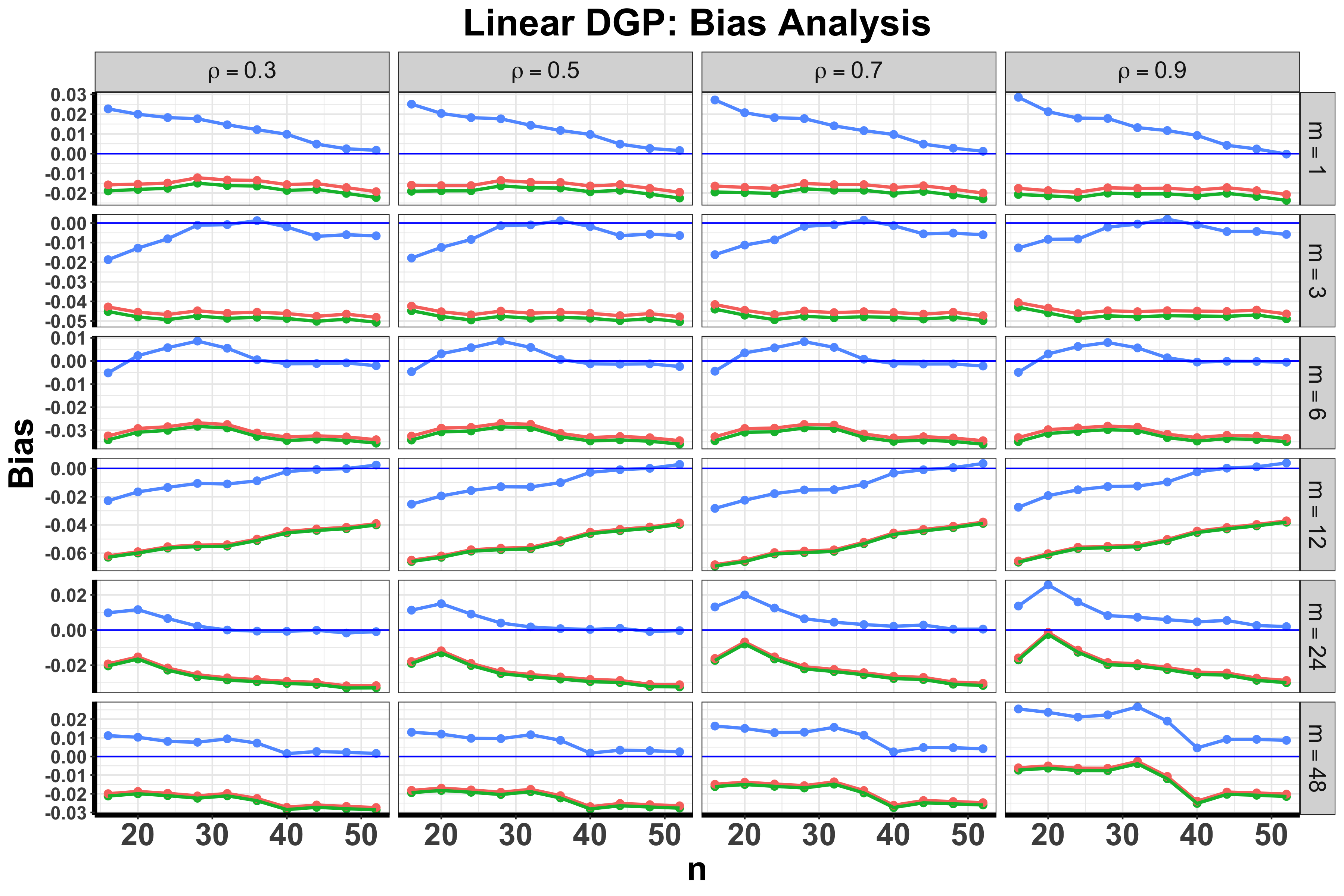}
    \end{minipage}
    \hspace{0.05\linewidth}
    \begin{minipage}{0.45\linewidth}
        \centering
        \includegraphics[width=\linewidth]{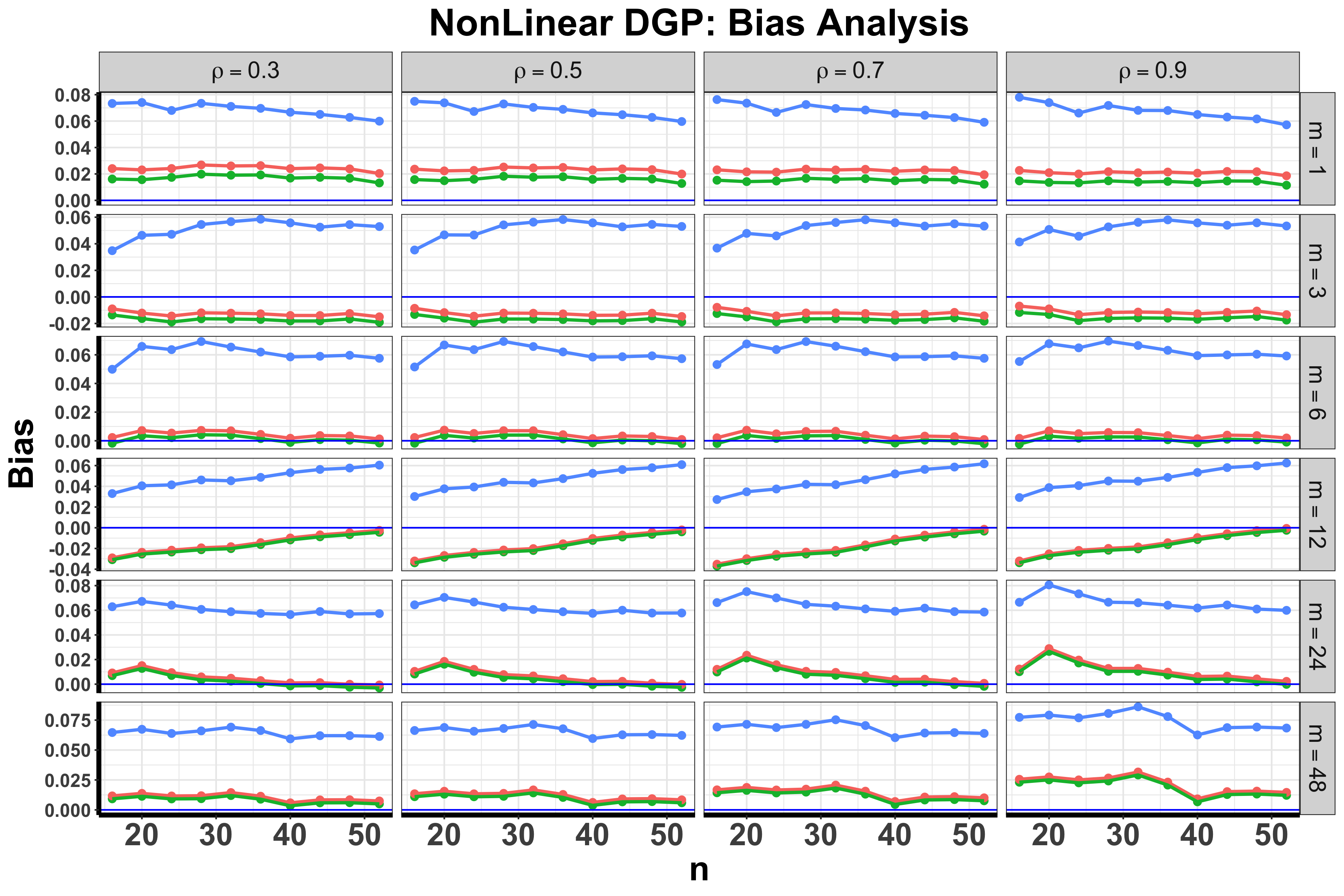}
    \end{minipage}

    \vspace{1em}  

    \begin{minipage}{0.45\linewidth}
        \centering
        \includegraphics[width=\linewidth]{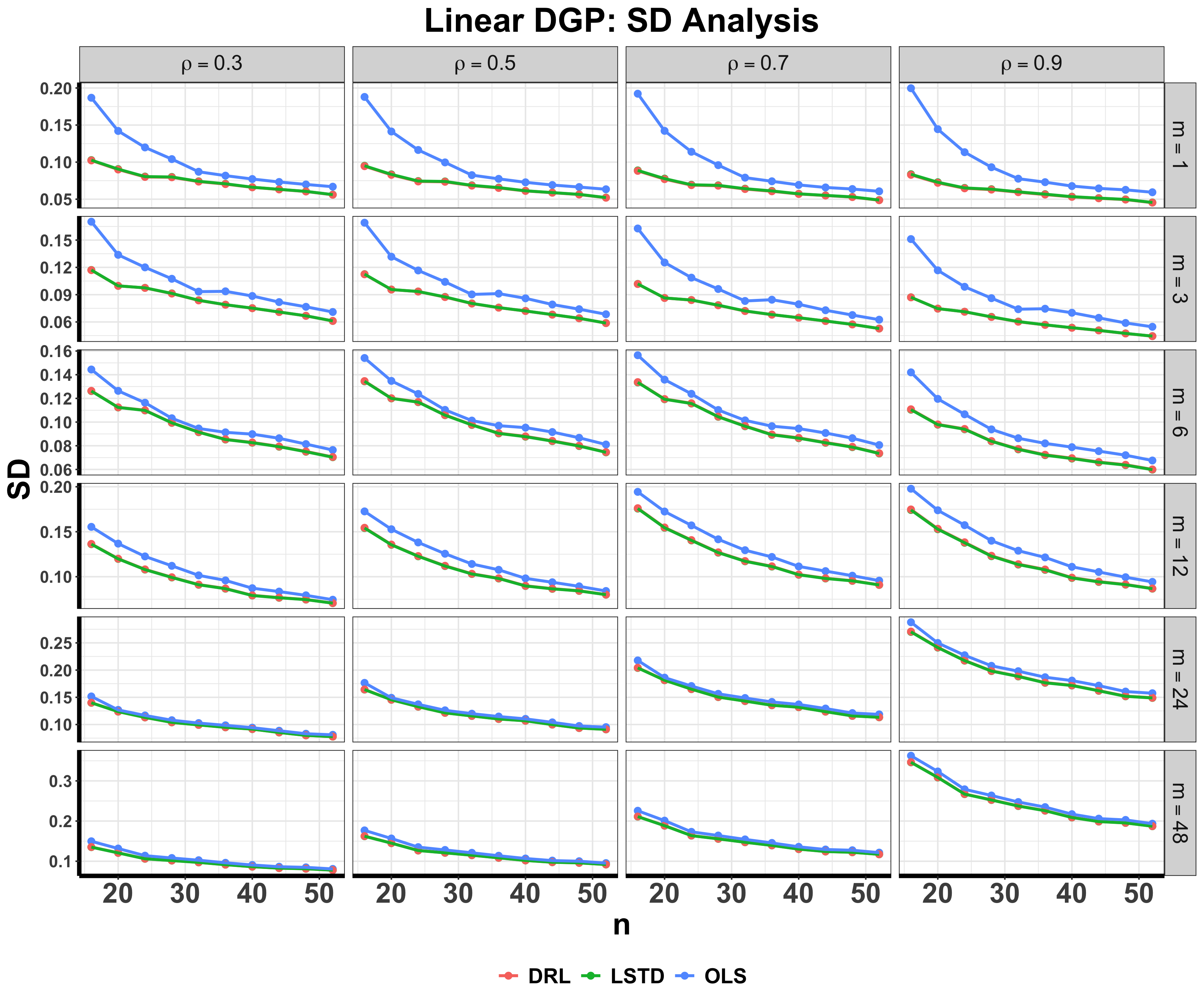}
    \end{minipage}
    \hspace{0.05\linewidth}
    \begin{minipage}{0.45\linewidth}
        \centering
        \includegraphics[width=\linewidth]{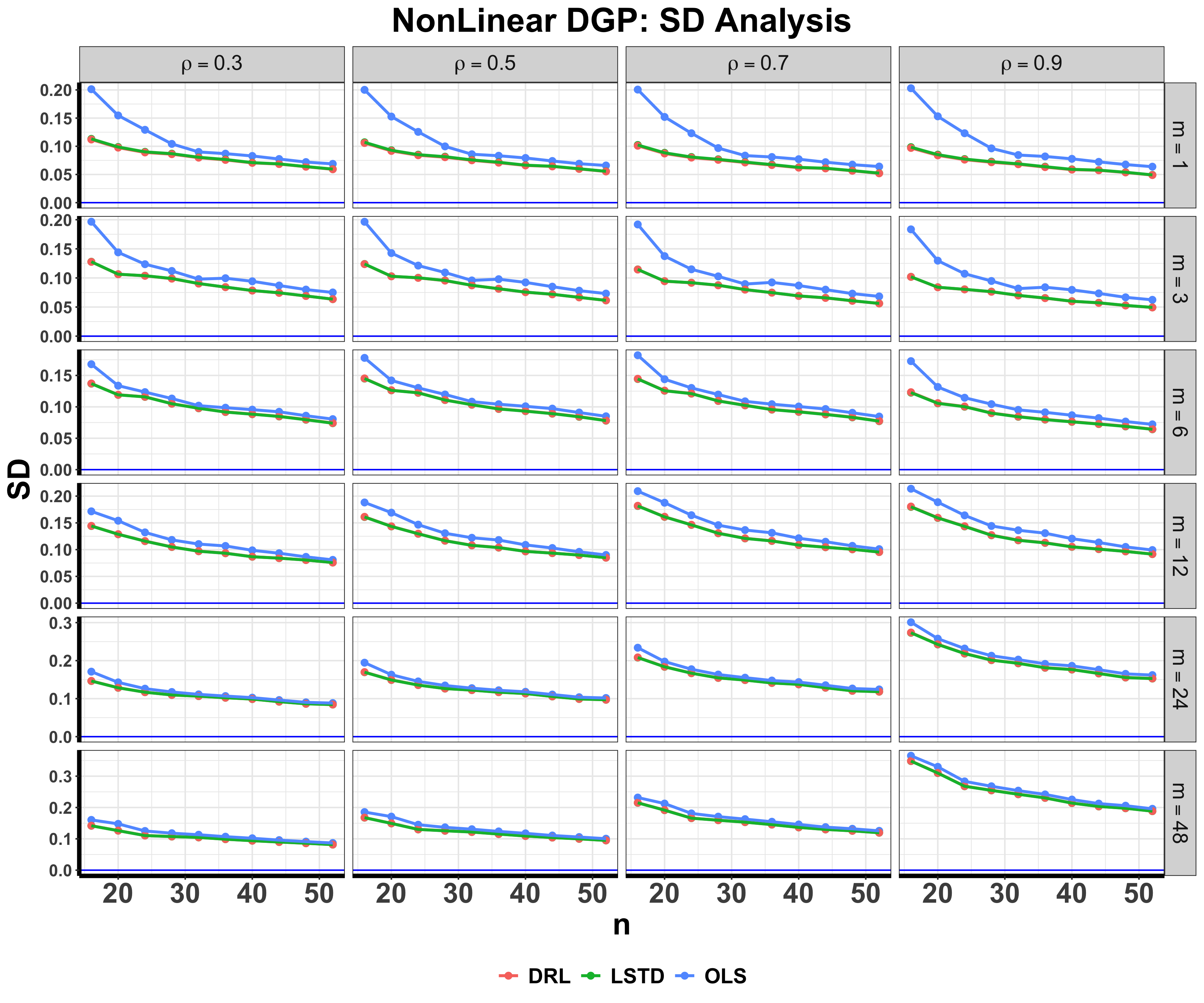}
    \end{minipage}

    \caption{\small
     Comparison of bias (top row) and standard deviation (bottom row) of different ATE estimators under varying combinations of $(n, m, \rho)$, for linear (left column) and nonlinear (right column) data-generating processes.
    }
    \label{fig:DGP_combined_sd_bias}
\end{figure}

\begin{figure}[t]
    \centering
    \begin{minipage}{0.45\linewidth}
        \centering
        \includegraphics[width=\linewidth]{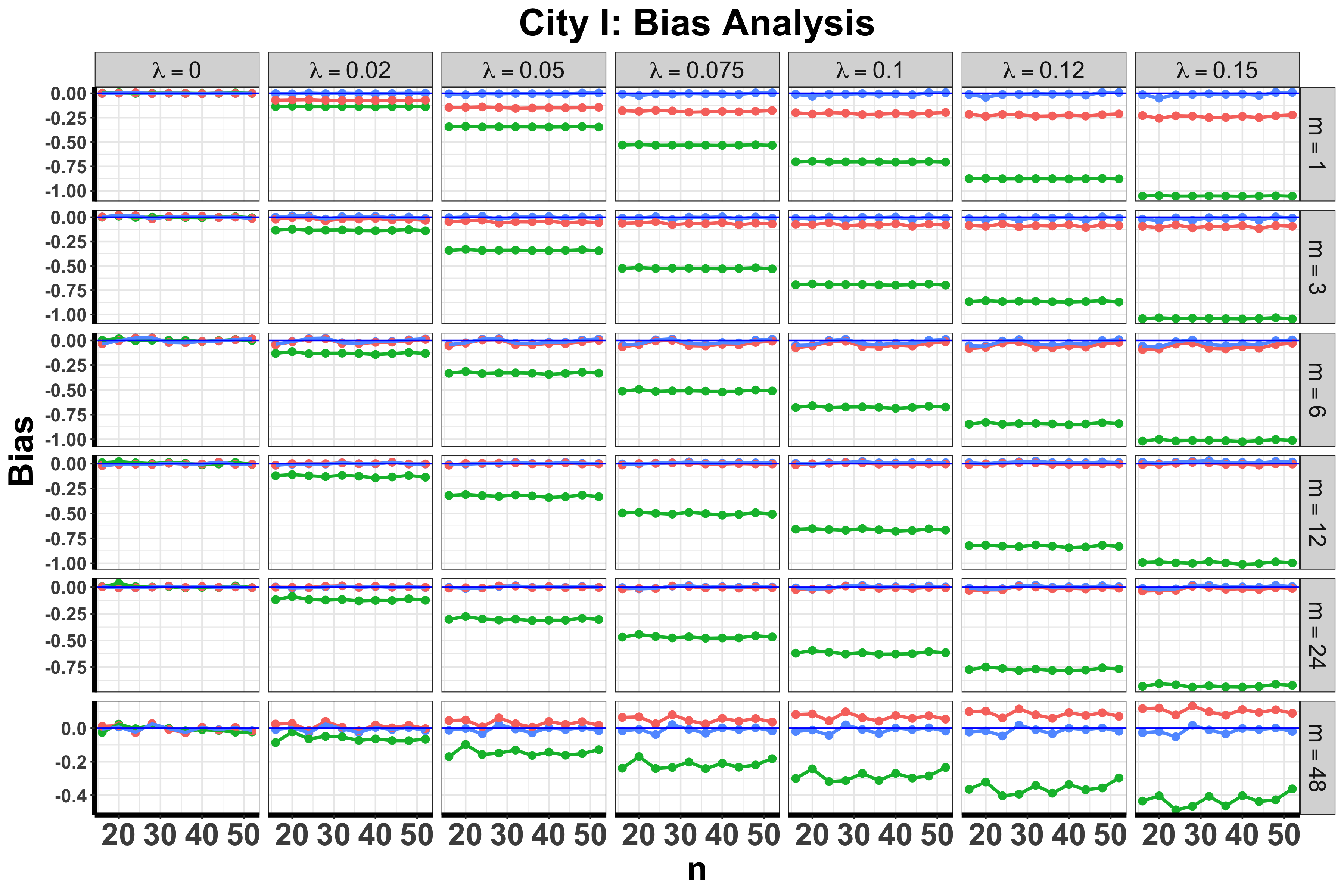}
    \end{minipage}
    \hspace{0.1cm}  
    \begin{minipage}{0.45\linewidth}
        \centering
        \includegraphics[width=\linewidth]{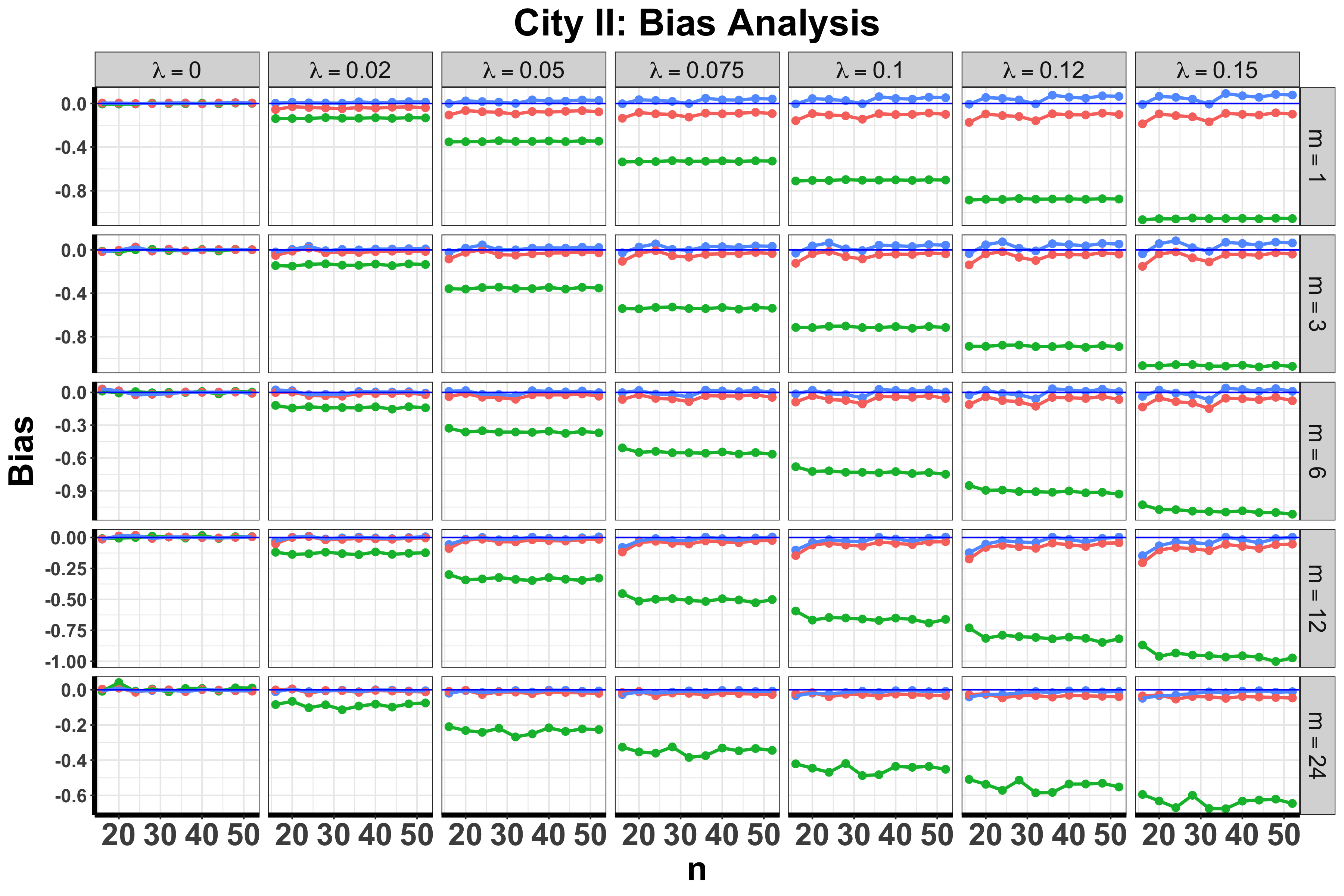}
    \end{minipage}
    \vskip \baselineskip  
    \begin{minipage}{0.45\linewidth}
        \centering
        \includegraphics[width=\linewidth]{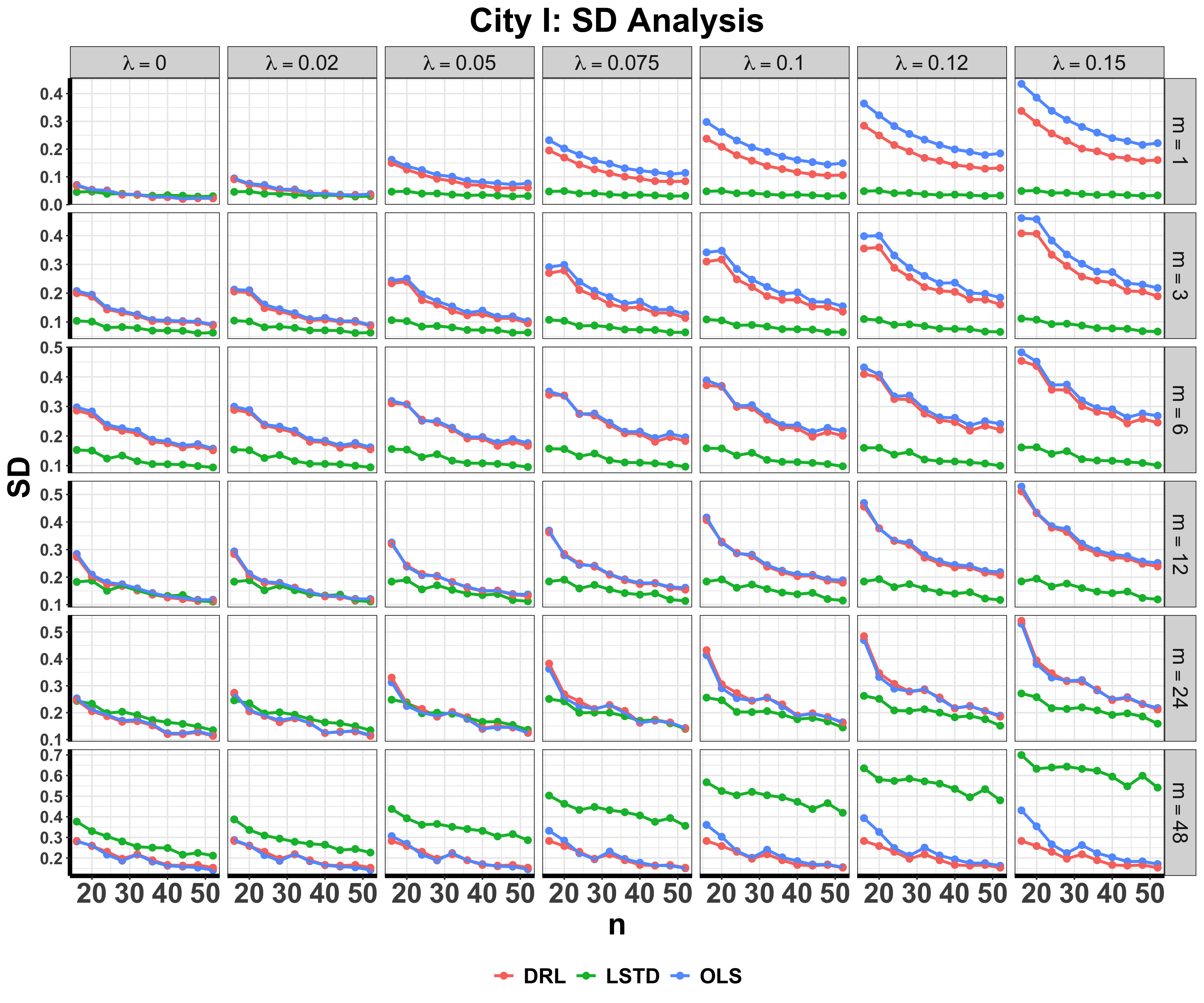}
    \end{minipage}
    \hspace{0.1cm}  
    \begin{minipage}{0.45\linewidth}
        \centering
        \includegraphics[width=\linewidth]{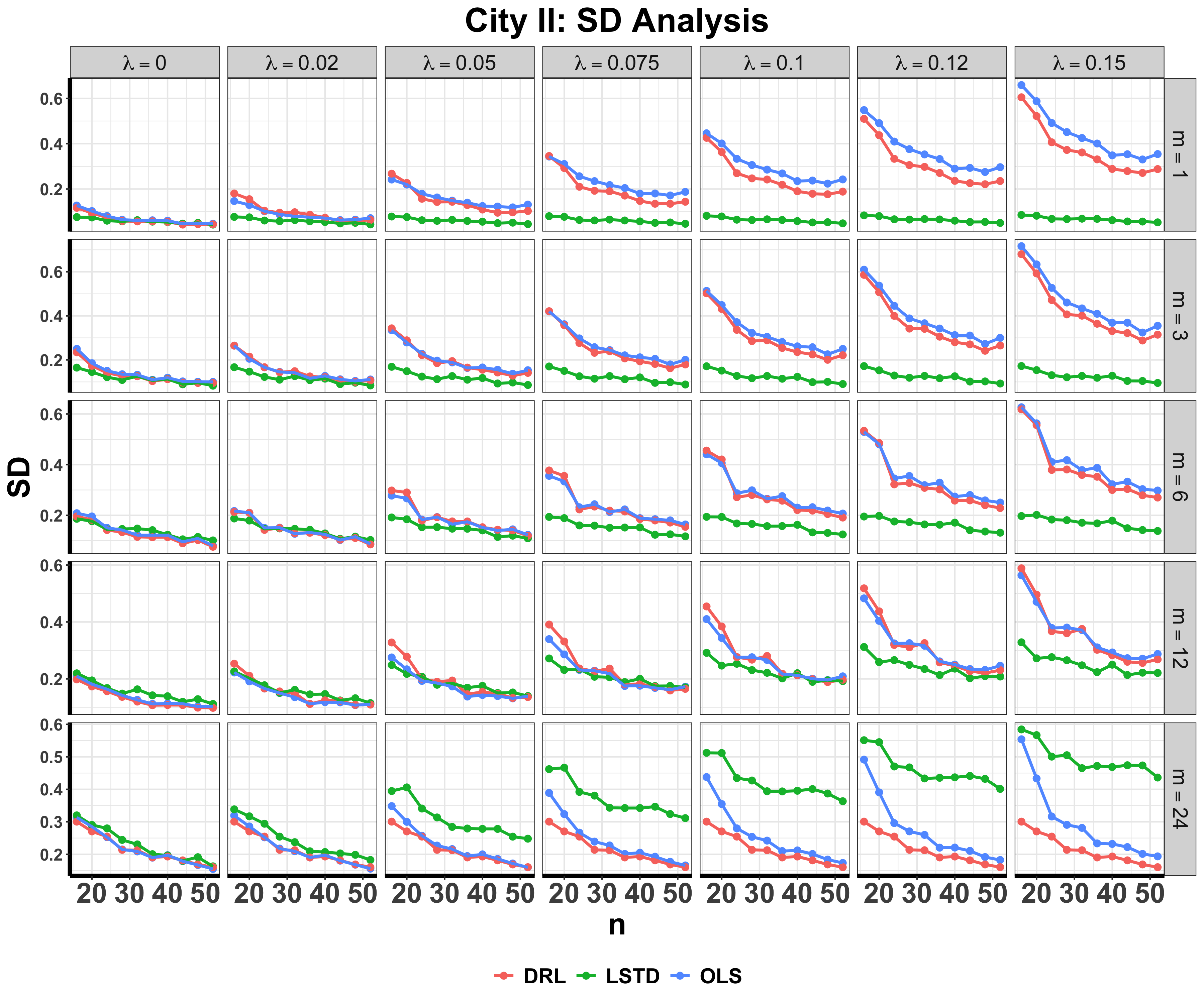}
    \end{minipage}
    
    \caption{{\small Comparison of bias (top row) and standard deviation (bottom row) of different ATE estimators under varying combinations of $(n, m, \lambda)$, based on real datasets from City I (left column) and City II (right column)}}
\label{fig:rel_data_compare_SD_bias}
\end{figure}

\textit{\textbf{Other results.}} We further visualize the biases and standard deviations of the ATE estimators in synthetic experiments in Figure \ref{fig:DGP_combined_sd_bias}. It can be seen that under the nonlinear DGP, the OLS-based ATE estimators exhibit larger absolute bias than the LSTD- and DRL-based estimators, primarily due to the misspecification of the linear model. 

Similarly, Figure \ref{fig:rel_data_compare_SD_bias} displays the biases and standard deviations of the ATE estimators across two real datasets. The standard deviations of all estimators increase, as expected, as the effect size $\lambda$ or $m$ increases. The biases of the OLS and DRL estimators remain relativ
ely stable with respect to $m$ in the setting of weak carryover effects. However, the LSTD estimator experiences a large bias with a larger effect size. Since these biases are caused by model misspecification, they remain roughly constant across different designs.


\section{Assumptions, Proofs of Theories and Corollaries}\label{sec:proof}
In this section, we first present Assumptions \ref{asmp:est} -- \ref{asmp:sievebasis} that are needed to establish Theorem \ref{thm::main}. Next, we discuss in detail the order of magnitude of the reminder term in Theorem \ref{thm::main}. Finally, we provide the proofs of our theorem and corollaries. Throughout this section, we assume without loss of generality that the state space $\mathcal{S}$ is discrete -- a typical assumption in RL  \citep[see e.g.,][]{sutton2018reinforcement}.

\subsection{Assumptions}\label{subsec:assump}
\begin{asmp}[Bounded ATE]\label{asmp:est}
   The absolute value of the OLS-based ATE estimator is bounded by $R_{\max}$.  
\end{asmp}
\begin{asmp}[Non-singular covariance matrix]\label{asmp:nonsingular}
For any $1\le t\le T$, the covariance matrix $\Cov(S_t)$ is non-singular, whose minimum eigenvalue is larger than $\max(\epsilon, \bar{c}\delta)$ for some fixed constant $\epsilon>0$ and some sufficiently large constant $\bar{c}>0$.
\end{asmp}
\begin{asmp}[Bounded regression coefficients]\label{asmp:boundedregcoef}
    For any $1\le t\le T$, $\|\Phi_t\|_2\le \rho_{\Phi}$ for some constant $0<\rho_{\Phi}<1$.
\end{asmp}
\begin{asmp}[Bounded states]\label{asmp:boundstates}
The state dimension $d$ is fixed, and the states are contained within a compact ball, i.e., there exists a constant \( C > 0 \) such that for all \( t \), the states satisfy \( \| S_t \|_2 \leq C \).
\end{asmp}
\begin{asmp}[Bounded transition functions]\label{asmp:boundtrans}
	The transition functions $\{p_t\}$ are uniformly bounded away from zero and infinity.
\end{asmp}
\begin{asmp}[Bounded temporal difference errors]\label{asmp:boundedtd}
    The absolute value of the temporal difference error $R_t+V_{t+1}(S_{t+1})-V_t(S_t)$ is of the order $O(R_{\max})$ where the big-$O$ term is uniform in $t$.
\end{asmp}
\begin{asmp}[Sieve basis functions]\label{asmp:sievebasis}
	(i) For any $t \le T$, there exists a constant $c \ge 1$ such that $	c^{-1} \le \lambda_{\min}\Big[\sum_{s\in \mathcal{S}}\varphi_t(s)\varphi_t^\top(s)\Big] 
	\le \lambda_{\max}\Big[\sum_{s\in \mathcal{S}}\varphi_t(s)\varphi_t^\top(s) \Big] \le c,$
	where $\lambda_{\min}[\cdot]$ and $\lambda_{\max}[\cdot]$ denote the minimum and maximum eigenvalues of a matrix, respectively;
	(ii) $\sup_{s \in \mathcal{S}} \|\varphi_t(s)\|_2 <\infty $. 
\end{asmp}
\begin{asmp}[Convergence rates]\label{asmp:converge}
    $\textrm{err}_\omega^2 = \max_{t,a,m} \Mean | 	\widehat{\omega}_{t,-k}^{a,m} (s)-\omega_t^{a,m}(s) |^2$ and  $\textrm{err}_v^2=R_{\max}^{-2}\max_{t,a,m} \Mean  | 	\widehat{V}_{t,-k}^{a,m} (s)-V_t^{a}(s) |^2$ are of the order $o(n^{-1/4})$. 
\end{asmp}

First, Assumptions \ref{asmp:est} and \ref{asmp:boundedtd} are mild. Given the boundedness of the rewards, it is reasonable to assume both the ATE and the temporal difference error are bounded as well. 

Second, Assumptions \ref{asmp:nonsingular} and \ref{asmp:boundedregcoef} are also mild. In particular, Assumption \ref{asmp:boundedregcoef} is the classical no unit root assumption for the state process. Both assumptions are crucial to ensure the consistency of the OLS estimators defined in Section \ref{subsec:methods}. 

Third, in many real-world applications, states are naturally bounded, making Assumption \ref{asmp:boundstates} reasonable. 

Fourth, Assumption \ref{asmp:boundtrans} is a standard condition often used to ensure the asymptotic distribution of the LSTD estimator \citep{shi2022statistical,shi2023dynamic}. This assumption is intrinsically linked to the overlap condition, which is critical for maintaining the boundedness of the density ratio $\omega_t$. The overlap condition is commonly imposed in OPE \citep[see e.g.,][]{kallus2020double,kallus2022efficiently,liao2022batch}.  

Fifth, Assumption \ref{asmp:sievebasis} is quite reasonable and is automatically satisfied when tensor product B-splines or wavelet basis functions are used for $\varphi_t$ \citep{chen2015optimal}.

Lastly, we denote the estimated value function by $\widehat{V}_{t,-k}^{a,m}$, where the superscript $m$ highlight its dependence on the $m$-switchback design -- different values of $m$ yield different value estimators. The supremum in Assumption \ref{asmp:converge} is taken with respect to $a,t$ and $m$, but not over $k$, since the trajectories are i.i.d., making the expectation invariant across different $k$. Assumptions of this type are commonly imposed in the literature for valid statistical inference of the ATE \citep[see e.g.,][]{chernozhukov_doubledebiased_2018}. 
    
\subsection{Theorem \ref{thm::main}}\label{thm:full_thm}
In the following, we provide a detailed statement of Theorem \ref{thm::main}. Its proof will be presented in the next subsection.

\textbf{Theorem \ref{thm::main}}. 
Under the given conditions, the difference in the MSE of the ATE estimator between the alternating-day design and an $m$-switchback design (where each switch duration equals $m$) is lower bounded by
\begin{eqnarray*}
    \begin{split}
        \frac{16}{nT^2} \sum_{\substack{k_2-k_1=1,3,5, \ldots\\ 0\le k_1<k_2< T/m}} \sum_{l_1,l_2=1}^{m} \sigma_{e}(l_1+k_1m, l_2+k_2 m)
        -\frac{c\delta R_{\max}^2}{n}-\text{reminder term},
    \end{split}
\end{eqnarray*}
for some constant $c>0$ and some reminder term whose order if estimator-dependent:  
\begin{itemize}
    \item For the \textbf{OLS estimator}, under Assumptions \ref{asmp:reward}-- \ref{asmp:boundstates}, its reminder term $=O(n^{-3/2} R_{\max}^2\log(nT))$.
    \item For the \textbf{DRL estimator}, under Assumptions \ref{asmp:reward}, \ref{asmp:boundtrans}, \ref{asmp:boundedtd} and \ref{asmp:converge} its reminder term is of the order
\begin{equation*}
   O\Big[\max \Big( \frac{ R_{\max}^2 \textrm{err}_\omega }{{n}}, \frac{ R_{\max}^2 \textrm{err}_v }{{n}},\frac{R_{\max}^2 \textrm{err}_\omega  \textrm{err}_v}{\sqrt{n}} \Big)
    \Big].
\end{equation*}
    \item For the \textbf{LSTD estimator}, under Assumptions \ref{asmp:reward}, \ref{asmp:boundtrans} -- \ref{asmp:converge}, its reminder term is of the order 
\begin{equation*}
    O\Big[\max \Big( \frac{ R_{\max}^2 \sqrt{\log(nT)} \textrm{err}_\omega  }{{n}}, \frac{ R_{\max}^2 \sqrt{\log(nT)} \textrm{err}_v  }{{n}},  
    \frac{R_{\max}^2 \textrm{err}_\omega \textrm{err}_v}{\sqrt{n}} \Big)
    \Big].
\end{equation*}
\end{itemize}

\subsection{Proof of Theorem \ref{thm::main}-OLS and Corollaries \ref{cor1}--\ref{cor3}}

In this subsection, we first consider the OLS estimator and prove all the corollaries. The proofs for the LSTD and DRL estimators are given in the next subsection. Our proof heavily relies on the strength of carryover effects $\delta$,  which implies that $\|\Gamma_t\|=O(\delta)$ under the assumption of linear model \eqref{linear_mdp}. A key ingredient of the proof of Theorem \ref{thm::main} is the following lemma, which demonstrates that, as $\delta\to 0$, each state becomes asymptotically uncorrelated with the current action. It largely simplifies the calculation of the asymptotic variance of the OLS estimator.

\begin{lemma}\label{lemma:ols_a2}
	Suppose both Assumption \ref{asmp:boundedregcoef} and the linear model assumption in \eqref{linear_mdp} are satisfied. Then for each $t$, we have $\|\Cov(S_t, A_t)\|_2= O(\delta)$.
\end{lemma}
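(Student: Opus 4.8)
The plan is to unroll the linear state recursion in \eqref{linear_mdp} so as to express $S_t$ as an explicit affine function of the initial state $S_1$, the past actions $\{A_k\}_{k<t}$, and the state residuals $\{E_k\}_{k<t}$, and then to read off $\Cov(S_t,A_t)$ term by term. Iterating \eqref{linear_mdp} gives exactly the representation already recorded in \eqref{eqn::IS_model},
\[
S_t = \sum_{k=1}^{t-1}\Big(\prod_{l=k+1}^{t-1}\Phi_l\Big)\phi_k + \Big(\prod_{l=1}^{t-1}\Phi_l\Big)S_1 + \sum_{k=1}^{t-1}\Big(\prod_{l=k+1}^{t-1}\Phi_l\Big)\Gamma_k A_k + \sum_{k=1}^{t-1}\Big(\prod_{l=k+1}^{t-1}\Phi_l\Big)E_k.
\]
Taking covariance of both sides with the scalar $A_t$ and using bilinearity, the deterministic first term drops out, leaving only the three contributions coming from $S_1$, from the action block, and from the residual block.

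First I would argue that the $S_1$ and the $E_k$ contributions vanish. Both rest on the exogeneity of the assignment: every $A_t$ is a deterministic function of the single randomized initial action $A_1\sim \mathrm{Bernoulli}(1/2)$, which is drawn independently of the trajectory's natural noise. Hence $\Cov(S_1,A_t)=0$. For the residuals, the Markov and conditional-mean-zero structure of the model yields $\Mean(E_k\mid A_1,\dots,A_k,S_1,\dots,S_k)=0$, so that $\Mean(E_k\mid A_1)=0$ by the tower rule; since $A_t$ is $A_1$-measurable, this gives $\Cov(E_k,A_t)=0$ for every $k<t$. Consequently only the action block survives:
\[
\Cov(S_t,A_t)=\sum_{k=1}^{t-1}\Big(\prod_{l=k+1}^{t-1}\Phi_l\Big)\Gamma_k\,\Cov(A_k,A_t).
\]

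It then remains to bound this sum. I would use $\|\Gamma_k\|_2=O(\delta)$ -- which follows from the definition of $\delta$ together with the boundedness of the states (Assumption \ref{asmp:boundstates}), since $\Gamma_k$ is the difference of the two conditional-mean next-states and therefore $\|\Gamma_k\|_2\le \sum_{s'}\|s'\|_2|p_k(s'|1,s)-p_k(s'|0,s)|\le C\delta$ -- the sub-multiplicativity $\|\prod_{l=k+1}^{t-1}\Phi_l\|_2\le \rho_\Phi^{\,t-1-k}$ granted by Assumption \ref{asmp:boundedregcoef}, and the trivial bound $|\Cov(A_k,A_t)|\le 1/4$ for Bernoulli actions. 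The triangle inequality then gives
\[
\|\Cov(S_t,A_t)\|_2 \le \tfrac14 \max_k\|\Gamma_k\|_2\sum_{k=1}^{t-1}\rho_\Phi^{\,t-1-k} \le \frac{\max_k\|\Gamma_k\|_2}{4(1-\rho_\Phi)}=O(\delta),
\]
where the geometric series is summable precisely because $\rho_\Phi<1$, and the bound is uniform in $t$. I expect the main obstacle to be the rigorous justification of the two vanishing-covariance claims, i.e.\ carefully invoking the exogeneity of the design and the conditional-mean-zero property to show that the initial-state and residual blocks do not contribute; once these are dispatched, the remaining estimate is a routine geometric-series argument.
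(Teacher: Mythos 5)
Your proposal is correct and follows essentially the same route as the paper's proof: unroll the linear recursion via \eqref{eqn::IS_model}, kill the $S_1$ and $\{E_k\}_k$ covariance terms by noting that every $A_t$ is a deterministic function of the exogenously randomized $A_1$, and bound the surviving action block by a geometric series using $\|\Gamma_k\|_2=O(\delta)$ and $\|\Phi_l\|_2\le\rho_\Phi<1$. The only addition is your explicit derivation of $\|\Gamma_k\|_2=O(\delta)$ from the definition of $\delta$ and bounded states, which the paper simply asserts.
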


\begin{proof}
	According to \eqref{eqn::IS_model}, we obtain that
	\begin{equation*}
		\Cov(S_{t}, A_{t})= \Big(\Pi_{k=1}^{t-1}\Phi_k\Big)\Cov(S_1,A_t)+	\sum_{k=1}^{t-1} \Big( \Pi_{l=k+1}^{t-1} \Phi_l \Big) \Gamma_k\Cov(A_{k}, A_{t})+\sum_{k=1}^{t-1} \Big( \Pi_{l=k+1}^{t-1} \Phi_l \Big) \Cov(E_k, A_t),
	\end{equation*}
    where we recall that $E_t=S_{t+1}-\mathbb{E}(S_{t+1}|A_t,S_t)$. 
    
	For each design, $A_t$ is uniquely determined by $A_1$. More specifically, it can take one of two values: either $A_1$ or $1-A_1$. Since $A_1$ is uncorrelated with $S_1$ and $\{E_k\}_k$, the same holds true for $A_t$. It follows that
	\begin{eqnarray*}
		\Cov(S_{t}, A_{t})=\sum_{k=1}^{t-1} \Big( \Pi_{l=k+1}^{t-1} \Phi_l \Big) \Gamma_k\Cov(A_{k}, A_{t}).
	\end{eqnarray*}
	Since each $\|\Gamma_t\|$ is $O(\delta)$, and Assumption \ref{asmp:boundedregcoef} implies that $\max_{t} \| \Phi_t \| \leq \rho_{\Phi} <1$, we have
    \begin{equation*}
        \| \sum_{k=1}^{t-1} \Big( \Pi_{l=k+1}^{t-1} \Phi_l \Big)\Gamma_k\Cov(A_{k}, A_{t}) ) \|_2 \leq  \sum_{k=1}^{t-1} \| \Pi_{l=k+1}^{t-1} \Phi_l \|\cdot \|\Gamma_k\|_2 \leq \sum_{k=1}^{t-1} \Pi_{l=k+1}^{t-1} \|\Phi_l \| \cdot O(\delta) \leq \sum_{k=1}^{t-1} \rho_{\Phi}^{t-k-1} \cdot O(\delta) ,
    \end{equation*}
    we have $\| \Cov(S_{t}, A_{t}) \|_2=O( \frac{\delta (1-\rho_{\Phi}^{t-1})}{1-\rho_{\Phi}})=O(\delta)$, for any $t=1, \cdots, T$. This completes the proof of Lemma \ref{lemma:ols_a2}. 
\end{proof}

The next lemma obtains a linear representation for the OLS-based ATE estimator. 
\begin{lemma}\label{lemma:ols_a3}
Suppose Assumptions \ref{asmp:reward}, and \ref{asmp:nonsingular} -- \ref{asmp:boundstates} and the linear model assumption in \eqref{linear_mdp} are satisfied. For sufficiently large $n$ and any large constant $\kappa>0$, we have with probability at least $1-O((nT)^{-\kappa})$ that the difference between the OLS-based ATE estimator and the ATE itself can be represented by 
\begin{eqnarray}\label{eqn:lemma2errorbound}
    \begin{split}
\frac{4}{nT}\sum_{i=1}^n \sum_{t=1}^T {(A_{i,t}-\frac{1}{2})e_{i,t}}+\frac{4}{nT}\sum_{i=1}^n\sum_{t=2}^{T}  \beta_{t}^\top \Big[\sum_{k=1}^{t-1} (\Phi_{t-1} \cdots \Phi_{k+1} ) {(A_{i,k}-\frac{1}{2})}E_{i,k}\Big]\\
        O\Big( \frac{\delta}{T} \sum_{t=1}^T\Big\|\frac{1}{n}\sum_{i=1}^n \mu_{i,t}\Big\|_2\Big) + O\Big(\frac{R_{\max}\log (nT)}{n}\Big),
    \end{split}
\end{eqnarray}
where $\mu_{i,t}$ denote the vector of residuals
\begin{eqnarray*}
    \mu_{i,t}=(e_{i,t},A_{i,t}e_{i,t},S_{i,t}^\top e_{i,t},R_{\max}E_{i,t}^\top,R_{\max} A_{i,t}E_{i,t}^\top,R_{\max}\textrm{vec}(S_{i,t}E_{i,t}^\top))^\top,
\end{eqnarray*}
and vec$(\bullet)$ denotes the operator that vectorize a matrix into a row vector. 
\end{lemma}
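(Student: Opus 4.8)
The plan is to derive the linear representation by separately expanding the two additive components of the plug-in ATE formula \eqref{ate_est_formula}: the direct-effect term $T^{-1}\sum_t \widehat\gamma_t$ and the carryover term $T^{-1}\sum_{t\ge 2}\widehat\beta_t^\top\sum_{k<t}(\widehat\Phi_{t-1}\cdots\widehat\Phi_{k+1})\widehat\Gamma_k$. The backbone of both expansions is the Frisch--Waugh--Lovell (partialling-out) identity together with Lemma \ref{lemma:ols_a2}. Writing $\tilde A_{i,t}$ for the in-sample residual of $A_{i,t}$ after projecting out $(1,S_{i,t})$, both the reward-regression and the state-regression coefficients on the action admit the exact forms $\widehat\gamma_t=\gamma_t+(\sum_i \tilde A_{i,t}e_{i,t})/(\sum_i \tilde A_{i,t}^2)$ and $\widehat\Gamma_k=\Gamma_k+(\sum_i \tilde A_{i,k}E_{i,k})/(\sum_i \tilde A_{i,k}^2)$. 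Before using these, I would first establish a high-probability event $\mathcal E$, with $\prob(\mathcal E^c)=O((nT)^{-\kappa})$, on which every OLS estimator concentrates around its population value at the uniform rate $O(\sqrt{\log(nT)/n})$. This follows from matrix/vector Bernstein inequalities applied to bounded summands (rewards bounded by Assumption \ref{asmp:reward}, states bounded by Assumption \ref{asmp:boundstates}), the lower eigenvalue bound on $\Cov(S_t)$ from Assumption \ref{asmp:nonsingular}, and a union bound over the $T$ time points and $d=O(1)$ coordinates, which is the source of the $\log(nT)$ factor and of the $(nT)^{-\kappa}$ failure probability.

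For the direct-effect term, Lemma \ref{lemma:ols_a2} gives $\Cov(S_t,A_t)=O(\delta)$, so the population (and, on $\mathcal E$, the sample) regression coefficient of $A_t$ on $S_t$ is $O(\delta)+O_p(\sqrt{\log(nT)/n})$, and $\tilde A_{i,t}=(A_{i,t}-\tfrac12)+O(\delta)$ times bounded state terms $+\,O_p(\sqrt{\log(nT)/n})$. Since $\Var(A_t)=\tfrac14$, the denominator satisfies $\sum_i \tilde A_{i,t}^2=\tfrac n4(1+O(\delta^2)+O_p(\sqrt{\log(nT)/n}))$, and a first-order ratio expansion reduces $\widehat\gamma_t-\gamma_t$ to $\tfrac4n\sum_i(A_{i,t}-\tfrac12)e_{i,t}$ plus (a) the projection correction $O(\delta)\cdot\tfrac1n\sum_i S_{i,t}e_{i,t}$, which is an $O(\delta)$ multiple of a coordinate of $\tfrac1n\sum_i\mu_{i,t}$ and is therefore absorbed into the first remainder, and (b) centering cross-products of two $O_p(n^{-1/2})$ factors, absorbed into the $O(R_{\max}\log(nT)/n)$ remainder. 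Averaging over $t$ produces the first leading term of \eqref{eqn:lemma2errorbound}.

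For the carryover term I would Taylor-expand the product $\widehat\beta_t^\top\widehat P_{t,k}\widehat\Gamma_k$ around $(\beta_t,P_{t,k},\Gamma_k)$, where $P_{t,k}:=\Phi_{t-1}\cdots\Phi_{k+1}$. The crucial bookkeeping device is that $\|\Gamma_k\|=O(\delta)$ under \eqref{linear_mdp}: among the three first-order terms, $\Delta\beta_t^\top P_{t,k}\Gamma_k$ and $\beta_t^\top\Delta P_{t,k}\Gamma_k$ each carry the $O(\delta)$ factor $\Gamma_k$ against an $O_p(n^{-1/2})$ perturbation built from coordinates of $\tfrac1n\sum_i\mu_{i,t}$ (the $S_{i,t}E_{i,t}^\top$ and $S_{i,t}e_{i,t}$ blocks drive $\Delta\Phi$ and $\Delta\beta$), so they fall into the $O(\delta)$ remainder; only $\beta_t^\top P_{t,k}\Delta\Gamma_k$, which contains no $\delta$ factor, survives at order $n^{-1/2}$. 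Substituting the partialling-out form of $\widehat\Gamma_k$ and keeping its leading piece $\tfrac4n\sum_i(A_{i,k}-\tfrac12)E_{i,k}$ (its own projection and centering corrections again splitting into the two remainders) yields exactly the second leading term of \eqref{eqn:lemma2errorbound}, with the true $\beta_t$ and the true matrix product. Products of two or more perturbations are $O_p(\log(nT)/n)$ and go into the final remainder.

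The main obstacle is controlling the matrix-product perturbation $\widehat P_{t,k}-P_{t,k}$ uniformly over all $O(T^2)$ index pairs without the accumulated error overwhelming the $O(\log(nT)/n)$ budget. I would handle this with the telescoping identity $\widehat P_{t,k}-P_{t,k}=\sum_{j=k+1}^{t-1}(\widehat\Phi_{t-1}\cdots\widehat\Phi_{j+1})(\widehat\Phi_j-\Phi_j)(\Phi_{j-1}\cdots\Phi_{k+1})$ and the contraction bound $\|\Phi_\ell\|_2\le\rho_{\Phi}<1$ of Assumption \ref{asmp:boundedregcoef}, which forces both the true products and their perturbations to decay geometrically in $t-k$; this geometric summability in $(t,k)$ is what keeps every remainder bounded by a constant multiple of its claimed order after averaging, and it simultaneously guarantees $\|\beta_t^\top P_{t,k}\|$ is summable, so the weights in the second leading term and in the $\delta$-remainder are uniformly controlled. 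Combining the two component expansions on $\mathcal E$ and collecting remainders gives \eqref{eqn:lemma2errorbound}.
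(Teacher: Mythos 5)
Your proposal is correct and follows essentially the same route as the paper: the same decomposition of the plug-in formula \eqref{ate_est_formula} into a direct-effect piece and a carryover piece, the same identification of which first-order terms inherit the $O(\delta)$ factor (via Lemma \ref{lemma:ols_a2} and $\|\Gamma_k\|=O(\delta)$) versus which survive as the two leading sums, and the same Bernstein-plus-union-bound machinery producing the $\log(nT)$ factors and the $O((nT)^{-\kappa})$ failure probability. The only difference is presentational: you extract the leading term $\tfrac{4}{n}\sum_i(A_{i,t}-\tfrac12)e_{i,t}$ via the sample-level Frisch--Waugh--Lovell identity, whereas the paper explicitly inverts the population matrix $\Sigma_t^*$ (whose last row $(-2,0,4)$ encodes the same computation) and pushes $\widehat{\Sigma}_t^{-1}-\Sigma_t^{*-1}$ into the remainders.
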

Here, the first two terms in \eqref{eqn:lemma2errorbound} are closely related to the autocorrelation term in \eqref{eq:ATE_difference}. Indeed, \eqref{eq:ATE_difference} equals the difference in their variance between AD and an $m$-switchback design. Meanwhile, the third term is closely related to the carryover effect term in \eqref{eq:ATE_difference} and the last term is a high-order reminder term. 
\begin{proof}
For any $m$-switchback design, since the initial action is uniformly generated from $\{0,1\}$, we have $\Mean A_t=1/2$, $t=1,\ldots,T$. According to Lemma \ref{lemma:ols_a2},
\begin{eqnarray}\label{eqn:Sigma}
    \Mean\begin{pmatrix}
            1	\\
            S_t	\\
            A_t
        \end{pmatrix} (1,S_t^\top,A_t)=\left(\begin{array}{ccc}
        1 & \Mean S_t^\top & \Mean A_t \\
        \Mean S_t & \Mean S_t S_t^\top & \Mean A_t S_t \\
        \Mean A_t & \Mean A_t S_t^\top & \Mean A_t
    \end{array}
    \right) =\underbrace{\left(\begin{array}{ccc}
            1 & \Mean S_t^\top & 0.5 \\
            \Mean S_t & \Mean S_t S_t^\top & 0.5 \Mean S_t + O(\delta)\\
            0.5 & 0.5 \Mean S_t^\top + O(\delta) & 0.5
        \end{array}
        \right)}_{\Sigma_t}.
\end{eqnarray}
We first show that $\Sigma_t$ is invertible under Assumptions \ref{asmp:nonsingular} and \ref{asmp:boundstates}. 
Define
\begin{eqnarray*}
    \Sigma_t^*=\left(\begin{array}{ccc}
            1 & \Mean S_t^\top & 0.5 \\
            \Mean S_t & \Mean S_t S_t^\top & 0.5 \Mean S_t \\
            0.5 & 0.5 \Mean S_t^\top  & 0.5
        \end{array}
        \right).
\end{eqnarray*}
With some calculations, we can represent $\Sigma_t^*$ by 
\begin{eqnarray*}
    \underbrace{\left(\begin{array}{ccc}
        1 &  &\\
        \mathbb{E} S_t & 1 & \\
        0.5 & & 1
    \end{array}\right)}_{L}
    \left(\begin{array}{ccc}
            1 & &  \\
            & \Cov(S_t) &  \\
             &   & 0.25
        \end{array}
        \right)
    \underbrace{\left(\begin{array}{ccc}
        1 & \mathbb{E} S_t^\top & 0.5\\
        & 1 & \\
        & & 1
    \end{array}\right)}_{L^\top},
\end{eqnarray*}
where $L$ is a lower-triangular matrix and is hence invertible. Under Assumption \ref{asmp:nonsingular}, $\Sigma_t^*$ is invertible and its inverse satisfies
\begin{eqnarray}\label{eqn:sigmatinverse}
    \|(\Sigma_t^*)^{-1}\|_2=\sup_{a:\|a\|_2=1} a^\top \underbrace{\left(\begin{array}{ccc}
        1 & -\mathbb{E} S_t^\top & -0.5\\
        & 1 & \\
        & & 1
    \end{array}\right)
    \left(\begin{array}{ccc}
            1 & &  \\
            & \Cov^{-1}(S_t) &  \\
             &   & 4
        \end{array}
        \right)
    \left(\begin{array}{ccc}
        1 &  &\\
        -\mathbb{E} S_t & 1 & \\
        -0.5 & & 1
    \end{array}\right)}_{\Sigma_t^{*-1}}a,
\end{eqnarray}
which is of the order $O(\lambda_{\min}^{-1}[\Cov(S_t)])$ under Assumption \ref{asmp:nonsingular}. 

Consequently, the maximum eigenvalue of $(\Sigma_t^*)^{-1}$ is of the order $O(\lambda_{\min}^{-1}[\Cov(S_t)])$. Equivalently, the minimum eigenvalue of $\Sigma_t^*$ is lower bounded by $c \lambda_{\min}[\Cov(S_t)]$ for some constant $c>0$. It follows that
\begin{eqnarray}\label{eqn:minSigmat1}
    \lambda_{\min}(\Sigma_t)= \inf_{a:\|a\|_2=1} a^\top \Sigma_t a\ge \inf_{a:\|a\|_2=1} a^\top \Sigma_t^* a - \|\Sigma_t-\Sigma_t^*\|_2= c \lambda_{\min}[\Cov(S_t)] - O(\delta),
\end{eqnarray}
and the rightmost term satisfies
\begin{eqnarray}\label{eqn:minSigmat2}
    c \lambda_{\min}[\Cov(S_t)] - O(\delta)=\frac{c}{2}\lambda_{\min}[\Cov(S_t)]+\frac{c}{2}\lambda_{\min}[\Cov(S_t)]- O(\delta)\ge \frac{c\epsilon}{2}+\frac{c\bar{c}\delta}{2}-O(\delta)\ge \frac{c\epsilon}{2},
\end{eqnarray}
under Assumption \ref{asmp:nonsingular}, which requires that $\lambda_{\min}[\Cov(S_t)]\ge \max(\epsilon,\bar{c} \delta)$ for some sufficiently large constant $\bar{c}>0$. This proves the invertibility of $\Sigma_t$. 

We next analyze the ATE estimator. Recall from \eqref{ate_est_formula} that
\begin{eqnarray*}
    \textrm{ATE}=\frac{1}{T}\sum_{t=1}^T \gamma_t+\frac{1}{T}\sum_{t=2}^T \beta_t^\top \Big[ \sum_{k=1}^{t-1} (\Phi_{t-1}\Phi_{t-2}\ldots\Phi_{k+1}) \Gamma_k \Big]
\end{eqnarray*}
The OLS-based estimator is constructed by plugging the estimated $\{\gamma_t,\beta_t,\Phi_t,\Gamma_t\}_t$ into this expression. Applying Taylor's expansion to the OLS-based estimator around the true parameter values yields that
\begin{eqnarray}\label{eqn:someeqn}
\begin{split}
    &\textrm{ATE}_{\textrm{SB}}^{(m)}-\textrm{ATE}=
    \underbrace{\frac{1}{T}\sum_{t=1}^T (\widehat{\gamma}_t-\gamma_t)}_{I_1}+ \underbrace{\frac{1}{T}\sum_{t=2}^T \beta_t^\top \Big[ \sum_{k=1}^{t-1}(\Phi_{t-1}\ldots\Phi_{k+1})(\widehat{\Gamma}_k-\Gamma_k) \Big]}_{I_2}\\
    +&\underbrace{\frac{1}{T}\sum_{t=2}^T (\widehat{\beta}_t-\beta_t)^\top\Big[ \sum_{k=1}^{t-1}(\Phi_{t-1}\ldots\Phi_{k+1}) \Gamma_k \Big]}_{I_3}+\underbrace{\sum_{t=2}^{T-1}\frac{d\textrm{IE}}{d\Phi_{t}}(\widehat{\Phi}_t-\Phi_t)}_{I_4}+ \textrm{second-order term},
\end{split}
\end{eqnarray}
where IE is a shorthand for the indirect/delayed effect $T^{-1}\sum_{t=2}^T \beta_t^\top[\sum_{k=1}^{t-1}(\Phi_{t-1}\ldots\Phi_{k+1})\Gamma_k]$. 
    
Notice that the first four terms on the RHS of \eqref{eqn:someeqn} (e.g., $I_1$-$I_4$) are the first-order terms, which we now analyze.

\textbf{Analysis of $I_1$}. By definition, we have
\begin{eqnarray}\label{eqn:thetahat-theta}
    \left(
    \begin{array}{c}
        \widehat{\phi}_t-\phi_t  \\
        \widehat{\beta}_t-\beta_t \\
        \widehat{\gamma}_t-\gamma_t
    \end{array}
    \right)=\widehat{\Sigma}_t^{-1} \frac{1}{n}\sum_{i=1}^n \left(
    \begin{array}{c}
        e_{i,t}  \\
        S_{i,t}e_{i,t} \\
        A_{i,t}e_{i,t}
    \end{array}
    \right).
\end{eqnarray}
Together with \eqref{eqn:Sigma}, one can show under Assumptions \ref{asmp:reward} and \ref{asmp:nonsingular} that
\begin{eqnarray*}
    \left(
    \begin{array}{c}
        \widehat{\phi}_t-\phi_t  \\
        \widehat{\beta}_t-\beta_t \\
        \widehat{\gamma}_t-\gamma_t
    \end{array}
    \right)=\underbrace{\Sigma_t^{*-1} \frac{1}{n}\sum_{i=1}^n \left(
    \begin{array}{c}
        e_{i,t}  \\
        S_{i,t}e_{i,t} \\
        A_{i,t}e_{i,t}
    \end{array}
    \right)}_{\textrm{leading term}}+  \underbrace{[\widehat{\Sigma}_t^{-1}-\Sigma_t^{*-1}]\frac{1}{n}\sum_{i=1}^n \left(
    \begin{array}{c}
        e_{i,t}  \\
        S_{i,t}e_{i,t} \\
        A_{i,t}e_{i,t}
    \end{array}
    \right)}_{\textrm{reminder term}}.
\end{eqnarray*}
Consider the leading term first. Recall that \eqref{eqn:sigmatinverse} obtains a closed-form expression for $\Sigma_t^{*-1}$. With some calculations, it can be shown that the last row of $\Sigma_t^{*-1}$ equals $(-2, 0, 4)$. This leads to
\begin{eqnarray}\label{eqn:I1lead}
    \textrm{the last entry of the leading term}=\frac{4}{n}\sum_{i=1}^n (A_{i,t}-\frac{1}{2})e_{i,t}. 
\end{eqnarray}
Next, consider the reminder term. Note that $(\widehat{\Sigma}_t^{-1}-\Sigma_t^{*-1})=\widehat{\Sigma}_t^{-1} (\Sigma_t^*-\widehat{\Sigma}_t)\Sigma_t^{*-1}$. The matrix in the middle can be decomposed into the sum of $\Sigma_t^*-\Sigma_t$ and $\Sigma_t-\widehat{\Sigma}_t$. The spectral norm of the first difference, $\Sigma_t^*-\Sigma_t$, is of the order $O(\delta)$. Below, we aim to apply the matrix Bernstein's inequality \citep{tropp2012user} to upper bound the spectral norm of the second difference. 

Denote $Z_{it}=(1,S_{it}^\top, A_{it})^\top \in \mathbb{R}^{d+2}$, and $X_{it}=Z_{it}Z_{it}^\top \in \mathbb{R}^{(d+2) \times (d+2)}$, $X_{it}^*=X_{it}-\Mean X_{it} \in \mathbb{R}^{(d+2) \times (d+2)}$. Clearly, $\Sigma_t=\Mean X_{it}$, $\widehat{\Sigma}_t=n^{-1}\sum_{i=1}^n X_{it}$. Under the Assumptions \ref{asmp:nonsingular} and \ref{asmp:boundstates}, we have $\max_t(\|\widehat{\Sigma}_t \|_2, \| \Sigma_t\|_2 ) \leq c_{\Sigma}$ and  $\max_{i,t} \| X_{it}^*\|_2 \leq c_{\Sigma}$, for some finite constant $0<c_{\Sigma}<\infty$. Similarly, let $\nu^2=\max_{t} \| |\sum_{i=1}^n \Mean (X_{it}^*)^2 \|$. We have $\nu^2=O(n)$. Applying the matrix Bernstein's inequality allows us to obtain the following high probability tail bound for $ \max_t \|n^{-1} \sum_{i=1}^n X_{it}^* \|_2$,  
\begin{eqnarray}\label{eqn:somematrixBernstein}
    \mathbb{P}\Big(\max_t \|\frac{1}{n}\sum_{i=1}^n X_{it}^* \|_2 \leq  \tau \Big) \geq 1- 2 T (d+2)\exp\Big(-\frac{n^2 \tau^2 /2}{\nu^2 + \frac{n \tau c_{\Sigma}}{3}}\Big), 
\end{eqnarray} 
 for every $\tau>0$. As the state dimension $d$ is fixed, by setting $\tau$ to be proportional to $n^{-1/2}\kappa \sqrt{\log (nT)}$, the above inequality holds with high probability, at least $1-O((nT)^{-\kappa})$, for any sufficient large $\kappa>0$. 

 To summarize, on the event where \eqref{eqn:somematrixBernstein} holds with $\tau$ proportional to $n^{-1/2}\kappa \sqrt{\log (nT)}$, we have $\|\widehat{\Sigma}_t-\Sigma_t^*\|_2=O(\delta+n^{-1/2}\kappa \sqrt{\log (nT)})$.  Given a sufficiently large $n$, using similar arguments to the proofs of \eqref{eqn:minSigmat1} and \eqref{eqn:minSigmat2}, we can show that 
 \begin{eqnarray}\label{eqn:hatsigmalowerbound}
    \lambda_{\min}(\widehat{\Sigma}_t^{-1})\ge \frac{c\epsilon}{4}. 
 \end{eqnarray}
Since
\begin{eqnarray*}
     \widehat{\Sigma}_t^{-1}-\Sigma_t^{*-1}=\Sigma_t^{*-1} [\Sigma_t^*-\Sigma_t ]\widehat{\Sigma}_t^{-1}+\Sigma_t^{*-1} [\Sigma_t-\widehat{\Sigma}_t]\widehat{\Sigma}_t^{-1},
\end{eqnarray*}
the reminder term can be similarly decomposed into 
\begin{eqnarray*}
    \Sigma_t^{*-1} [\Sigma_t^*-\Sigma_t ]\widehat{\Sigma}_t^{-1}\frac{1}{n}\sum_{i=1}^n \left(
    \begin{array}{c}
        e_{i,t}  \\
        S_{i,t}e_{i,t} \\
        A_{i,t}e_{i,t}
    \end{array}
    \right)+\Sigma_t^{*-1} [\Sigma_t-\widehat{\Sigma}_t]\widehat{\Sigma}_t^{-1}\frac{1}{n}\sum_{i=1}^n \left(
    \begin{array}{c}
        e_{i,t}  \\
        S_{i,t}e_{i,t} \\
        A_{i,t}e_{i,t}
    \end{array}
    \right).
\end{eqnarray*}
According to \eqref{eqn:thetahat-theta}, \eqref{eqn:hatsigmalowerbound}, the boundedness of $\|(\Sigma_t^*)^{-1}\|_2$ (see \eqref{eqn:sigmatinverse}) and that $\|\Sigma_t^*-\Sigma_t\|_2=O(\delta)$, the first term is of the order $O(\delta \|n^{-1}\sum_{i=1}^n Z_{i,t}e_{i,t}\|_2)$. 

Consider the second term. By definition,
\begin{eqnarray*}
    (\alpha_t,\beta_t^\top,\gamma_t)^\top=\Sigma_t^{-1}\mathbb{E}(1,S_t,A_t)^\top r(S_t,A_t). 
\end{eqnarray*}
Combining \eqref{eqn:minSigmat1} with \eqref{eqn:minSigmat2} yields the boundedness of $\|\Sigma_t^{-1}\|_2$. This together with the bounded states and bounded rewards assumptions leads to 
\begin{eqnarray}\label{eqn:boundedbeta}
    \sup_t \|(\alpha_t,\beta_t^\top,\gamma_t)\|_2=O(R_{\max}). 
\end{eqnarray}
This together with the bounded rewards and states assumptions yield that $\max_t |e_t|=O(R_{\max})$. Using similar arguments to those for establishing \eqref{eqn:somematrixBernstein}, it can be shown that $\|n^{-1}\sum_{i=1}^n Z_{it}e_{it}\|_2$ is of the order $n^{-1/2}\kappa R_{\max} \sqrt{\log (nT)}$ with probability at least $1-O((nT)^{-\kappa})$. This together with \eqref{eqn:somematrixBernstein}, \eqref{eqn:hatsigmalowerbound} and the boundedness of $\|(\Sigma_t^*)^{-1}\|_2$ yields that the second term is of the order $O(n^{-1}R_{\max}\log (nT))$, with probability at least $1-O((nT)^{-\kappa})$. 

As such, we obtain
\begin{eqnarray*}
    \textrm{the reminder term}=O\Big(
    \|\frac{\delta}{n}\sum_{i=1}^nZ_{i,t}e_{i,t}\|_2\Big)+O\Big(\frac{R_{\max}\log(nT)}{n}\Big),
\end{eqnarray*}
which together with \eqref{eqn:I1lead} yields that
\begin{eqnarray}\label{eqn:I1}
    I_1=\frac{4}{nT}\sum_{i=1}^n\sum_{t=1}^T (A_{i,t}-\frac{1}{2})e_{i,t}+O\Big(\frac{\delta}{T}\sum_{t=1}^T \Big\|\frac{1}{n}\sum_{i=1}^n Z_{i,t}e_{i,t}\Big\|_2\Big)+O\Big(\frac{R_{\max}\log(nT)}{n}\Big).
\end{eqnarray}

\textbf{Analysis of }$I_2$. The analysis of $I_2$ is very similar to that of $I_1$. Specifically, using similar arguments to the proof of \eqref{eqn:I1}, it can be shown that 
\begin{eqnarray}\label{eqn:gammak}
    \widehat{\Gamma}_t-\Gamma_t=\frac{4}{n}\sum_{i=1}^n (A_{i,t}-\frac{1}{2})E_{i,t}+O\Big(\frac{\delta}{T}\sum_{t=1}^T \Big\|\frac{1}{n}\sum_{i=1}^n \textrm{vec}(Z_{i,t}E_{i,t}^\top)\Big\|_2\Big)+O\Big(\frac{\log(nT)}{n}\Big).
\end{eqnarray}
By \eqref{eqn:boundedbeta}, $\sup_t \|\beta_t\|_2=O(R_{\max})$. Since $\max_t \|\Phi_t\|_2\le \rho_{\Phi}$, using similar arguments to the proof of Lemma \ref{lemma:ols_a2}, we obtain that $\sup_t \|\beta_t^\top \sum_{k=1}^{t-1} (\Phi_{t-1}\ldots\Phi_{k+1})\|_2=O(R_{\max})$. This together with \eqref{eqn:gammak} yields that
\begin{eqnarray}\label{eqn:I2}
\begin{split}
    &I_2=\frac{4}{nT}\sum_{i=1}^n\sum_{t=2}^{T}  \beta_{t}^\top \Big[\sum_{k=1}^{t-1} (\Phi_{t-1} \cdots \Phi_{k+1} ) {(A_{i,k}-\frac{1}{2})}E_{i,k}\Big]\\ +&O\Big(\frac{\delta R_{\max}}{T}\sum_{t=1}^T \Big\|\frac{1}{n}\sum_{i=1}^n  \textrm{vec}(Z_{i,t}E_{i,t}^\top)\Big\|_2\Big)
    +O\Big(\frac{R_{\max}\log(nT)}{n}\Big).
\end{split}
\end{eqnarray}
 
 \textbf{Analysis of }$I_3$. Since $\| \Gamma_t\|_2=O(\delta)$ and $\max_t \|\Phi_t\|_2\le \rho_{\Phi}$, using similar arguments to the proof of Lemma \ref{lemma:ols_a2}, it can be shown that $\sup_t \|\sum_{k=1}^{t-1} (\Phi_{t-1}\ldots\Phi_{k+1}) \Gamma_k\|_2=O(\delta)$. It follows that
\begin{eqnarray}\label{eqn:I30}
    I_3=O\Big(\frac{\delta}{T}\sum_{t=2}^T \|\widehat{\beta}_t-\beta_t\|_2\Big).
\end{eqnarray}
According to \eqref{eqn:hatsigmalowerbound}, $\|\widehat{\beta}_t-\beta_t\|_2$ is of the order of magnitude $O(\|n^{-1}\sum_{i=1}^n Z_{i,t}e_{i,t}\|_2)$. This together with \eqref{eqn:I30} leads to
\begin{eqnarray}\label{eqn:I3}
    I_3=O\Big(\frac{\delta}{T}\sum_{t=1}^T \|\frac{1}{n}\sum_{i=1}^n Z_{i,t}e_{i,t}\|_2\Big).
\end{eqnarray}

\textbf{Analysis of }$I_4$. 
Using similar arguments to the proof of the proof of Lemma \ref{lemma:ols_a2}, one can show that $\max_t \|\frac{d\textrm{IE}}{d\Phi_{t}} \|_2=O(T^{-1} \delta R_{\max})$. 
Meanwhile, similar to the analysis of $I_3$, we can show that $\|\widehat{\Phi}_t-\Phi_t\|_2$ is of the same order of magnitude to $\|n^{-1}\sum_{i=1}^n Z_{i,t}E_{i,t}\|_2$. As such, we obtain that
\begin{eqnarray}\label{eqn:I4}
    I_4=O\Big(\frac{\delta R_{\max}}{T}\sum_{t=1}^T \|\frac{1}{n}\sum_{i=1}^n \textrm{vec}(Z_{i,t}E_{i,t}^\top)\|_2\Big). 
\end{eqnarray}

\textbf{Second-order terms}. Using similar arguments to the proof of \eqref{eqn:somematrixBernstein}, it can be shown that these second-order terms are of the order $O(n^{-1}\log(nT)R_{\max})$, with probability at least $1-O((nT)^{-\kappa})$. Together with \eqref{eqn:someeqn}, \eqref{eqn:I1}, \eqref{eqn:I2}, \eqref{eqn:I3} and \eqref{eqn:I4}, we conclude the proof of Lemma \ref{lemma:ols_a3}. 
\end{proof}

\textbf{Proof of Theorem \ref{thm::main}}-OLS
\begin{proof}
To ease notation, let $c_t$ denote the $d$-dimensional vector $[\sum_{k=t}^{T-1}\beta_{k+1}^\top (\Phi_k\cdots \Phi_{t+1})]^\top$. According to the Lemma \ref{lemma:ols_a3}, with a sufficiently large $\kappa >0$, 
	the asymptotic MSE of the ATE estimator is given by
	\begin{eqnarray}
		\label{eqn:decompose_ate_var}
		\begin{aligned}
			\textrm{MSE}(\ATE_{\text{SB}}^{(m)} ) &= \mathbb{E}(\ATE_{\text{SB}}^{(m)}-\ATE)^2 \prob(\textrm{All the high probability bounds in Lemma \ref{lemma:ols_a3} hold})\\
			&+\mathbb{E}(\ATE_{\text{SB}}^{(m)}-\ATE)^2 \times \prob(\textrm{One of the high probability bounds in Lemma \ref{lemma:ols_a3} does not hold})\\
			&= \mathbb{E}(\ATE_{\text{SB}}^{(m)}-\ATE)^2 \times \Big[1-O((nT)^{-\kappa})\Big]
			+\mathbb{E}(\ATE_{\text{SB}}^{(m)}-\ATE)^2 \times \Big[O((nT)^{-\kappa})\Big].
		\end{aligned}
	\end{eqnarray} 
When the high probability bounds do not holds, our ATE estimator can be bounded by $| \ATE_{\text{SB}}^{(m)} |\leq R_{\max}$ under Assumption \ref{asmp:est}. Since $\kappa$ can be made arbitrarily large, the second term in the last line can be made arbitrarily small. Consequently, in the rest of the proof, we focus on the case where all high probability bounds in Lemma \ref{lemma:ols_a3} hold. All the expectations below are calculated by explicitly assuming that these bounds hold. 

Let $I$ denote the third term in 
\eqref{eqn:lemma2errorbound} that is of the order $O(T^{-1}\delta\sum_{t=1}^T \|n^{-1}\sum_{i=1}^n \mu_{i,t}\|_2)$ and $I^*$ denote the second-order term in \eqref{eqn:lemma2errorbound} that is of the order $O(n^{-1}R_{\max}\log(nT))$. Since the residual process $\{e_t\}_t$ is independent of all state-action pairs, it is also independent of $\{E_t\}_t$. Consequently, we obtain that
\begin{eqnarray}
    \label{eqn:someimmediatestep}
    \begin{aligned}
   &\textrm{MSE}(\ATE_{\text{SB}}^{(m)}) = \frac{16}{nT^2}\Var\Big[\sum_{t=1}^T {(A_t-\frac{1}{2})e_t} \Big] +\frac{16}{nT^2}\Var\Big[\sum_{t=1}^{T-1} {(A_t-\frac{1}{2})c_{t}^\top E_t} \Big] \\
   +&\mathbb{E}(I+I^*)^2+2\mathbb{E} (I+I^*) \Big[\frac{4}{nT}\sum_{i=1}^n \sum_{t=1}^T {(A_{i,t}-\frac{1}{2})e_{i,t}}\Big] + 2\mathbb{E}(I+I^*)\Big[\frac{4}{nT}\sum_{i=1}^n\sum_{t=1}^{T-1}  {(A_{i,t}-\frac{1}{2})} c_{t}^\top E_{i,t}\Big].
    \end{aligned}
\end{eqnarray}
Let us first focus on the second line of \eqref{eqn:someimmediatestep}. According to Cauchy-Schwarz inequality, the first term on the second line can be upper bounded by $2\mathbb{E}I^2 + 2\mathbb{E}(I^*)^2$. Using Cauchy-Schwarz inequality again, we have
\begin{eqnarray*}
    2\mathbb{E}I^2=O\Big(\mathbb{E}\Big|\frac{\delta}{T}\sum_{t=1}^T \Big\|\frac{1}{n}\sum_{i=1}^n \mu_{i,t}\Big\|_2\Big|^2\Big)=O\Big(\frac{\delta^2}{T}\sum_{t=1}^T \mathbb{E} \Big\|\frac{1}{n}\sum_{i=1}^n \mu_{i,t}\Big\|_2^2\Big). 
\end{eqnarray*}
We have shown that $\max_t |e_t|=O(R_{\max})$ in the analysis of $I_1$. Using similar arguments, it can be shown that $\max_t \|E_t\|_2=O(1)$ under Assumption \ref{asmp:boundstates}. With some calculations, we can obtain that $\mathbb{E} I^2=O(n^{-1}\delta^2 R_{\max}^2)$. Meanwhile, according to Lemma \ref{lemma:ols_a3}, we have that $\mathbb{E} (I^*)^2=O(n^{-2}R_{\max}^2\log^2 (nT))$. Consequently, we have
\begin{eqnarray}\label{IIstar}
    \mathbb{E}(I+I^*)^2=O\Big(\frac{\delta^2 R_{\max}^2}{n}\Big)+O\Big(\frac{R_{\max}^2 \log^2(nT)}{n^2}\Big).
\end{eqnarray}
Similarly, using Cauchy-Schwarz inequality, the second and third terms in the second line of \eqref{eqn:someimmediatestep} can be upper bounded by
\begin{eqnarray*}
    \sqrt{\mathbb{E}(I+I^*)^2 \mathbb{E} \Big|\frac{4}{nT}\sum_{i=1}^n \sum_{t=1}^T {(A_{i,t}-\frac{1}{2})e_{i,t}}\Big|^2}\,\,\hbox{and}\,\,
    \sqrt{\mathbb{E}(I+I^*)^2 \mathbb{E} \Big|\frac{4}{nT}\sum_{i=1}^n\sum_{t=1}^{T-1}  {(A_{i,t}-\frac{1}{2})} c_{t}^\top E_{i,t}\Big|^2},
\end{eqnarray*}
respectively. Using \eqref{IIstar}, one can similarly show that the above two expressions are of the order 
\begin{eqnarray}\label{eqn:secondlineorder}
    O\Big(\frac{\delta R_{\max}^2}{n}\Big)+O\Big(\frac{R_{\max}^2\log(nT)}{n^{3/2}}\Big). 
\end{eqnarray}
Since the state space is discrete, $\delta$ -- which equals the difference in two probability mass functions -- is bounded. Given a sufficiently large $n$, $\mathbb{E}(I+I^*)^2$ is upper bounded by \eqref{eqn:secondlineorder} as well. Consequently, the second line of \eqref{eqn:someimmediatestep} is also of the order specified in \eqref{eqn:secondlineorder}. 
	
We next consider the second term on the RHS of the first line  of \eqref{eqn:someimmediatestep}. The sequence \( \{E_t\} \) forms a martingale difference sequence with respect to the filtration  $\langle\sigma(\mathcal{F}_t): t \geq 1\rangle$ where \( \mathcal{F}_t = \{S_j, A_j\}_{j \leq t} \), and is uncorrelated with the sequence \( \{A_t\}_t \) under the switchback design.
 Additionally, under both the alternating-day design and the switchback design, each $A_t$ follows a Bernoulli(0.5) random variable. Its marginal variance is given by 0.25. As such, the second term in the first line of \eqref{eqn:someimmediatestep} equals
\begin{eqnarray}\label{eqn:secondtermfirstline}
        \sum_{t=1}^{T-1} \frac{4 c_t^\top\Cov(E_t)c_t}{nT^2 }, 
\end{eqnarray}
and is design-independent. 

Finally, we analyze the first term on the RHS of the first line of \eqref{eqn:someimmediatestep}. Consider a given $m$-switchback design. For any integers $1\le t_1\le t_2\le T$, we represent them as $t_1=l_1+k_1m$ and $t_2=l_2+k_2m$ such that $1\le l_1,l_2\le m$ and $0\leq k_1, k_2 <T/m$. By definition, $\Cov(A_{t_1},A_{t_2})$ equals $0.25$ if $k_2-k_1$ is even and $-0.25$ otherwise. 
Then the first term on the RHS of \eqref{eqn:someimmediatestep} can thus be represented as
\begin{eqnarray*}
    \frac{4}{nT^2 } \sum_{l_1=1}^m \sum_{l_2=1}^m \sum_{k_1=0}^{T/m -1}  \sum_{k_2=0}^{T/m -1}   (-1)^{ |k_1-k_2| }  \sigma_{e}(l_1+k_1m,l_2+k_2m).
\end{eqnarray*}
When $m=T$, it equals
\begin{eqnarray*}
    \frac{4}{nT^2} \sum_{l_1=1}^m \sum_{l_2=1}^m \sum_{k_1=0}^{T/m -1}  \sum_{k_2=0 }^{T/m-1}   \sigma_{e}(l_1+k_1m,l_2+k_2m).
\end{eqnarray*} 
Together with \eqref{eqn:secondlineorder} and \eqref{eqn:secondtermfirstline}, we obtain that
\begin{equation*}
    \begin{split}
\textrm{MSE}(\ATE_{\text{SB}}^{(m)} ) &= \frac{4}{nT^2} \sum_{l_1=1}^m \sum_{l_2=1}^m \sum_{k_1=0}^{T/m -1}  \sum_{k_2=0 }^{T/m -1}   (-1)^{ |k_1-k_2| }  \sigma_{e}(l_1+k_1m,l_2+k_2m)\\
        &+\frac{4 }{nT^2}\sum_{t=1}^{T-1}c_t^\top\Cov(E_t)c_t + O\Big(\frac{\delta  R_{\max}^2 }{n}\Big) + O\Big(\frac{R_{\max}^2\log(nT)}{n^{3/2}}\Big).
    \end{split}
\end{equation*}
Notice that for AD, we have$\Sigma_t=\Sigma_t^*$ and the third term on the RHS of \eqref{eqn:someeqn} equals zero. This leads to the following lower bound for the difference in the MSE: 
    \begin{equation*}
		\begin{split}
			\textrm{MSE}(\ATE_{\text{AD}} ) -	\textrm{MSE}(\ATE_{\text{SB}}^{(m)} ) \geq  \frac{16}{nT^2} \sum_{\substack{k_2-k_1=1,3,5, \ldots\\ 0 \le k_1<k_2 < T/m }} \sum_{l_1=1}^{m} \sum_{l_2=1}^{m}  \sigma_{e}(l_1+k_1 m,l_2+k_2m) - \frac{c \delta R_{\max}^2  }{n} -O\Big(\frac{R_{\max}^2\log(nT)}{n^{3/2}}\Big),
		\end{split}
	\end{equation*} 
    for some constant $c>0$. The proof is hence completed.
\end{proof}

\textit{\textbf{Proof of Corollary \ref{cor1}}}.
\begin{proof}
	Recall that under the autoregressive covariance structure, $\sigma_e(t_1, t_2)=\sigma^2 \rho^{t_2-t_1}$ for some $0<\rho<1$ and any $t_1\le t_2$. 
	It follows that
	\begin{eqnarray*}
		&&\frac{16}{T^2} \sum_{\substack{k_2-k_1=1,3,5, \ldots\\ 0 \le k_1<k_2 < T/m }} \sum_{l_1=1}^{m} \sum_{l_2=1}^{m}  \sigma_{e}(l_1+k_1m,l_2+k_2m)\\
		&=&\frac{16}{nT^2} \sum_{\substack{k_2-k_1=1,3,5, \ldots\\ 0 \le k_1<k_2< T/m }} \sum_{l_1=1}^{m} \sum_{l_2=1}^{m} \sigma^2 \rho^{(k_2-k_1)m+(l_2-l_1)} \\
		&=&\frac{16}{nT^2} \sum_{\substack{k_2-k_1=1,3,5, \ldots\\ 0\le k_1<k_2< T/m }} \sigma^2 \rho^{(k_2-k_1)m} \sum_{l_1=1}^{m} \sum_{l_2=1}^{m}  \rho^{l_2-l_1}.
	\end{eqnarray*}
	With some calculations, it is immediate to see that
	\begin{equation}\label{eqn:autoregressive}
		\sum_{l_1=1}^{m} \sum_{l_2=1}^{m}  \rho^{l_2-l_1}=\sum_{l_1=1}^{m} \rho^{-l_1} \sum_{l_2=1}^{m}  \rho^{l_2}=\frac{\rho^{-1}(1-\rho^{-m})}{1-\rho^{-1}}\frac{\rho(1-\rho^m)}{1-\rho}=\frac{\rho^{1-m}(1-\rho^m)^2}{(1-\rho)^2}.
	\end{equation}
	Similarly, when $K=\frac{T}{m} $ is odd, 
	\begin{eqnarray*}
		\sum_{\substack{k_2-k_1=1,3,5, \ldots\\ 0 \le k_1<k_2< K}} \rho^{(k_2-k_1)m}=\sum_{k_2\in \{1,\ldots,K-2\}} \rho^{k_2m}+\sum_{k_2\in \{2,\ldots,K-1\}} \rho^{(k_2-1)m}+\ldots+\rho^m+\rho^m\\
		=2\frac{\rho^m-\rho^{Km}}{1-\rho^{2m}}+2\frac{\rho^m-\rho^{(K-2)m}}{1-\rho^{2m}}+\ldots+2\frac{\rho^m-\rho^{3m}}{1-\rho^{2m}}+2\frac{\rho^m-\rho^{m}}{1-\rho^{2m}}\\
		=\frac{\rho^m (K+1)}{1-\rho^{2m}}-\frac{2\rho^m(1-\rho^{(K+1)m})}{(1-\rho^{2m})^2}=\frac{\rho^m[T/m-1-(T/m+1)\rho^{2m}+2\rho^{T+m}]}{(1-\rho^{2m})^2}.
	\end{eqnarray*}
	On the other hand, when $K$ is even, we have
	\begin{eqnarray*}
		\sum_{\substack{k_2-k_1=1,3,5, \ldots\\ 0 \le k_1<k_2< K}}\rho^{(k_2-k_1)m}=\sum_{k_2\in \{1,\ldots,K-1\}} \rho^{k_2m}+\sum_{k_2\in \{2,\ldots,K-2\}} \rho^{(k_2-1)m}+\ldots+\rho^m+\rho^m\\
		=\frac{\rho^m-\rho^{(K+1)m}}{1-\rho^{2m}}+2\frac{\rho^m-\rho^{(K-1)m}}{1-\rho^{2m}}+\ldots+2\frac{\rho^m-\rho^{3m}}{1-\rho^{2m}}+2\frac{\rho^m-\rho^{m}}{1-\rho^{2m}}\\
		=\frac{\rho^m(K+1)-\rho^{(K+1)m}}{1-\rho^{2m}}-\frac{2\rho^m(1-\rho^{Km})}{(1-\rho^{2m})^2}=\frac{\rho^m[T/m-1-(T/m+1)\rho^{2m}+\rho^{T}+\rho^{T+2m}]}{(1-\rho^{2m})^2}.
	\end{eqnarray*}
	To summarize, we obtain that
	\begin{eqnarray*}
		\sum_{\substack{k_2-k_1=1,3,5, \ldots\\ 0 \le k_1<k_2< K}}\rho^{(k_2-k_1)m}=\left\{\begin{array}{ll}
			\displaystyle \frac{\rho^m[T/m-1-(T/m+1)\rho^{2m}+2\rho^{T+m}]}{(1-\rho^{2m})^2},  & K ~\textrm{is odd}  \\
			\displaystyle \frac{\rho^m[T/m-1-(T/m+1)\rho^{2m}+\rho^{T}+\rho^{T+2m}]}{(1-\rho^{2m})^2},  & K~\textrm{is even}. 
		\end{array}
		\right.
	\end{eqnarray*}
In either case, as $T\to \infty$, $T^{-1} \sum_{\substack{k_2-k_1=1,3,5, \ldots\\ 0 \le k_1<k_2< K}}\rho^{(k_2-k_1)m}$ becomes asymptotically equivalently to $\rho^m/[m(1-\rho^{2m})]$. Combining this together with \eqref{eqn:autoregressive} yields the desired result. The proof is hence completed.
\end{proof}
\textit{\textbf{Proof of Corollary \ref{cor2}}}. 
\begin{proof}
	Under the moving average covariance structure, we have for any $t_1\le t_2$ that
	\begin{eqnarray*}
		\Cov(e_{t_1}, e_{t_2})= \frac{1}{K}\Mean (\varepsilon_{t_1+1}+\varepsilon_{t_1+2} +\ldots \varepsilon_{t_1+K}) 		(\varepsilon_{t_2+1}+\varepsilon_{t_2+2} +\ldots \varepsilon_{t_2+K})  
		=\sigma^2\frac{[K-t_2+t_1]_+}{K},
	\end{eqnarray*}
	where $[z]_+=\max(z,0)$ for any $z\in \mathbb{R}$. Accordingly, we have
	\begin{eqnarray}\label{eqn:ma}
		\begin{split}
			&\frac{16}{T^2} \sum_{\substack{k_2-k_1=1,3,5, \ldots\\ 0 \le k_1<k_2< T/m }} \sum_{l_1=1}^{m} \sum_{l_2=1}^{m}  \sigma_{e}(l_1+k_1m,l_2+k_2m)\\
			=& \frac{16\sigma^2}{T^2} \sum_{\substack{k_2-k_1=1,3,5, \ldots\\ 0 \le k_1<k_2<T/m }} \sum_{l_1=1}^{m} \sum_{l_2=1}^{m} \frac{1}{K}
			[K-(k_2-k_1)m-(l_2-l_1)]_+.
		\end{split}
	\end{eqnarray} 
	Since $m\ge K$, the expression $K-(k_2-k_1)m-(l_2-l_1)$ remains positive only when $k_2-k_1=1$ and that $l_1-l_2>m-K$. It follows that the RHS of \eqref{eqn:ma} equals
	\begin{eqnarray*}
		\frac{16\sigma^2}{T^2} \sum_{\substack{k_2-k_1=1\\ 0 \le k_1<k_2< T/m }}\sum_{l_1=1}^{m} \sum_{l_2=1}^{m} \frac{[K-m+l_1-l_2]_+}{K}=\frac{16\sigma^2(T/m-1)}{T^2}\sum_{l_1=1}^{m} \sum_{l_2=1}^{m} \frac{[K-m+l_1-l_2]_+}{K}\\
		=\frac{16\sigma^2(T/m-1)}{T^2}\Big(\frac{1+2+\ldots+K-1}{K}+\frac{1+2+\ldots+K-2}{K} + \ldots +\frac{1}{K}\Big)\\
		=\frac{8\sigma^2(T/m-1)(K^2-1)}{3T^2}.
	\end{eqnarray*}
	The proof is hence completed. 
\end{proof}
\textit{\textbf{Proof of Corollary \ref{cor3}}}
\begin{proof}
	Recall that under the exchangeable covariance structure, $\sigma_e(t_1, t_2)=\sigma^2 [\rho\mathbb{I}(t_1 \neq t_2) +\mathbb{I}(t_1 = t_2)]$ for any $t_2 \ge t_1$. 
	It follows that 
	\begin{eqnarray*}
		\frac{16}{T^2} \sum_{\substack{k_2-k_1=1,3,5, \ldots\\ 0 \le k_1<k_2< T/m }} \sum_{l_1=1}^{m} \sum_{l_2=1}^{m}  \sigma_{e}(l_1+(k_1-1)m,l_2+(k_2-1)m)\\= \frac{16}{nT^2} \sum_{\substack{k_2-k_1=1,3,5, \ldots\\ 0 \le k_1<k_2 < T/m }} \sum_{l_1=1}^{m} \sum_{l_2=1}^{m} \sigma^2 \rho
		=\frac{16m^2}{nT^2} \sum_{\substack{k_2-k_1=1,3,5, \ldots\\ 0 \le k_1<k_2< T/m }}\sigma^2 \rho.
	\end{eqnarray*} 
	When $m<T$ and $T/m$ is even, the number of elements in the set $\{(k_1,k_2):k_2-k_1=1,3,5, \ldots; 0 \le k_1<k_2 < T/m\}$ is given by $T/(2m)+2[T/(2m)-1]+\ldots+2=T^2/(4m^2)$. Similarly, when $m<T$ and $T/m$ is odd, it can be shown that the aforementioned set contains $T^2/(4m^2)-1/4$ elements. It follows that the reduction in MSE equals $4n^{-1}\sigma^2\rho$ when $T/m$ is even and $4n^{-1}\sigma^2\rho(1-m^2/T^2)$ otherwise. The proof is hence completed. 
\end{proof}

\subsection{Proofs of Theorem \ref{thm::main}-DRL and LSTD } \label{seubsec:DRL_LSTD}
We first establish the proof for the DRL estimator, followed by the proof for LSTD. With a slight abuse of notation, we use 
\(\omega_t^{a,m}(s) := \frac{p_t^a(s)}{p_t^m(s \mid a)}\) to represent the IS weight of the conditional probability mass function (pmf) of the state given the action. In particular, the numerator denotes the pmf of $S_t$ under the target policy $a$ and the denominator denotes the conditional pmf of $S_t$ under the $m$-switchback design given $A_t=a$. Additionally, recall that  
\(\omega_t^{a,m}(s, a') := \frac{p_t^a(s, a')}{p_t^m(s, a')}\) denotes the IS weight of the pmf for the state-action pair.

\textit{\textbf{Proof of Theorem \ref{thm::main}-DRL }}
\begin{proof}
	Recall that the cross-fitted version of the DRL estimator is given by
	\begin{eqnarray*}
		\textrm{ATE}_{\textrm{DRL}}^{(m)}=\frac{1}{nT} \sum_{k=1}^K\sum_{i \in \mathcal{D}_k} \psi(\{S_{i,t},A_{i,t},R_{i,t}\}_t;\{\widehat{V}_{t,-k}^{a,m}\}_{t,a},\{\widehat{\omega}_{t,-k}^{a,m}\}_{t,a}).
	\end{eqnarray*}
 Similar to the proof of Theorem 6 in \citet{kallus2022efficiently}, we can show that the ATE estimator $\textrm{ATE}_{\textrm{DRL}}^{(m)}$ is asymptotically equivalent to its ``oracle'' version $\textrm{ATE}_{\textrm{DRL}}^{(m)*}$ which works as well as if the nuisance functions were known in advance. In particular, we have
\begin{eqnarray*}
\textrm{ATE}_{\textrm{DRL}}^{(m)}=\underbrace{\frac{1}{nT}\sum_{i=1}^n\psi(\{S_{i,t},A_{i,t},R_{i,t}\}_t;\{V_t^a\}_{t,a},\{\omega_t^{a,m}\}_{t,a})}_{\textrm{ATE}_{\textrm{DRL}}^{(m)*}}+\textrm{reminder term}.
\end{eqnarray*}
Notice that the oracle estimator is unbiased. Hence, we only need to compute the asymptotic variances of the oracle estimators under different switchback designs to compare their MSEs. 

The rest of the proof is divided into two parts. In Part I, we calculate the asymptotic variance of the ``oracle" version of DRL estimator. In Part II, we focus on the calculation of the reminder term.  

\textit{\textbf{Part I:}} Consider the sequence of temporal difference (TD) errors $\{\varepsilon_t^a\}$  where each $\varepsilon_t^a=R_t+V_{t+1}^a(S_{t+1})-V_t^a(S_t)$. We decompose each TD error into the sum of $e_t$ and $\epsilon_t^a=r_t(A_t,S_t)+V_{t+1}^a(S_{t+1})-V_t^a(S_t)$, which yields two residual sequences. The first sequence, $\{e_t\}_t$, is correlated over time. Under the Markov assumption and the conditional mean independence assumption (CMIA), the second sequence can be shown to form a martingale difference sequence with respect to the filtration $\langle\sigma(\mathcal{F}_t):t\ge 1\rangle$ defined in the proof of Theorem \ref{thm::main}-OLS; see e.g., the proof of Theorem 1 in \citet{shi2022statistical}. Additionally, the two error processes are mutually independent. 
This yields
\begin{eqnarray}\label{eqn:asymvariance}
\begin{split}
	n\Var(\textrm{ATE}_{\textrm{DRL}}^{(m)*})=\frac{1}{T^2}\Var\Big[V_1^1(S_1)-V_1^0(S_1)+\sum_{t=1}^T \omega_t^{1,m}(A_t,S_t) \varepsilon_t^1-\sum_{t=1}^T \omega_t^{0,m}(A_t,S_t) \varepsilon_t^0\Big]\\
	=\frac{1}{T^2}\Var\Big[V_1^1(S_1)-V_1^0(S_1)\Big]+\frac{1}{T^2}\sum_{t=1}^T\Var\Big[\sum_{a=0}^1  (-1)^{a+1}\omega_t^{a,m}(A_t,S_t)\epsilon_t^a\Big]\\+\frac{1}{T^2}\Var\Big[\sum_{a=0}^1 \sum_{t=1}^T (-1)^{a+1}\omega_t^{a,m}(A_t,S_t)e_t\Big].
\end{split}
\end{eqnarray}
Notice that the first term on the second line is design-independent. 

Since
\begin{eqnarray*}
    \omega_t^{a,m}(A_t,S_t)=\frac{p_t^a(A_t,S_t)}{p_t^m(A_t,S_t)}=\frac{p_t^a(S_t)\mathbb{I}(A_t=a)}{p_t^m(A_t,S_t)}=\frac{p_t^a(S_t)\mathbb{I}(A_t=a)}{p_t^m(S_t|A_t)p_t^m(A_t)}=\frac{p_t^a(S_t)\mathbb{I}(A_t=a)}{p_t^m(S_t|a)p_t^m(a)},
\end{eqnarray*}
where $p_t^m(a)$ denotes the probability that $A_t=a$ under the $m$-switchback design, which is equal to $1/2$ by construction. 
With some calculations, the second term on the second line can be shown to equal
\begin{eqnarray}\label{eqn:error1_densti_var}
\begin{aligned}
    \frac{1}{T^2}\sum_{a=0}^1\sum_{t=1}^T\Var\Big[\omega_t^{a,m}(A_t,S_t)\epsilon_t^a\Big]&=\frac{4}{T^2}\sum_{a=0}^1\sum_{t=1}^T \Mean \frac{[p_t^a(S_t)]^2\mathbb{I}(A_t=a)\sigma^2_{\epsilon,t}(a,S_t)}{[p_t^m(S_t|a)]^2}\\
&= \frac{2}{T^2}\sum_{a=0}^1\sum_{t=1}^T \sum_{s \in \mathcal{S}}\frac{[p_t^a(s)]^2 \sigma^2_{\epsilon,t}(a,s)}{p_t^m(s|a)},
\end{aligned}
\end{eqnarray}
where $\sigma^2_{\epsilon,t}(a,s)=\Var(\epsilon_t^a|A_t=a|S_t=s)$. 

We note that
\begin{itemize}
    \item Similar to the proof of model-based methods, according to the definition of $\delta$, it can be shown that $p_t^a(s)=p_t^m(s|a)+O(\delta)$ for any $t\ge 2$ and $p_1^a(s)=p_1^m(s|a)$. Meanwhile, by Assumption \ref{asmp:boundtrans}, the transition function is bounded away from zero. This implies that $p_t^m$ is bounded away from zero as well, for any $t\ge 2$, and hence $p_t^a(s)/p_t^m(s|a)=1+O(\delta)$ for any $t\ge 1$. 
    \item Similarly, one can also show that $\sum_{s \in \mathcal{S}} |\sigma_{\epsilon,t}^2(1,s)-\sigma_{\epsilon,t}^2(0,s)| =O(\delta R_{\max}^2)$, and $\max(\sigma_{\epsilon,t}^2(0,s),\sigma_{\epsilon,t}^2(1,s))=O(R_{\max}^2)$ under Assumption \ref{asmp:boundedtd}. 
\end{itemize}
As such, the second line of \eqref{eqn:error1_densti_var} is equivalent to
\begin{equation*}
   \begin{aligned}
        \frac{2}{T^2}\sum_{a=0}^1\sum_{t=1}^T \sum_{s \in \mathcal{S}}\frac{[p_t^a(s)]^2\sigma^2_{\epsilon,t}(a,s)}{p_t^m(s|a)}&=\frac{2}{T^2}\sum_{a=0}^1\sum_{t=1}^T \sum_{s \in \mathcal{S}} p_t^a(s)\sigma^2_{\epsilon,t}(a,s)+O(\delta R_{\max}^2)\\
        &= \frac{2}{T^2}\sum_{a=0}^1\sum_{t=1}^T \sum_{s \in \mathcal{S}}{p_t^a(s)\sigma^2_{\epsilon,t}(0,s)} +O(\delta R_{\max}^2).
   \end{aligned}
\end{equation*}

Consequently, the above expression is asymptotically design-independent, i.e., 
\begin{eqnarray}\label{eqn:asymvar2}
\frac{1}{T^2}\sum_{a=0}^1\sum_{t=1}^T\Var\Big[\omega_t^{a,m}(A_t,S_t)\epsilon_t^a\Big] = \frac{2}{T^2}\sum_{t=1}^T [\Mean^0 \sigma^2_{\epsilon,t}(0,S_t)+\Mean^1 \sigma^2_{\epsilon,t}(0,S_t)]+O(\delta R_{\max}^2).
\end{eqnarray}

Based on the above discussion, it suffices to compare the third line of \eqref{eqn:asymvariance} to compare different switchback designs. Similar to \eqref{eqn:asymvar2}, we can show that the third line is asymptotically equivalent to
\begin{eqnarray*}
\frac{4}{T^2}\sum_{a=0}^1\Var\Big[\sum_{t=1}^T\mathbb{I}(A_t=a)e_t\Big] +O(\delta R_{\max}^2).
\end{eqnarray*}
 Using similar arguments in the proof of Theorem \ref{thm::main}-OLS, we obtain that the lower bound of difference in the variance between the alternating-day design and the $m$-switchback design is given by
\begin{equation*}
\frac{16}{nT^2} \sum_{\substack{k_2-k_1=1,3,5, \ldots\\ 0\le k_1<k_2< T/m }} \sum_{l_1=1}^{m} \sum_{l_2=1}^{m}  \sigma_{e}(l_1+k_1m,l_2+k_2m) -  \frac{c \delta R_{\max}^2}{n} -\textrm{reminder term}, 
\end{equation*}
for some constant $c>0$.

\textbf{Part II:}
It suffices to upper bound the absolute value of the reminder term, i.e., the difference between 
$	\textrm{ATE}_{\textrm{DRL}}^{(m)}$ and $	\textrm{ATE}_{\textrm{DRL}}^{(m)*}$. With some calculations, the reminder term can be shown to equal the sum of the following three terms:
\begin{eqnarray*}
J_1:=\frac{1}{nT} \sum_{k=1}^K \sum_{i\in \mathcal{D}_k} [\psi(\{S_{i,t},A_{i,t},R_{i,t}\};\{V_{t}^a\}_{t,a};\{\widehat{\omega}_{t,-k}^{a,m}\}_{t,a})-\psi(\{S_{i,t},A_{i,t},R_{i,t}\};\{V_t^a\}_{t,a};\{\omega_t^{a,m}\}_{t,a})].\\
	J_2:=\frac{1}{nT} \sum_{k=1}^K \sum_{i\in \mathcal{D}_k}  [\psi(\{S_{i,t},A_{i,t},R_{i,t}\};\{\widehat{V}_{t,-k}^{a,m}\}_{t,a};\{\omega_t^{a,m}\}_{t,a})-\psi(\{S_{i,t},A_{i,t},R_{i,t}\};\{V_t^a\}_{t,a};\{\omega_t^{a,m}\}_{t,a})].\\
J_3:=\frac{2}{nT}\sum_{k=1}^K \sum_{i\in \mathcal{D}_k}\sum_{t=1}^T\sum_{a=0}^1  (-1)^{a+1} \mathbb{I}(A_{i,t}=a)[\widehat{\omega}^{a,m}_{t,-k}(S_{i,t})-\omega^{a,m}_t(S_{i,t})]\\ \nonumber
	\times [\widehat{V}_{t+1,-k}^{a,m}(S_{i,t+1})-\widehat{V}_{t,-k}^{a,m}(S_{i,t})-V_{t+1}^a(S_{i,t+1})+V_t^a(S_{i,t})].
\end{eqnarray*}
We analyze each of these terms below. 

\textbf{Analysis of $J_1$}. With some calculations, we can represent $J_1$ by 
\begin{equation*}
	\begin{aligned}
		J_1&=\frac{2}{nT}\sum_{k=1}^K \sum_{i \in \mathcal{D}_k} \sum_{t=1}^T\sum_{a=0}^1  (-1)^{a+1} \mathbb{I}(A_{i,t}=a) \left[
		\widehat{\omega}_{t,-k}^{a,m} (S_{i,t})-\omega_t^{a,m}(S_{i,t}) \right] \left[
		R_{i,t}+V_{t+1}^a(S_{i,t+1})-V_{t}^a(S_{i,t})
		\right]  \\
		&=	\frac{2}{nT}\sum_{k=1}^K  \sum_{i \in \mathcal{D}_k} \sum_{t=1}^T\sum_{a=0}^1  (-1)^{a+1} \mathbb{I}(A_{i,t}=a) \left[
		\widehat{\omega}_{t,-k}^{a,m} (S_{i,t})-\omega_t^{a,m}(S_{i,t}) \right] \varepsilon_{i,t}^a.
	\end{aligned}
\end{equation*}
It follows from the Cauchy-Schwarz inequality that
\begin{equation*}
	\begin{split}
	& \Mean J_1^2=	 \Mean \left\lbrace
		\frac{2}{nT} \sum_{k=1}^K \sum_{i \in \mathcal{D}_k} \sum_{t=1}^T\sum_{a=0}^1  (-1)^{a+1} \mathbb{I}(A_{i,t}=a) \left[
		\widehat{\omega}_{t,-k}^{a,m} (S_{i,t})-\omega_t^{a,m}(S_{i,t}) \right] \varepsilon_{i,t}^a 
		 \right\rbrace^2 \\
		&
		\leq 
		\frac{4K}{n^2T^2} \sum_{k=1}^K \Mean  \left\lbrace  \sum_{i \in \mathcal{D}_k}
		\sum_{t=1}^T\sum_{a=0}^1   \left[
		\widehat{\omega}_{t,-k}^{a,m} (S_{i,t})-\omega_t^{a,m}(S_{i,t}) \right] \varepsilon_{i,t}^a 
		\right\rbrace^2 \\
		& =	\frac{4K}{n^2T^2} \sum_{k=1}^K |\mathcal{D}_k| \Var\left\lbrace
			\sum_{t=1}^T\sum_{a=0}^1   \left[
		\widehat{\omega}_{t,-k}^{a,m} (S_{t})-\omega_t^{a,m}(S_{t}) \right] \varepsilon_{t}^a 
		 \right\rbrace \\
         &+ \frac{4K}{n^2T^2} \sum_{k=1}^K \left\lbrace
		 \Mean \left[ 
		 \sum_{i \in \mathcal{D}_k}
		 \sum_{t=1}^T\sum_{a=0}^1   \left[
		 \widehat{\omega}_{t,-k}^{a,m} (S_{i,t})-\omega_t^{a,m}(S_{i,t}) \right] \varepsilon_{i,t}^a 
		 \right]  \right\rbrace^2.
	\end{split}
\end{equation*}
Observe that the conditional mean of the temporal difference error is zero, given any state-action pair. Using Cauchy-Schwarz inequality again, we have 
\begin{equation*}
	\begin{aligned}
		\Mean J_1^2 &\le \frac{4K}{n^2T^2} \sum_{k=1}^K |\mathcal{D}_k| \Var\left\lbrace
		\sum_{t=1}^T\sum_{a=0}^1   \left[
		\widehat{\omega}_{t,-k}^{a,m} (S_{t})-\omega_t^{a,m}(S_{t}) \right] \varepsilon_{t}^a 
		\right\rbrace  \\
        &\le \frac{4K}{n^2T} \sum_{k=1}^K |\mathcal{D}_k|  \sum_{t=1}^T  \Mean \left\lbrace \sum_{a=0}^1   \left[
		\widehat{\omega}_{t,-k}^{a,m} (S_{t})-\omega_t^{a,m}(S_{t}) \right] \varepsilon_{t}^a  \right\rbrace^2 \\
		&\leq \frac{8K}{n^2T} \sum_{k=1}^K |\mathcal{D}_k|  \sum_{t=1}^T  \sum_{a=0}^1  \Mean \left\lbrace  \left[
		\widehat{\omega}_{t,-k}^{a,m} (S_{t})-\omega_t^{a,m}(S_{t}) \right] \varepsilon_{t}^a  \right\rbrace^2.
	\end{aligned}
\end{equation*}
Under Assumption \ref{asmp:boundedtd}, one can show that \( \sup_{ \substack{t,a} }|\varepsilon_t^a|=O( R_{\max})\). It follows that $\Mean (J_1^2)=O( n^{-1} R_{\max}^2  \textrm{err}_\omega^2)$. 

\textbf{Analysis of $J_2$}. Similar to the analysis of $J_1$, we can rewrite $J_2$ as 
\begin{equation*} 
\begin{split}
		& \frac{1}{nT}\sum_{k=1}^K \sum_{i \in \mathcal{D}_k}
	\left[ 
	\widehat{V}_{1,-k}^{1,m}(S_{i,1}) -V_1^1(S_{i,1})+V_{1}^0(S_{i,1})- \widehat{V}_{1,-k}^{0,m}(S_{i,1})
	\right] \\
	&+
	\frac{2}{nT}\sum_{k=1}^K \sum_{i \in \mathcal{D}_k} \sum_{t=1}^T\sum_{a=0}^1  (-1)^{a+1} \mathbb{I}(A_{i,t}=a)
	\omega_t^{a,m}(S_{i,t}) \left[
	\widehat{V}_{t+1,-k}^{a,m}(S_{i,t+1})-V_{t+1}^a(S_{i,t+1}) +V_{t}^a(S_{i,t})-\widehat{V}_{t,-k}^{a,m}(S_{i,t}) 
	\right] .
\end{split}
\end{equation*}
Using Cauchy-Schwarz inequality, its second moment can be similarly bounded by
\begin{equation}\label{eqn:diff_term2_1}
	\begin{split}
		& \frac{K}{n^2T^2}\sum_{k=1}^K  \Var \Biggl\{
		 \sum_{i \in \mathcal{D}_k}
	\Big[
		\left[ 
	\widehat{V}_{1,-k}^{1,m}(S_{i,1}) -V_1^1(S_{i,1})+V_{1}^0(S_{i,1})- \widehat{V}_{1,-k}^{0,m}(S_{i,1})
	\right]  \\
	&+ \sum_{t=1}^T\sum_{a=0}^1  2(-1)^{a+1} \mathbb{I}(A_{i,t}=a)
	\omega_t^{a,m}(S_{i,t}) \left[
	\widehat{V}_{t+1,-k}^{a,m}(S_{i,t+1})-V_{t+1}^a(S_{i,t+1}) +V_{t}^a(S_{i,t})-\widehat{V}_{t,-k}^{a,m}(S_{i,t}) 
	\right] 
	\Big]
		\Biggr\} \\
		& +\frac{K}{n^2T^2}\sum_{k=1}^K \Biggl\{
		\Mean \Big[
		 \sum_{i \in \mathcal{D}_k}
		\Big[
		\left[ 
		\widehat{V}_{1,-k}^{1,m}(S_{i,1}) -V_1^1(S_{i,1})+V_{1}^0(S_{i,1})- \widehat{V}_{1,-k}^{0,m}(S_{i,1})
		\right]  \\
		&+ \sum_{t=1}^T\sum_{a=0}^1  2(-1)^{a+1} \mathbb{I}(A_{i,t}=a)
		\omega_t^{a,m}(S_{i,t}) \left[
		\widehat{V}_{t+1,-k}^{a,m}(S_{i,t+1})-V_{t+1}^a(S_{i,t+1}) +V_{t}^a(S_{i,t})-\widehat{V}_{t,-k}^{a,m}(S_{i,t}) 
		\right] 
		\Big]
		\Biggr\}^2.
	\end{split}
\end{equation}
By the double robustness property, under correct specification of $w_t^{a,m}$, the squared bias term in \eqref{eqn:diff_term2_1} is equal to zero. Thus, it suffices to upper bound the variance term. With some calculations, we have
\begin{equation*}
	\begin{split}
	&	\frac{K}{n^2T^2} \sum_{k=1}^K |\mathcal{D}_k| \Var \Biggl\{
	\left[ 
	\widehat{V}_{1,-k}^{1,m}(S_{1}) -V_1^1(S_{1})+V_{1}^0(S_{1})- \widehat{V}_{1,-k}^{0,m}(S_{1})
	\right]  \\
	&+ \sum_{t=1}^T\sum_{a=0}^1  2(-1)^{a+1} \mathbb{I}(A_{t}=a)
	\omega_t^{a,m}(S_{t}) \left[
	\widehat{V}_{t+1,-k}^{a,m}(S_{t+1})-V_{t+1}^a(S_{t+1}) +V_{t}^a(S_{t})-\widehat{V}_{t,-k}^{a,m}(S_{t}) 
	\right] 
	\Biggr\}\\
	& \leq \frac{2K}{n^2T^2} \sum_{k=1}^K |\mathcal{D}_k| \Mean \left\lbrace  	\left[ 
	\widehat{V}_{1,-k}^{1,m}(S_{1}) -V_1^1(S_{1})+V_{1}^0(S_{1})- \widehat{V}_{1,-k}^{0,m}(S_{1})
	\right] \right\rbrace^2 \\
	&+ \frac{16K}{n^2T^2} \sum_{k=1}^K |\mathcal{D}_k| \sum_{a=0}^1    \Mean  \left\lbrace 
	\sum_{t=1}^T
	\omega_t^{a,m}(S_{t}) \left[
	\widehat{V}_{t+1,-k}^{a,m}(S_{t+1})-V_{t+1}^a(S_{t+1}) +V_{t}^a(S_{t})-\widehat{V}_{t,-k}^{a,m}(S_{t}) 
	\right] 
	\right\rbrace^2\\
	& \leq \frac{4K}{n^2T^2} \sum_{k=1}^K |\mathcal{D}_k| \sum_{a=0}^1 \Mean   	\left[ 
	\widehat{V}_{a,-k}^{a,m}(S_{1}) -V_1^a(S_{1})\right]^2 + \frac{32K}{n^2T} \sum_{k=1}^K |\mathcal{D}_k| \sum_{a=0}^1  \sum_{t=1}^T C_{\omega}^2  \Mean   
	 \left[
	\widehat{V}_{t,-k}^{a,m}(S_{t})-V_{t}^a(S_{t}) 
	\right]^2,
	\end{split}
\end{equation*}
where the two inequalities follow from the Cauchy-Schwarz inequality and $C_{\omega}$ denotes the upper bound of $w_t^{a,m}$, which is finite under Assumption 
\ref{asmp:boundtrans}. 

As such, by the definition of $\textrm{err}_v$, we have $\Mean(J_2^2)=O(n^{-1} R_{\max}^2  \textrm{err}_v^2)$.

\textbf{Analysis of $J_3$}. Similarly, using Cauchy-Schwarz inequality, $\mathbb{E} (J_3^2)$ can be upper bounded by
\begin{eqnarray}\label{eqnJ3}
\begin{split}
		& \frac{4K}{n^2T^2}\sum_{k=1}^K \Mean \left\lbrace
		\sum_{i\in \mathcal{D}_k}\sum_{t=1}^T\sum_{a=0}^1  [\widehat{\omega}^{a,m}_{t,-k}(S_{i,t})-\omega^{a,m}_t(S_{i,t})] 
 [\widehat{V}_{t+1,-k}^{a,m}(S_{i,t+1})-\widehat{V}_{t,-k}^{a,m}(S_{i,t})-V_{t+1}^a(S_{i,t+1})+V_t^a(S_{i,t})] \right\rbrace^2 \\
 & = \frac{4K}{n^2T^2}\sum_{k=1}^K |\mathcal{D}_k|  \Var \left\lbrace 
 \sum_{t=1}^T\sum_{a=0}^1  [\widehat{\omega}^{a,m}_{t,-k}(S_{t})-\omega^{a,m}_t(S_{t})] 
 [\widehat{V}_{t+1,-k}^{a,m}(S_{t+1})-\widehat{V}_{t,-k}^{a,m}(S_{t})-V_{t+1}^a(S_{t+1})+V_t^a(S_{t})] 
 \right\rbrace \\
 &+ \frac{4K}{n^2T^2}\sum_{k=1}^K \left\lbrace 
 |\mathcal{D}_k|^2 \Mean \left[
  \sum_{t=1}^T\sum_{a=0}^1  [\widehat{\omega}^{a,m}_{t,-k}(S_{t})-\omega^{a,m}_t(S_{t})] 
 [\widehat{V}_{t+1,-k}^{a,m}(S_{t+1})-\widehat{V}_{t,-k}^{a,m}(S_{t})-V_{t+1}^a(S_{t+1})+V_t^a(S_{t})] 
  \right] 
 \right\rbrace^2.
\end{split}
\end{eqnarray}
Here, the squared bias term on the third line is the dominating factor. Applying Cauchy-Schwarz inequality again, it can be upper bounded by $O(R_{\max}^2 \textrm{err}_\omega^2  \textrm{err}_v^2)$. Meanwhile, using similar arguments to the analysis of $J_2$, we can show that the second line can be upper bounded by $O(n^{-1} R_{\max}^2 \textrm{err}_v^2)$. Thus, 
$\Mean[J_3^2]=O(R_{\max}^2\textrm{err}_\omega^2\textrm{err}_v^2 +n^{-1} R_{\max}^2 \textrm{err}_v^2)$. 

To summarize, we have show that $\Mean J_1^2+\Mean J_2^2 +\Mean J_3^2$ is of the order
\begin{equation*}
\begin{split}
		O\Big[\max\Big(\frac{ R_{\max}^2 \textrm{err}_\omega^2  }{{n}}, \frac{ R_{\max}^2 \textrm{err}_v^2  }{{n}}, R_{\max}^2 \textrm{err}_\omega^2  \textrm{err}_v^2 \Big)
	\Big].
\end{split}
\end{equation*}
Using Cauchy-Schwarz inequality, we obtain that $\Mean (J_1+J_2+J_3)^2$ is of the same order of magnitude. Notice that
\begin{eqnarray*}
    \textrm{MSE}(\textrm{ATE}_{\textrm{DRL}}^{(m)})=\textrm{MSE}(\textrm{ATE}_{\textrm{DRL}}^{(m)*})+ \Mean (J_1+J_2+J_3)^2+2\Mean (\textrm{ATE}_{\textrm{DRL}}^{(m)*} (J_1+J_2+J_3)),
\end{eqnarray*}
where the absolute value of the last term can be upper bounded by 
\begin{eqnarray*}
    \sqrt{\Mean (J_1+J_2+J_3)^2 \Mean (\textrm{ATE}_{\textrm{DRL}}^{(m)*})^2}=O\Big[\max\Big(\frac{ R_{\max}^2 \textrm{err}_\omega  }{{n}}, \frac{ R_{\max}^2 \textrm{err}_v  }{{n}}, \frac{R_{\max}^2 \textrm{err}_\omega  \textrm{err}_v}{\sqrt{n}} \Big)
	\Big]
\end{eqnarray*}
As the error terms $\textrm{err}_v$ and $\textrm{err}_w$ are $o(n^{-1/4})$ and hence bounded, combining these results together yields the desired assertion.  
\end{proof}

\textit{\textbf{Proof of Theorem \ref{thm::main}-LSTD}}. 
\begin{proof}
The proof of Theorem \ref{thm::main}-LSTD relies on demonstrating that the LSTD estimator can be expressed as a DRL estimator (whose detailed form is given in Equation (\ref{eqn:LSTDDRL}) below), allowing us to apply the proof techniques from Theorem \ref{thm::main}-DRL to establish Theorem \ref{thm::main}-LSTD. We break the rest of the proof into two parts. In Part I, we establish the equivalence between the two estimators. In Part II, our aim is to calculate the order of the difference between the RL estimator $\textrm{ATE}^{(m)}_{\textrm{DRL}}$ and its ``oracle'' version $\textrm{ATE}^{(m)*}_{\textrm{DRL}}$, and then analyze its order of magnitude.

\textit{\textbf{Part I}}. Recall that the value function estimator obtained via LSTD is given by $\varphi_t^\top(s) \widehat{\theta}_{t,a,m}$. The DRL estimator, whose equivalence to LSTD we aim to establish, uses this estimated value along with a linearly parameterized $\widehat{\omega}_t^{a,m}(s)$. Specifically, for each $t\ge 1$, we set $\widehat{\omega}_t^{a,m}(s)=\varphi_t^\top(s)\widehat{\alpha}_{t,a,m}$ where the estimators $\{\widehat{\alpha}_{t,a,m}\}_t$ are computed in a forward manner. By definition
\begin{eqnarray}\label{eqn:omega1}
\omega_{1}^{a,m}(s)=\frac{p_1(s)}{p_1^m(s|a)}=\frac{p_1(s)\mathbb{P}(A_1=a)}{b_1(a|s)p_1(s)}=\frac{1}{2b_1(a|s)},
\end{eqnarray}
where $b_1$ denotes the propensity score whose oracle value is a constant function equal to 0.5 by design. To estimate $\alpha_{1,a,m}$, notice that according to \eqref{eqn:omega1}, $\Mean [\omega_{1}^{a,m}(S_1)\mathbb{I}(A_1=a)h(S_1)-h(S_1)]=0$ for any function $h$. This motivates us to compute $\widehat{\alpha}_{1,a,m}$ by solving the following estimating equation,
\begin{eqnarray*}
\frac{1}{n}\sum_{i=1}^n \Big[\varphi_1^\top(S_{i,1}) \widehat{\alpha}_{1,a,m}\mathbb{I}(A_{i,1}=a)\varphi_1(S_{i,1}) -\varphi_1(S_{i,1})\Big] =0,
\end{eqnarray*}
which yields
\begin{eqnarray}\label{eqn:alpha1}
\widehat{\alpha}_{1,a,m}=\Big[\underbrace{\frac{1}{n}\sum_{i=1}^n \varphi_1(S_{i,1})\varphi_1^\top(S_{i,1})\mathbb{I}(A_{i,1}=a)}_{\widehat{\Sigma}_{1}^a}\Big]^{-1}\Big[\frac{1}{n}\sum_{i=1}^n \varphi_1(S_{i,1})\Big].
\end{eqnarray}
Next, consider a given $t\ge 2$. Using the Bayes rule, it is straightforward to show that
\begin{eqnarray*}
\omega_t^{a,m}(s)=\frac{p_t^a(s)}{p_t^m(s|a)}=\frac{p_t^a(s)\mathbb{P}(A_t=a)}{p_t^m(a,s)}=\frac{p_t^a(s)}{2p_t^m(a,s)},
\end{eqnarray*}
where $p_t^m(a,s)$ denote the join distribution function of $(A_t,S_t)$. Consequently, using the Bayes rule again, we obtain that
\begin{eqnarray*}
\Mean [\omega_{t-1}^{a,m}(S_{t-1})|A_{t-1}=a,S_{t}]=\frac{p_t^a(S_t)}{2p_{t}^m(a,S_t)},
\end{eqnarray*}
for any $a$ and $S_t$. This motivates us to estimate $\alpha_{t,a,m}$ by solving the following estimating equation,
\begin{eqnarray*}
\frac{1}{n}\sum_{i=1}^n\Big[\varphi_{t-1}^\top(S_{i,t-1})\widehat{\alpha}_{t-1,a,m}\mathbb{I}(A_{i,t-1}=a)\varphi_t(S_{i,t})
\Big]=\frac{1}{n}\sum_{i=1}^n\Big[\varphi_{t}^\top(S_{i,t})\widehat{\alpha}_{t,a,m}\mathbb{I}(A_{i,t}=a)\varphi_t(S_{i,t})
\Big],
\end{eqnarray*}
which yields that
\begin{eqnarray}\label{eqn:alphat}
\begin{split}        \widehat{\alpha}_{t,a,m}=\Big[\underbrace{\frac{1}{n}\sum_{i=1}^n \varphi_{t}(S_{i,t})\varphi_{t}^\top(S_{i,t})\mathbb{I}(A_{i,t}=a)}_{\widehat{\Sigma}_t^a} \Big]^{-1}\Big[\underbrace{\frac{1}{n}\sum_{i=1}^n \varphi_{t}(S_{i,t})\varphi_{t-1}^\top(S_{i,t-1})\mathbb{I}(A_{i,t-1}=a)}_{\widehat{\Sigma}_{t,t-1}^a} \Big]\widehat{\alpha}_{t-1,a,m},
\end{split}
\end{eqnarray}
and hence 
\begin{eqnarray*}
\widehat{\alpha}_{t,a,m}=(\widehat{\Sigma}_t^a)^{-1}\widehat{\Sigma}_{t,t-1}^a(\widehat{\Sigma}_{t-1}^a)^{-1}\widehat{\Sigma}_{t-1,t-2}^a \ldots (\widehat{\Sigma}_1^a)^{-1} \Big[\frac{1}{n}\sum_{i=1}^n \varphi_1(S_{i,1})\Big].
\end{eqnarray*}
This leads to the following DRL estimator
\begin{eqnarray}\label{eqn:LSTDDRL}
\textrm{ATE}^{(m)}_{\textrm{DRL}}=\frac{1}{nT}\sum_{i=1}^n \psi(\{S_{i,t},A_{i,t},R_{i,t}\};\{\widehat{V}_t^{a,m}\}_{t,a};\{\widehat{\omega}_t^{a,m}\}_{t,a}).
\end{eqnarray}
We aim to show that this DRL estimator equals the LSTD estimator based on $\{\widehat{V}_t^{a,m}\}_{t,a}$. According to the definition of the estimating function $\psi$, the DRL estimator can be naturally decomposed into two parts where the first part coincides the LSTD estimator and the second part corresponds to the augmentation term,
\begin{eqnarray*}
\frac{2}{nT}\sum_{i=1}^n\sum_{t=1}^T\sum_{a=0}^1 (-1)^{a+1} \mathbb{I}(A_{i,t}=a)\widehat{\omega}_t^{a,m}(S_{i,t})[R_{i,t}+\widehat{V}_{t+1}^{a,m}(S_{i,t+1})-\widehat{V}_t^{a,m}(S_{i,t})]. 
\end{eqnarray*}
As such, it suffices to show the augmentation term equals zero, or
\begin{eqnarray}\label{eqn:aug}
\frac{1}{n}\sum_{i=1}^n\mathbb{I}(A_{i,t}=a)\widehat{\omega}_t^{a,m}(S_{i,t})[R_{i,t}+\widehat{V}_{t+1}^{a,m}(S_{i,t+1})-\widehat{V}_t^{a,m}(S_{i,t})]=0,
\end{eqnarray}
for each $a$ and $t$. By \eqref{eqn:alphat}, the left-hand-side (LHS) of \eqref{eqn:aug} equals
\begin{eqnarray*}
(\widehat{\alpha}_{t-1,a,m})^\top (\widehat{\Sigma}_{t,t-1}^{a})^{\top} \Big\{\frac{1}{n}\sum_{i=1}^n
( \widehat{\Sigma}_t^{a})^{-1}\mathbb{I}(A_{i,t}=a)\varphi_t(S_{i,t})[R_{i,t}+\widehat{V}_{t+1}^{a,m}(S_{i,t+1})-\widehat{V}_t^{a,m}(S_{i,t})]\Big\}.
\end{eqnarray*}
According to the LSTD algorithm, it is immediate to see that the statistic inside the curly brackets equals zero. This formally verifies \eqref{eqn:aug}. The proof of Part I is hence completed. 

\textit{\textbf{Part II}}. In Part II, we aim to show \eqref{eqn:LSTDDRL} is asymptotically equivalent to its oracle estimator 
\begin{eqnarray*}
\textrm{ATE}^{(m)*}_{\textrm{DRL}}=\frac{1}{nT}\sum_{i=1}^n \psi(\{S_{i,t},A_{i,t},R_{i,t}\};\{V_t^a\}_{t,a};\{\omega_t^{a,m}\}_{t,a}).
\end{eqnarray*}
The proof is very similar to that of Theorem \ref{thm::main}-DRL; for brevity, we provide only a sketch. The main difference lies in the absence of cross-splitting for this DRL estimator.

Similarly, the difference between \(\textrm{ATE}^{(m)}_{\textrm{DRL}}\) and \(\textrm{ATE}^{(m)*}_{\textrm{DRL}}\) can be expressed as the sum of the following three terms:
\begin{eqnarray*}
J_4:=\frac{1}{nT}\sum_{i=1}^n [\psi(\{S_{i,t},A_{i,t},R_{i,t}\};\{V_t^a\}_{t,a};\{\widehat{\omega}_t^{a,m}\}_{t,a})-\psi(\{S_{i,t},A_{i,t},R_{i,t}\};\{V_t^a\}_{t,a};\{\omega_t^{a,m}\}_{t,a})].\\
J_5:=\frac{1}{nT}\sum_{i=1}^n [\psi(\{S_{i,t},A_{i,t},R_{i,t}\};\{\widehat{V}_t^{a,m}\}_{t,a};\{\omega_t^{a,m}\}_{t,a})-\psi(\{S_{i,t},A_{i,t},R_{i,t}\};\{V_t^a\}_{t,a};\{\omega_t^{a,m}\}_{t,a})].\\
\label{eqn:diffthirdterm}
J_6:=\frac{2}{nT}\sum_{i=1}^n\sum_{t=1}^T\sum_{a=0}^1  (-1)^{a+1} \mathbb{I}(A_{i,t}=a)[\widehat{\omega}^{a,m}_t(S_{i,t})-\omega^{a,m}_t(S_{i,t})]\\ \nonumber
\times [\widehat{V}_{t+1}^{a,m}(S_{i,t+1})-\widehat{V}_t^{a,m}(S_{i,t})-V_{t+1}^a(S_{i,t+1})+V_t^a(S_{i,t})].
\end{eqnarray*}
\textbf{Analysis of} $J_4$. We further decompose $J_4$ into the sum of 
\begin{equation}\label{eqn:difffourth}
\frac{1}{nT}\sum_{i=1}^n [\psi(\{S_{i,t},A_{i,t},R_{i,t}\};\{V_t^a\}_{t,a};\{\omega_t^{a,m*}\}_{t,a})-\psi(\{S_{i,t},A_{i,t},R_{i,t}\};\{V_t^a\}_{t,a};\{\omega_t^{a,m}\}_{t,a})],
\end{equation}
and
\begin{equation}\label{eqn:difffifth}
\frac{1}{nT}\sum_{i=1}^n [\psi(\{S_{i,t},A_{i,t},R_{i,t}\};\{V_t^a\}_{t,a};\{\widehat{\omega}_t^{a,m}\}_{t,a})-\psi(\{S_{i,t},A_{i,t},R_{i,t}\};\{V_t^a\}_{t,a};\{\omega_t^{a,m*}\}_{t,a})],
\end{equation}
where $\omega_t^{a,m*}(s)=\varphi_t^\top(s) \alpha_{t,a,m}^*$, and $\alpha_{t,a,m}^*:=\min_{\alpha_{t,a,m} \in \mathbb{R}^L }\Mean (
\omega_t^{a,m}(S_t) -\varphi_t^\top(S_t) \alpha_{t,a,m}
)^2$ .  Then \eqref{eqn:difffourth} is equal to
\begin{equation*}
	J_{4a}:=\frac{2}{nT} \sum_{i=1}^n \sum_{t=1}^T\sum_{a=0}^1  (-1)^{a+1} \mathbb{I}(A_{i,t}=a) \left[
	{\omega}_{t}^{a,m*} (S_{i,t})-\omega_t^{a,m}(S_{i,t}) \right] \varepsilon_{i,t}^a ,
\end{equation*}
whereas \eqref{eqn:difffifth} is equal to
\begin{equation*}
	\begin{aligned}
		J_{4b}&:=\frac{2}{nT} \sum_{i=1}^n \sum_{t=1}^T\sum_{a=0}^1  (-1)^{a+1} \mathbb{I}(A_{i,t}=a) \left[
		\widehat{\omega}_{t}^{a,m} (S_{i,t})-\omega_t^{a,m*}(S_{i,t}) \right] \varepsilon_{i,t}^a.
	\end{aligned}
\end{equation*}
Note that $\Mean(J_{4a})=0$ because the conditional mean of the temporal difference error is equal to zero, given any state-action pair.  Thus, we obtain 
\begin{equation*}
	\begin{split}
		\mathbb{E}(J_{4a})^2&\leq \frac{4}{nT^2}\Var \left\lbrace \sum_{t=1}^T
		\sum_{a=0}^1 \left[
		{\omega}_{t}^{a,m*} (S_{t})-\omega_t^{a,m}(S_{t}) \right] \varepsilon_{t}^a
		 \right\rbrace 
		  \leq O\Big(\frac{8R_{\max}^2}{nT}\Big) \sum_{a=0}^1\sum_{t=1}^T \Mean  
		  \left[
		 {\omega}_{t}^{a,m*} (S_{t})-\omega_t^{a,m}(S_{t}) \right]^2,
	\end{split}
\end{equation*}
where the second inequality follows from the Cauchy-Schwarz inequality and that $\max_{t,a}|\varepsilon_t^a|=O(R_{\max})$. 
This leads to $ \Mean(J_{4a}^2)=
O(n^{-1}R_{\max}^2 \textrm{app\_err}_{\omega}^2)
$, 
where we denote $\textrm{app\_err}_{\omega}^2$ by $\max_{a,m,t} \Mean |\omega_t^{a,m}(S_t)-\varphi_t^\top(S_t)\alpha_{t,a,m}^*|^2 $. 

It remains to bound $\Mean(J_{4b}^2)$. By definition, we can rewrite $J_{4b}$ as follows: 
\begin{equation*}
	 \frac{2}{T}\sum_{t=1}^T\sum_{a=0}^1 (-1)^{a+1}(\widehat{\alpha}_{t,a,m}-\alpha_{t,a,m}^*)^\top\Big\{\frac{1}{n}\sum_{i=1}^n \varphi_t(S_{i,t})\mathbb{I}(A_{i,t}=a) \varepsilon_{i,t}^a \Big\}.
\end{equation*}
Observe that the conditional mean of the temporal difference error is zero, given any state-action pair. Consequently, the random vector enclosed within the curly brackets have a mean of
zero. Additionally, under Assumption \ref{asmp:boundedtd}, each
summand within the curly brackets is bounded by $O(n^{-1}R_{\max})$. Meanwhile, under Assumption \ref{asmp:sievebasis}-(i), we have $\max_t \|\mathbb{E} [\varphi_t(S_t)\varphi_t^\top(S_t)]\|_2=O(1)$.   Consequently, the variance of each summand within the curly
brackets is bounded by $O(n^{-2} R_{\max}^2)$.  It follows from Bernstein's inequality \citep[see e.g.,][]{van1996weak} that with probability at least $1-O(n^{-\kappa}T^{-\kappa})$ for some sufficiently large $\kappa>0$, all elements in the curly brackets are upper bounded by
$O(n^{-1/2} R_{\max}\sqrt{\log(nT)})$. As such, we have $$	\Mean(J_{4b}^2) = O\Big(\frac{ R_{\max}^2 \log(nT) \textrm{est\_err}_{\omega}^2}{n}\Big), $$ 
where we denote $\textrm{est\_err}_{\omega}^2 ={\max_{a,t,m} \Mean \| \widehat{\alpha}_{t,a,m}-\alpha_{t,a,m}^*\|_2^2} $. 

Finally, note that the definition of $\alpha_{t,a,m}^*$ yields that the approximation error $\omega_t^{a,m}(S_t)-\omega_t^{a,m*}(S_t)$ is uncorrelated with the estimation error $\omega_t^{a,m*}(S_t)-\widehat{\omega}_t^{a,m}(S_t)$. Additionally, the non-singularity assumption in Assumption \ref{asmp:sievebasis}-(ii) yields that the second moment of $\omega_t^{a,m*}(S_t)-\widehat{\omega}_t^{a,m}(S_t)$ is of the same order to that of $\widehat{\alpha}_{t,a,m}-\alpha_{t,a,m}^*$. Consequently, we have
$$\Mean(J_4^2)\le 2\Mean J_{4a}^2+2\Mean J_{4b}^2 =O( \frac{ R_{\max}^2 \log(nT) [\textrm{app\_err}_{\omega}^2 +\textrm{est\_err}_{\omega}^2 ]}{n} )=O(\frac{R_{\max}^2 \log(nT) \textrm{err}_{\omega}^2}{n}).$$

\textbf{Analysis of $J_5$}. Similar to the analysis of $J_4$, it can be shown that $$\Mean(J_5^2)=O\Big(\frac{R_{\max}^2 \log(nT) \textrm{err}_{v}^2}{n}\Big).$$

\textbf{Analysis of $J_6$}. Define $J_{6}^*$ as
\begin{eqnarray*}
\small
\begin{split}
        \frac{2}{T}\sum_{t=1}^T\sum_{a=0}^1 (-1)^{a+1}\mathbb{E} \Big\{\mathbb{I}(A_t=a)[\widehat{\omega}_t^{a,m}(S_t)-\omega_t^{a,m}(S_t)][\widehat{V}_{t+1}^{a,m}(S_{t+1})-\widehat{V}_t^{a,m}(S_t)-V_{t+1}^a(S_{t+1})+V_t^a(S_t)]|\widehat{\omega}_t^{a,m},\widehat{V}_t^{a,m},\widehat{V}_{t+1}^{a,m}\Big\}.
\end{split}
\end{eqnarray*}
Notice that $J_{6}^*$ can be viewed as the expected value of $J_6$ when the estimated IS ratio and value function are fixed.  

Using similar arguments in the analysis of $J_3$, it can be shown that $\Mean (J_{6}^*)^2=O(R_{\max}^2\textrm{err}_w^2\textrm{err}_v^2)$.

It remains to consider $J_6-J_6^*$, which we denote by $\bar{J}_6(\{\widehat{\omega}_t^{a,m}-\omega_t^{a,m}\}_{t,a,m}, \{\widehat{V}_t^{a,m}-V_t^a\}_{t,a,m})$, to highlight its dependence upon the estimated IS ratio and value function as well as their ground truths. This term can be decomposed into $\bar{J}_{6a}+\bar{J}_{6b}+\bar{J}_{6c}+\bar{J}_{6d}$ where
\begin{eqnarray*}
    \bar{J}_{6a}&=&\bar{J}_6(\{\omega_t^{a,m*}-\omega_t^{a,m}\}_{t,a,m}, \{V_t^{a,m*}-V_t^a\}_{t,a,m}),\\
    \bar{J}_{6b}&=&\bar{J}_6(\{\omega_t^{a,m*}-\omega_t^{a,m}\}_{t,a,m}, \{\widehat{V}_t^{a,m}-V_t^{a,m*}\}_{t,a,m}),\\
    \bar{J}_{6c}&=&\bar{J}_6(\{\widehat{\omega}_t^{a,m}-\omega_t^{a,m*}\}_{t,a,m}, \{V_t^{a,m*}-V_t^{a}\}_{t,a,m}),\\
    \bar{J}_{6d}&=&\bar{J}_6(\{\widehat{\omega}_t^{a,m}-\omega_t^{a,m*}\}_{t,a,m}, \{\widehat{V}_t^{a,m}-V_t^{a,m*}\}_{t,a,m}).
\end{eqnarray*}
We note that:
\begin{itemize}
    \item $\bar{J}_{6a}$ can be represented by an average of i.i.d. mean zero random variable. Using similar arguments to those for the second line of \eqref{eqnJ3}, we can show that $\Mean (\bar{J}_{6a}^2)$ is of the order $O(n^{-1}R_{\max}^2 \text{app\_err}_v^2)$. 
    \item Similar to the analysis of $J_{4b}$, we can represent $\bar{J}_{6b}$ as an average $T^{-1}\sum_{t=1}^T \bar{J}_{6b}^{(t)}$ where each $\bar{J}_{6b}^{(t)}$ can be represented by the inner product of an average of i.i.d. mean zero random vector and $(\widehat{\theta}_{t,a,m}-\theta_{t,a,m}^*)$. The bound $\Mean (\bar{J}_{6b}^2)=O(n^{-1}R_{\max}^2\log (nT) \textrm{est\_err}_v^2)$ follows similarly to that of $\Mean (J_{4b}^2)$.
    \item $\bar{J}_{6c}$ can be analyzed very similarly to $\bar{J}_{6b}$, leading to $\Mean \bar{J}_{6c}^2=O(n^{-1}R_{\max}^2 \log(nT)  \textrm{app\_err}_v^2)$. 
    \item Finally, we can express $\bar{J}_{6d}$ as an average $T^{-1}\sum_{t=1}^T \bar{J}_{6d}^{(t)}$ where each $\bar{J}_{6d}^{(t)}$ is given by 
    \begin{eqnarray*}
        (\widehat{\theta}_{t,a,m}-\theta_{t,a,m}^*)^\top \Big(\frac{1}{n}\sum_{i=1}^n \bm{M}_i\Big) (\widehat{\alpha}_{t,a,m}-\alpha_{t,a,m}^*),
    \end{eqnarray*}
    where $\bm{M}_i$s are i.i.d. mean zero matrices. Based on this identity, one can show that $\Mean (\bar{J}_{6d}^2)$ is of the order $O(n^{-1}R_{\max}^2\log (nT) \textrm{est\_err}_v^2)$, using similar arguments to those for the analysis of $\bar{J}_{6b}$.  
\end{itemize}
Combining these results, one can apply arguments similar to those for the analysis of $J_4$ to show that $\Mean (J_6-J_6^*)^2=O(n^{-1} R_{\max}^2 \log(nT) \textrm{err}_v^2)$. Together with $\Mean (J_{6}^*)^2=O(R_{\max}^2\textrm{err}_w^2\textrm{err}_v^2)$ and the Cauchy-Schwarz inequality, we obtain that
\begin{eqnarray*}
    \Mean (J_6)^2=O\Big(\frac{R_{\max}^2 \log(nT)\textrm{err}_v^2}{n}\Big)+O(R_{\max}^2\textrm{err}_v^2\textrm{err}_{\omega}^2).
\end{eqnarray*}
The rest of the proof follows similarly to that of DRL. 
\end{proof}

\end{document}